\documentclass[twoside,11pt, reqno]{article}

\makeatletter
\def\subsubsection{\@startsection{subsubsection}{3}%
\z@{.5\baselineskip\@plus.7\baselineskip}{-.5em}%
{\normalfont\bfseries}}
\makeatother

\usepackage[numbers]{natbib}
\usepackage{amsfonts}
\usepackage{amsmath}
\usepackage{amsthm}
\usepackage{amssymb}
\usepackage{color}
\usepackage{graphicx}
\usepackage{hyperref}
\hypersetup{
    colorlinks=true, 
    linktoc=all,     
    linkcolor=blue,  
}

\usepackage{comment}
\usepackage[top=0.75in, bottom=0.75in, left=0.75in, right=0.75in]{geometry}

\usepackage[utf8]{inputenc} 
\usepackage[T1]{fontenc}    
\usepackage{url}            
\usepackage{booktabs}       
\usepackage{amsfonts}       
\usepackage{nicefrac}       
\usepackage{microtype}      
\usepackage{xcolor}         

\usepackage{amsmath}
\usepackage{amsthm}
\usepackage{amssymb}
\usepackage{graphicx}
\usepackage{comment}
\usepackage{algorithm, algpseudocode}
\usepackage{mathrsfs}  
\usepackage{mathtools}
\usepackage{enumitem}
\usepackage{multirow}
\usepackage{dsfont}
\usepackage{subcaption}

\def\a{\alpha}
\def\cA{\mathcal{A}}

\def\cF{\mathcal{F}}

\def\cK{\mathcal{K}}

\def\cO{\mathcal{O}}

\def\cT{\mathcal{T}}

\def\cV{\mathcal{V}}

\def\sN{{\mathbb{N}}}

\def\sR{{\mathbb R}}

\def\d{{\mathrm{d}}}

\numberwithin{equation}{section}

\theoremstyle{plain}

\newtheorem{theorem}{Theorem}[section]
\newtheorem{lemma}{Lemma}[section]
\newtheorem{prop}{Proposition}[section]

\newtheorem{outstanding_assumption}{Outstanding Assumption}[section]

\theoremstyle{definition}
\newtheorem{definition}[theorem]{Definition}

\theoremstyle{remark}
\newtheorem{remark}[theorem]{Remark}

\newtheorem{example}{Example}[section]

\def\pathspace{{\cV^p([0,T];\mathbb{R}^d)}}
\def\pathspaceL{{\cV^p([0,L];\mathbb{R}^d)}}
\newcommand{\Sig}{\text{Sig} }
\newcommand{\vep}{\varepsilon}

\makeatletter
\providecommand*{\shuffle}{%
  \mathbin{\mathpalette\shuffle@{}}%
}
\newcommand*{\shuffle@}[2]{%
  \sbox0{$#1\vcenter{}$}%
  \kern .15\ht0 
  \rlap{\vrule height .25\ht0 depth 0pt width 2.5\ht0}%
  \raise.1\ht0\hbox to 2.5\ht0{%
    \vrule height 1.75\ht0 depth -.1\ht0 width .17\ht0 %
    \hfill
    \vrule height 1.75\ht0 depth -.1\ht0 width .17\ht0 %
    \hfill
    \vrule height 1.75\ht0 depth -.1\ht0 width .17\ht0 %
  }%
  \kern .15\ht0 
}
\makeatother

\title{Signature Approach for Contextual Bandits with Nonlinear and Path-dependent Rewards}

\author{%
  Xin Guo \thanks{
 University of California, Berkeley,
 Berkeley, CA, 
  \texttt{xinguo@berkeley.edu}} \\
  \and
 Grace He \thanks{
 University of California, Berkeley,
 Berkeley, CA, 
 \texttt{grace\_he@berkeley.edu} }
  \and
  Xinyu Li \thanks{
  University of Oxford, 
  Oxford, UK, 
  \texttt{xinyu.li@maths.ox.ac.uk}} 
}

\begin{document}

\maketitle

\begin{abstract}
We study contextual bandits with nonlinear and path-dependent rewards through a novel signature-transform-based approach. Leveraging the universal nonlinearity property of signatures, we approximate continuous path-dependent reward functionals by linear functionals in the signature space. This representation enables the use of efficient linear contextual bandit methods while preserving expressive sequential structure. Building on this framework, we propose \texttt{DisSigUCB}, a signature-based disjoint upper confidence bound (UCB) algorithm. Under boundedness and non-degeneracy assumptions, we prove a high-probability data-dependent sublinear regret bound of order \(\tilde{\mathcal O}(\sqrt{(d+m)KT})\) where \(d\) is the context dimension and \(m\) is the signature feature dimension. Synthetic experiments and numerical applications on temperature sensor monitoring, sleep-stage classification, and hospital nurse staffing demonstrate that \texttt{DisSigUCB} consistently outperforms classical linear and kernelized contextual bandit baselines in nonlinear and path-dependent settings.
\end{abstract}

\section{Introduction}
\textbf{Contextual bandit problem} is a sequential decision-making problem where the learner repeatedly observes some contextual side information, makes a decision by choosing an action, and receives feedback/reward  only for the chosen action.  More precisely, at each round $t$, the learner observes a context $x_t$ selects an action $a_t$, and receives a reward $r_t(x_t, a_t)$. The objective is to maximize the cumulative reward over a horizon of $T$ rounds. Contextual bandits arise naturally in applications such as recommendation systems \cite{LCLLCB:11}, online advertising \cite{LPP:10}, dynamic pricing \cite{CLP:20}, and healthcare treatment decisions \cite{ACZ:25}, where decisions must be made sequentially using limited feedback.
  
Contextual bandits can be viewed as a simplified reinforcement learning problem where actions affect only the immediate reward rather than the future evolution of the environment. The learner observes rewards only for selected actions and must adapt decisions sequentially, giving rise to the exploration--exploitation tradeoff: exploiting actions that appear optimal while exploring uncertain actions to improve future decisions.

\paragraph{Existing work.} A large proportion of work in contextual bandits studies linear contextual bandits where rewards are linear in the context $\mathbb{E}[r_t(a)|x_{a,t}] = x_{a,t}^\top \beta_a$. Two classical algorithm designs are UCB (\texttt{LinUCB} \citep{LCLLCB:11, CLRS:11}) and Thompson sampling (TS) (\texttt{LinTS} \cite{CL:11, AG:13}). In UCB-style algorithms, the action with the highest confidence bound on the reward estimate is chosen:
\[
A_t \in \arg\max_{a \in [K]} x_{a,t}^\top \hat\beta_a + \gamma \|x_{a,t}\|_{V_{a,t}^{-1}}
\]
where $\hat\beta_a$ is the estimate of $\beta^*_a$, $\|x_{a,t}\|_{V_{a,t}^{-1}}$ measures the uncertainty of the reward estimate \cite{APSLSB:11}, and $\gamma$ is the exploration parameter. In each round $t$, the learner achieves exploitation by choosing an action that maximizes the reward estimate $x_t^\top \hat\beta_a$ and exploration by choosing an action with large uncertainty $\|x_{a,t}\|_{V_{a,t}^{-1}}$, promoting actions that have been played less. An additional feature diversity assumption on the contexts also improves the regret scaling of the algorithm \cite{CMBOSOM:20, DSLCB:24}. TS-style algorithms take a Bayesian approach by modeling the posterior distribution of $\beta_a$. The learner first simulates $\tilde\beta_a \sim \mathcal{N}(\hat{\beta}_{a}, \gamma^2 V_{a,t}^{-1})$ and then chooses an action that maximizes $x_{a,t}^\top \tilde\beta_a$. Randomly sampling $\tilde\beta_a$ promotes exploration while maximizing the point estimate of the reward enables exploitation.  However, the linearity assumption is an oversimplification and often violated in practice. In healthcare, treatment rewards may depend on recent patient symptoms and prior responses, instead of only the current clinical state. In finance, many option payoffs are nonlinear functions  of the price trajectory.  Under these settings, linear Markovian reward models become misspecified.

In addition, some literature studies contextual bandits with nonlinear reward structure, relaxing the linear assumption. In generalized linear bandits, the reward is generated by a generalized linear model (GLM) where $\mathbb{E}[r_{t}(a) \mid x_{a,t}] = \mu(x_{a,t}^\top \beta_a^*)$ for a known link function $\mu$ and unknown parameter $\beta_a^*$ \cite{FCGS:10, LLZGLCB:17, ZXBGLCB:19}. Their methods exploit the known monotone link function and GLM regression to construct optimistic confidence bounds for the reward and design corresponding UCB-style algorithms such as \texttt{GLM-UCB} \cite{FCGS:10}. Another line of work considers kernelized contextual bandits, where the mean reward is an unknown function $f^*$ that lies in a reproducing kernel Hilbert space (RKHS) \cite{SZBPWKCB:11, VKMFCKCB:13, CGKCB:17}. These methods use kernelized regression to estimate \(f^*\) to also design corresponding UCB-style algorithms such as \texttt{KernelUCB} \cite{VKMFCKCB:13}. Due to the kernel-induced similarity, these methods generalize information across nearby contexts and/or actions, which is particularly useful when the context or action space is large or continuous. Both of these approaches reduce nonlinear rewards to linear rewards in a transformed feature space. These approaches improve expressivity but still rely on a fixed functional or kernel structure. We summarize the classical algorithms, their assumptions, and regret scaling in Table \ref{tab:related_work}.

\begin{table}[H]
\centering
\footnotesize
\caption{Regret bounds of related contextual bandit algorithms, to logarithmic factors.}
\label{tab:related_work}
\renewcommand{\arraystretch}{1.25}
\setlength{\tabcolsep}{3pt}
\begin{tabular}{p{0.18\linewidth} p{0.23\linewidth} p{0.47\linewidth}}
\toprule
\textbf{Algorithm} 
& \textbf{Assumption on reward} 
& \textbf{Regret scaling} \\
\midrule

\texttt{LinUCB} \cite{LCLLCB:11,CLRS:11}
& Linear
& \(\tilde{\mathcal O}(d\sqrt{T})\) \cite{APSLSB:11} \\
& 
&
\(\tilde{\mathcal O}(\sqrt{dT})\) under feature diversity assumption \cite{DSLCB:24} \\

\texttt{LinTS} \cite{CL:11, AG:13}
& Linear
& \(\tilde{\mathcal O}(d^{3/2}\sqrt{T})\) \cite{AG:13} \\

\texttt{GLM-UCB} \cite{FCGS:10}
& Generalized linear model
& \(\tilde{\mathcal O}(d\sqrt{T})\) \cite{FCGS:10} \\
& 
& \(\tilde{\mathcal O}(\sqrt{dT})\) under feature diversity assumption \cite{LLZGLCB:17} \\

\texttt{KernelUCB} \cite{VKMFCKCB:13}
& Lies in RKHS
& \(\tilde{\mathcal O}(\sqrt{\tilde{d}T})\) with  \(\tilde d\)  the effective kernel dimension \cite{VKMFCKCB:13} \\

\texttt{DisSigUCB} (ours)
& Continuous
& \(\tilde{\mathcal O}\!\left(\sqrt{(d+m)T}
\right) \) with $m$ the signature dimension \\

\bottomrule
\end{tabular}
\end{table}


\paragraph{Our work.} We study the contextual bandit setting where rewards are {\it continuous} and  {\it path dependent}. Instead of treating contexts as independent feature vectors, we model them as evolving paths with temporal structure. To capture this structure while retaining the tractability of linear bandit methods, we use the \emph{signature transform}. The signature of a path encodes both higher-order temporal interactions and geometric properties of the trajectory. Signature-based approaches have been successfully applied to sequential data analysis, including time-series prediction, kernel methods, deep learning architectures, and diffusion models \cite{BKPSL:19, KO:19, GGJKL:24, CK:25, JGCYPLBMN:25}. A key property of signatures underlying their success is their universal nonlinearity: continuous path-dependent reward functionals can be approximated arbitrarily well by linear functionals of signature features. 

In this contextual bandit problem, this novel signature-based approach enables 
\begin{itemize}
    \item Representing nonlinear rewards in a linear feature space and adopting upper confidence bound (UCB) methods (Theorem \ref{thm:linear_regret}).
    \item Proposing \texttt{DisSigUCB}, a UCB-style algorithm for contextual bandits using signature features to handle nonlinear path-dependent rewards (Algorithm \ref{alg:dissigucb}). 
    \item Establishing a high-probability data-dependent regret $R(T)$ for \texttt{DisSigUCB} that is sublinear in \(T\):
    $R(T) = \tilde{\mathcal{O}}(\sqrt{\tilde{d}_{\Sig} KT})$, with $K$ the number of actions, $T$ the time horizon,  $\tilde{d}_{\Sig}$ the contextual signature feature dimension, which depends on the reward function and contextual features.
    (Theorem \ref{thm:sigUCB_regret}).
    \item Validating the method empirically with applications of temperature sensor networks, sleep-stage classification, and hospital nurse staffing.
\end{itemize}

Our signature approach (i) linearizes nonlinear reward functionals and (ii) captures history dependence in the contextual trajectory. The truncated signature yields a finite-dimensional feature representation that encodes higher-order temporal interactions between path coordinates. A broad class of continuous functionals on path space can be approximated arbitrarily well by linear functionals of the signature. This allows us to model rich nonlinear rewards without committing a priori to a specific link function (as in GLMs) or a particular RKHS/kernel class, while still leveraging the algorithmic and theoretical machinery of linear bandit methods.

We highlight a key technical  difference from \cite{CMBOSOM:20}, whose contexts are independent of the history and have a fixed conditional covariance matrix.  In contrast, our assumptions are weaker and the covariance matrix may be non-stationary. Thus, we decompose the Gram matrix into stationary and non-stationary parts to provide high-probability control of the maximum eigenvalue of a matrix martingale.


\paragraph{Organization of the remainder of the paper.} 
Section~\ref{sec:problem} introduces the general contextual nonlinear bandit problem.
Section~\ref{ssec:sig_setup_bandit}  develops the linear functional representation using signature features for the contextual bandit problem.
Section~\ref{sec:dissigucb} proposes a new UCB-style algorithm \texttt{DisSigUCB}.  Section~\ref{sec:regret} derives a finite-time sublinear regret bound for \texttt{DisSigUCB}, with proofs deferred in Section~\ref{sec:detailed_proofs}. Finally, Section~\ref{sec:exp} reports synthetic experiments and numerical applications.

\paragraph{Notations.} 
$\mathbb{N}=\{1,2, \ldots\}$ is the set of positive integers. For $n \in \sN$,  $[n]=\{1,2, \ldots, n\}$. $|\cdot|:= \|\cdot \|_2$ denotes the Euclidean norm of a vector and $\|\cdot \| := \|\cdot \|_\text{op}$ denotes the operator norm of a matrix. For a symmetric matrix $M \in \mathbb{R}^{m \times m}$,  $\lambda_{\min}(M)$ and $\lambda_{\max}(M)$ denote the smallest and largest eigenvalue of $M$, respectively. For a positive definite matrix $M \in \mathbb{R}^{m \times m}$,  $\|v\|_M := \sqrt{v^\top M v}$ is the weighted norm of vector $v$ with respect to $M$.  $\mathbf{I}$ is the identity matrix. 

$C([0, T] ; \mathbb{R}^d)$ is the set of continuous paths on $[0,T]$ taking values in $\mathbb{R}^d$.  $\pathspace := \{x \in C\left([0, T] ; \mathbb{R}^d\right):\|x\|_p<\infty\}$ is the space of finite $p$-variation paths defined on $[0,T]$, with norm $\|\cdot\|_{\cV^p} := \|\cdot\|_p + \|\cdot\|_\infty$. Here, for $x \in \pathspace$, $\|x\|_p :=(\sup_\Pi \sum_{i=1}^{m-1}|x_{t_{i+1}}-x_{t_i}|^p)^{1 / p} $ denotes the $p$-variation semi-norm where $\Pi=\{0 < t_1 < \dots < t_m < T\}$ is a partition of $[0,T]$ with $m$ partition points and  $\|x\|_\infty := \sup_{t \in [0,T]} |x_t|$ denotes the sup norm. Additionally, for $x \in \pathspace$ and $\alpha \in (0,1]$,  $\|x\|_{\alpha\text{-Höl}} := \sup_{s \neq t, s,t \in [0,T]} \frac{|x_t - x_s|}{|t - s|^\alpha}$ denotes the $\alpha$-Hölder semi-norm. For a path $x \in \pathspace$, $x_{[a,b]}$ denotes the segment $(x_t)_{t\in [a,b]} \in \cV^p([a,b], \sR^d)$. 

\section{Problem Formulation}
\label{sec:problem}
We study the following contextual bandit problem with nonlinear and path-dependent rewards.
Let $T \in \sN$ be the time horizon and $K \in \sN$ be the number of actions.
The context $X := (X_s)_{s \in[0, T]}$ evolves in continuous time as a stochastic process taking values in $\mathbb{R}^d$, where the learner interacts with the environment only at discrete rounds. 
Specifically, at each round $t \in[T]$, the learner observes a segment of the process $X_{[t-L,t]}$ of length $L$, and selects an action $A_t \in [K]$ to receive a reward $r_t \in \sR$. 
Consequently, this context sequence $\{X_{[t-L,t]}\}_{t\in[T]}$ is non-i.i.d., non-stationary, and inherently path-dependent.

Let $(\Omega, \mathcal{F},\mathbb{P})$ be a probability space that supports the stochastic process $X$. 
The information set is defined as a filtration
$\left\{\mathcal{F}_t\right\}_{t \in [T]}$, where
$
\mathcal{F}_t:=\sigma((X_s)_{s \leq t},(A_s, r_s)_{s=1}^{t-1}),
$
which represents all information available to the learner at round $t$ before choosing action $A_t$. 

This includes a broad class of processes: for $p=1$,  $\cV^p$ contains paths of bounded variation, such as Lipschitz continuous contexts. Higher values of $p$ correspond to increased "roughness" and lower Hölder regularity; for instance, sample paths of Brownian motion and general diffusion processes almost surely have finite $p$-variation for any $p > 2$.

At each round $t \in [T]$, the learner observes a length-$L$ segment of the process, denoted by $X_{[t-L,t]} := \{X_{t-L + s} : s \in [0,L]\}.$ 
Based on the current and past observations represented by $\cF_t$, the learner selects an action $A_t \in \cF_t$ and immediately receives a reward $r_t = f_{A_t}(X_{[t-L,t]}) + \eta_t$,
where, for each action $a \in [K]$, the unknown function $f_a : \pathspaceL \to \mathbb{R}$ is a continuous (not necessarily linear) path-dependent functional. 
We emphasize that each action is associated with its own independent functional $f_a$, and no structural relationship across actions is assumed.
The noise sequence $(\eta_t)_{t=1}^T$ is conditionally $1$-subgaussian, i.e.,
$\eta_t$ satisfies $\mathbb{E}[\eta_t \mid \mathcal{F}_t] = 0$ and
\(
\mathbb{E}[\exp(\alpha \eta_t) \mid \mathcal{F}_t] \leq \exp(\alpha^2/2), 
\) for any $\alpha \in \mathbb{R}.$  This means $\eta_t$ has Gaussian-like tails. For example, $\eta_t \sim \mathcal{N}(0,1)$ or any bounded zero-mean noise with $|\eta_t| \leq 1$.

The learner’s objective is to maximize the expected cumulative reward
\(
\mathbb{E} [\sum_{t=1}^T r_t].
\) 
It is evaluated against a dynamic oracle that selects an optimal action at each round. Accordingly, the dynamic regret with respect to the learner's actions \(\{A_t\}_{t \in [T]}\) is given by
\begin{equation}\label{eq:regret_1}
    R(T) = \sum_{t=L}^T \left( \max_{a \in [K]} f_a(X_{[t-L,t]}) - f_{A_t}(X_{[t-L,t]}) \right).
\end{equation}

Examples illustrating our framework are provided below, with further details in Section \ref{sec:exp}.

\begin{example}[Monitoring Temperature Sensor Networks] Consider a temperature sensor network with a small subset of always-observable anchor sensors. At each hour \(t\), the learner chooses one sensor \(A_t\) to track the network-wide maximum temperature. The reward is the negative error in the selected sensor's hourly maximum, 
$
r_t = - \left|\max_{s \in [t, t+1)} X_{s, A_t}  - \max_{s \in [t, t+1), ~k \in [K]} X_{s, k} \right|
$
where $X_{s, k}$ is the temperature of sensor \(k\) at time \(s\). This reward is nonlinear and path-dependent as it involves the absolute deviation and the maximum of the full temperature trajectory over the hour.
\end{example}

\begin{example}[Sleep Stage Classification] Consider an online sleep-stage classification task from a stream of physiological signals, where the learner periodically predicts the current sleep state. Since a window may contain multiple stages, let \(P_t(a)\) be the fraction of the window labeled as stage \(a\). After choosing \(A_t\), the learner receives a reward $r_t=P_t(A_t)$, so the optimal action is the dominant stage. The proportions \(P_t(a)\) are nonlinear and path-dependent because sleep stages are determined by temporal patterns in the signals (e.g., oscillations, frequency, and transitions) over the full window.
\end{example}

\begin{example}[Hospital Nurse Staffing] Consider a hospital staffing problem where, at the end of each week \(t\), the learner observes the recent emergency department visit counts and chooses a nurse staffing level \(A_t\) for the following week. Let \(D_{t+1}\) be the nurse demand in week \(t+1\). The learner receives a reward equal to the negative Newsvendor loss
\(
r_t = -\left(b(D_{t+1}-A_t)_+ + h(A_t-D_{t+1})_+\right),
\)
where \(b>0\) and \(h>0\) are the underage and overage costs. This reward is clearly nonlinear and path dependent as the nurse demand for the following week $D_{t+1}$ is driven by the recent trajectory of emergency department visits, including trends and spikes.
\end{example}

\section{Linear Functional Representation of Contextual Bandits by Signature}\label{ssec:sig_setup_bandit}
We now introduce the signature transform, an algebraic geometry tool \cite{C:54, C:57} recently developed in rough path theory \cite{lyons1998differential} to capture the geometric structure of paths via iterated integrals.
\begin{definition}[Signature transform]
Fix a continuous path $x\in \pathspaceL$ and $N \in \sN.$ We denote the time-augmentation path as $\hat{x}_t = (t, x_t) \in \mathbb{R}^{d+1}$ for $t\in[0,L].$
The depth-$N$ signature transform of $\hat{x}$ is
\[
\Sig_N(\hat{x}) := \Bigg(1, \left\{\Sig^{(i)}(\hat{x})\right\}_{i=0}^d, 
\left\{\Sig^{(i,j)}(\hat{x})\right\}_{i, j=0}^d, \cdots, \left\{\Sig^{\left(i_1, \ldots, i_N\right)}(\hat{x})\right\}_{i_1, \ldots, i_N=0}^d\Bigg) \in \mathbb{R}^{m_0},
\]
where $m_0 := \frac{(d+1)^{N+1} - 1}{d}$, and for any ordered multi-index $(i_1, \dots, i_k) \in \{0, \dots, d\}^k$, the corresponding signature coordinate of $\hat{x}$ is the iterated integral
$
\Sig^{(i_1, \dots, i_k)}(\hat{x}) := \int_{0<u_1<\cdots<u_k<L} \, \d \hat{x}_{u_1}^{i_1} \cdots \, \d \hat{x}_{u_k}^{i_k}.$

\end{definition}

A key property of the signature transform, known as universal nonlinearity \cite{A:18}, allows nonlinear functionals of a path to be represented as linear functionals of its signature. This makes signature transform particularly useful for feature augmentation; see \cite{lyons2014feature, li2017lpsnet, morrill2020generalised, GGJKL:24}. 

\begin{prop}[Universal nonlinearity of signature transform \cite{A:18}]\label{prop:universal_nonlinear}
Let $\cK \subset \pathspaceL)$ for $p \geq 1$ be a compact set and 
assume that each path $x \in \cK$ admits a well-defined signature transform.
For any continuous functional $f:\cK\to \mathbb{R}$ and any $\varepsilon>0$, there exists a depth $N \in \mathbb{N}$ and a coefficient vector $\beta\in \mathbb{R}^{m_0}$ (with  $m_0 := \frac{(d+1)^{N+1} - 1}{d}$) such that
\[
\sup_{x\in\cK}\, \big| f(x) - \langle \Sig_N(\hat{x}),\beta \rangle \big| < \varepsilon.
\]
\end{prop}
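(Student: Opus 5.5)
The plan is to apply the Stone--Weierstrass theorem on the compact metric space $\cK$, equipped with the $\cV^p$ topology. Consider the family
\[
\mathcal{A} := \big\{ x \mapsto \langle \Sig_N(\hat{x}), \beta\rangle : N\in\sN,\ \beta \in \sR^{m_0} \big\},
\]
that is, the finite linear combinations of signature coordinates of the time-augmented path. It suffices to show that $\mathcal{A}$ is a subalgebra of $C(\cK;\sR)$ that contains the constants and separates points. Stone--Weierstrass then makes $\mathcal{A}$ uniformly dense, which gives exactly the claimed $\varepsilon$-approximation for some finite depth $N$ and some $\beta$. A lower-depth functional is also a higher-depth one, obtained by padding $\beta$ with zeros, so a single $N$ works for any given finite combination.

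\textbf{Step 1: continuity.} Each coordinate $x\mapsto \Sig^{(i_1,\dots,i_k)}(\hat x)$ must be continuous on $\cK$. For $p<2$ the iterated integrals are Young integrals, and the standard Young estimate gives local Lipschitz continuity in the $p$-variation metric. For $p\ge2$ the hypothesis that each path admits a well-defined signature is read as saying that the signature is that of the canonical (geometric) lift. Lyons' extension theorem, together with continuity of the lift, then gives continuity in the rough path metric. Compactness of $\cK$ makes all these maps bounded, so $\mathcal{A}\subset C(\cK)$.

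\textbf{Step 2: algebra and constants.} The constant function is the zeroth coordinate $1$. Closure under multiplication is the shuffle product identity
\[
\Sig^{I}(\hat x)\,\Sig^{J}(\hat x) = \sum_{K \in I \shuffle J} \Sig^{K}(\hat x).
\]
I would prove it first for bounded-variation paths by splitting the product of simplices into ordered simplices via Fubini. It then passes to geometric $p$-rough paths by density of smooth paths and the continuity from Step 1. This makes $\mathcal{A}$ closed under products, and it is clearly a vector space.

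\textbf{Step 3: point separation.} This is the step where time augmentation is essential. By Hambly--Lyons (and Boedihardjo--Geng--Lyons--Yang for rough paths), the signature determines a path up to tree-like equivalence. A path $\hat x_t=(t,x_t)$ has a strictly increasing first coordinate, so it contains no tree-like pieces. Hence $\Sig(\hat x)=\Sig(\hat y)$ forces $\hat x=\hat y$ after accounting for the starting point. Starting points are an issue, since signatures only see increments. If $x_0\ne y_0$ but $x-x_0=y-y_0$, no signature coordinate separates $x$ and $y$. I would handle this by assuming, as is implicit in \cite{A:18}, that paths in $\cK$ share a common starting point, or equivalently by augmenting with $x_0$. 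Given separation, some coordinate $\Sig^{I}$ differs at $x$ and $y$, and that coordinate lies in $\mathcal{A}$.

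\textbf{Main obstacle.} I expect Step 3, the uniqueness of the signature, to be the hardest. The plan is to invoke the uniqueness theorem rather than reprove it, verifying only that time augmentation excludes tree-like equivalence. A secondary subtlety is the topology for $p\ge2$. Continuity must be established on $\cK$ in whatever topology makes $\cK$ compact, and I would state that the lifts are taken continuous on $\cK$.
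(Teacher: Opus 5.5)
Your plan is the standard Stone--Weierstrass proof of universal nonlinearity: the shuffle identity gives closure under products, the empty word gives constants, and injectivity of the signature on time-augmented paths gives separation. The paper does not prove this proposition; it only cites \cite{A:18}, so there is no competing route to compare. The two caveats you raise are real defects of the statement as written, not of your argument. First, for $p\ge 2$ the signature is not determined by $x$ alone, and a lift need not be continuous in the $\cV^p$ norm, so continuity of the lift on $\cK$ has to be assumed. Second, $\Sig_N(\hat x)$ sees only increments, so the proposition fails on any $\cK$ containing $x$ and $x+c$ with $c\neq 0$ (take $f(y)=\langle y_0,c\rangle$). Your common-starting-point hypothesis is therefore necessary, not optional.

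One claim needs correcting. Augmenting with $x_0$ is equivalent to that hypothesis only if it is done at the level of the path. For example, prepend a segment from a fixed origin to $x_0$ before time-augmenting; then all augmented paths share a starting point and your Steps 1--3 apply verbatim. Appending $x_0$ linearly to the feature vector, as the paper does in \eqref{eq:bar_x_feature1}, is not enough. The linear span of the coordinates of $x_0$ together with the $\Sig^I(\hat x)$ is not closed under products, so Stone--Weierstrass does not apply. Concretely, take $d=1$ and $\cK=\{x\equiv c:\ c\in[-1,1]\}$. All $\hat x$ then have the same signature, every such functional is affine in $c$, and $f(x)=x_0^2$ cannot be approximated to sup-accuracy better than $1/2$.

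Finally, your Step 3 is easier than you expect. Each coordinate $\Sig^{(0^k,i)}(\hat x)$ equals $\int_0^L \frac{u^k}{k!}\,\d x^i_u$. Integrating by parts, these coordinates together with $\Sig^{(i)}(\hat x)$ give every moment $\int_0^L u^j\,(x^i_u-x^i_0)\,\d u$, and these moments determine the continuous function $x^i-x^i_0$. So separation follows from the Weierstrass theorem without appealing to Hambly--Lyons or Boedihardjo--Geng--Lyons--Yang. For $p\ge 2$ this uses that, for the canonical lift, these time--space coordinates are the Riemann--Stieltjes integrals.
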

The above proposition applies to deterministic path spaces, bounded discrete-time processes after linear interpolation, and more generally continuous semi-martingales (\cite[Theorem 2.12]{CGS:22}).
This universal nonlinearity  of the signature transform is powerful and can reduce the contextual bandit problem in Section \ref{sec:problem} to a disjoint linear contextual bandit  studied in \cite{LCLLCB:11}.

Fix $\vep>0$. For each round $t=L,\dots,T$, consider the time-augmented windowed context
$\hat{X}_{[t-L,t]}$, reparameterized as
\[
    \hat{X}_{[t-L,t]}(s) := (s,X_{t-L+s}) \in \mathbb{R}^{d+1},
    \qquad s\in[0,L].
\]
Define the corresponding augmented context feature by
\begin{equation}\label{eq:bar_x_feature1}
    \bar{x}_t := \left[ X_{t-L},\, \Sig_N(\hat{X}_{[t-L,t]}) \right].
\end{equation}
The inclusion of the initial point $X_{t-L}$ allows the feature representation to capture both the signature of the path increments and the absolute level of the contextual window.
By Proposition~\ref{prop:universal_nonlinear}, $f_a$ can then be uniformly approximated by a linear functional of this augmented feature. In particular, there exist $N$ and $\beta_a^*$ such that
\begin{equation}\label{eq:exact_linear_cost}
  \left| f_a\big(X_{[t-L,t]} \big) - \langle \beta_a^*, \bar{x}_t \rangle \right|
  \leq \frac{\vep}{2 T},
  \qquad \text{for all } X_{[t-L,t]} \in \cK .
\end{equation}



Consequently, we have
\begin{theorem}\label{thm:linear_regret}
Under the same assumptions as in Proposition~\ref{prop:universal_nonlinear}, and with $\bar{x}_t$ defined in \eqref{eq:bar_x_feature1}, the regret $R(T)$ in \eqref{eq:regret_1} satisfies
\begin{equation}
\left| R(T) - \sum_{t=L}^T \max_{a \in [K]} \left\langle \beta_a^* - \beta_{A_t}^*, \bar{x}_t \right\rangle \right|
\leq \vep,
\end{equation}
where \(A_t\) denotes the action selected by the learner at round \(t\).
\end{theorem}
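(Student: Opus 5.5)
The proof is a term-by-term comparison that rests entirely on the uniform approximation \eqref{eq:exact_linear_cost}, which Proposition~\ref{prop:universal_nonlinear} supplies with accuracy $\vep/(2T)$. I will read the subtracted sum as $\sum_{t}\big(\max_{a}\langle\beta_a^*,\bar{x}_t\rangle-\langle\beta_{A_t}^*,\bar{x}_t\rangle\big)$, which equals $\sum_t \max_a\langle \beta_a^*-\beta_{A_t}^*,\bar{x}_t\rangle$ because $A_t$ does not depend on $a$.

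First, I need to ensure that \eqref{eq:exact_linear_cost} holds simultaneously for all $K$ actions with one depth $N$. Applying Proposition~\ref{prop:universal_nonlinear} to each $f_a$ with tolerance $\vep/(2T)$ gives depths $N_a$. I take $N=\max_a N_a$ and pad each $\beta_a$ with zeros on the extra signature coordinates. This works because a depth-$N_a$ truncated signature is a sub-vector of the depth-$N$ one. The extra coordinate $X_{t-L}$ in $\bar{x}_t$ gets coefficient zero. I also assume that every window $X_{[t-L,t]}$ lies in the compact set $\cK$, as is implicit in \eqref{eq:exact_linear_cost}.

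Next, write $e_a(t):=f_a(X_{[t-L,t]})-\langle\beta_a^*,\bar{x}_t\rangle$, so that $|e_a(t)|\le \vep/(2T)$ for all $a$ and $t$. For each round $t$, the per-round regret minus its linear surrogate splits into two pieces:
\[
\Big(\max_a f_a(X_{[t-L,t]})-\max_a\langle\beta_a^*,\bar{x}_t\rangle\Big)-e_{A_t}(t).
\]
The second piece is bounded by $\vep/(2T)$ directly. For the first piece I use the elementary fact $|\max_a u_a-\max_a v_a|\le\max_a|u_a-v_a|$, which also bounds it by $\vep/(2T)$. This holds for the data-dependent, random $A_t$ because the approximation bound is uniform in $a$. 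Each round therefore contributes an error of at most $\vep/T$.

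Finally, I sum over $t=L,\dots,T$. That is at most $T$ rounds, so the triangle inequality bounds the total deviation by $T\cdot\vep/T=\vep$, which is the claim.

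The only real obstacle is justifying a single $N$ and a single approximation valid uniformly over all rounds. That requires all windows to stay in $\cK$ and the $K$ approximations to be compatible, which the zero-padding argument handles. The rest is elementary.
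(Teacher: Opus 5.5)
Your proof is correct and follows the paper's own route. The paper gives Theorem~\ref{thm:linear_regret} as an immediate consequence of the uniform bound \eqref{eq:exact_linear_cost}, whose tolerance $\vep/(2T)$ is chosen so that the two per-round errors (the maximum over actions and the chosen action $A_t$) add to $\vep/T$, and then to at most $\vep$ over $t=L,\dots,T$. Your zero-padding to a common depth $N$ and the inequality $|\max_a u_a-\max_a v_a|\le\max_a|u_a-v_a|$ just fill in details the paper leaves implicit.
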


Empirical studies \cite{YJNL:16, WNLH:2020,cao2023risk, GGJKL:24} indicate that a truncation depth of $N=3$ or $4$ is often sufficient to achieve a small approximation error in the universal nonlinearity setting. In addition, \cite{jiang2025sig} employ a stochastic Taylor expansion to achieve the approximation error theoretically of an It\^o process using the depth-2 signature of Brownian motion.




\section{\texttt{DisSigUCB} Algorithm}
\label{sec:dissigucb}

This section develops an  upper-confidence-bound (UCB) based algorithm called the \texttt{DisSigUCB} algorithm via the signature approach.

This algorithm combines the key idea in \texttt{DisLinUCB} \cite{LCLLCB:11} for {\it linear} reward functions and the linear representation established in Theorem \ref{thm:linear_regret} for general {\it nonlinear}  reward functions.  
It has two objectives: (i) estimating the coefficients \(\beta_a^*\) for each action \(a \in [K]\) to approximate  {\it nonlinear}  reward functions, and (ii) selecting actions \(A_t\) using a UCB-based strategy to encourage exploration.

\paragraph{Estimating parameter $\beta^*_a$ via regression.} Given the linear representation    for general {\it nonlinear} reward functions
(Theorem \ref{thm:linear_regret}), and given that the signature feature map may be high dimensional, it is natural to consider a regularized regression method to estimate the parameter $\beta_a^*$.
We adopt ridge regression as it admits closed-form confidence sets that are crucial for constructing UCB-style exploration bonuses.
 
Denote 
\(
\cT_{a,t} := \{s \in \{L, \dots, t-1\} : A_s = a\}
\)
as the set of rounds prior to round $t$ in which the decision-maker selected action $a \in [K]$, and denote $\tau_{a, t} := |\cT_{a,t}|$ as its cardinality. By Theorem \ref{thm:linear_regret}, 
\begin{equation}
\mathbb{E}[r_t \mid \mathcal{F}_t, A_t = a] = f_a(X_{[t-L,t]}) \approx \langle \beta_a^*, \bar{x}_t \rangle,
\end{equation}
for some truncation depth $N \in \mathbb{N}$.

To estimate $\beta_a^*$ at round $t$, we use the training data $\{(\bar{x}_s, r_s)\}_{s \in \cT_{a,t}}$, where $\bar{x}_s \in \mathbb{R}^{d+m}$ is the feature vector and $r_s$ is the corresponding reward. By ridge regression,  we define for each $a \in [K]$,
\begin{equation}
\hat{\beta}_{a,t} := \arg\min_{\beta} \ \|\mathbf{r}_{a,t} - \mathbf{X}_{a,t}\beta\|^2 + \lambda \|\beta\|^2, \quad \lambda > 0,
\end{equation}
where $\mathbf{r}_{a,t} := (r_s)_{s \in \cT_{a,t}}$ and $\mathbf{X}_{a,t} := (\bar{x}_s^\top)_{s \in \cT_{a,t}} \in \sR^{ \tau_{a,t}\times (d+m) }$. The closed-form solution is 
\begin{equation}
  \hat{\beta}_{a,t} = \mathbf{M}_{a,t}^{-1} u_{a,t},
  \end{equation} with
\begin{align}
\label{eq: M}
\mathbf{M}_{a,t} &:= \sum_{s \in \cT_{a,t}} \bar{x}_s \bar{x}_s^\top + \lambda \mathbf{I} = \mathbf{X}_{a,t}^\top \mathbf{X}_{a,t} + \lambda \mathbf{I}, 
\quad 
u_{a,t} := \mathbf{X}_{a,t}^\top \mathbf{r}_{a,t}.
\end{align}
Thus, $\hat{\beta}_{a,t}$ represents our estimate of $\beta_a^*$ at round $t$ based on past observations. In practice, instead of storing the entire dataset $\{(\bar{x}_s, r_s)\}_{s \in \cT_{a,t}}$, we keep the sufficient statistics $\mathbf{M}_{a,t}$ and $u_{a,t}$. Once an action $a$ is selected, these quantities are updated using the newly observed data.

\paragraph{UCB for selecting action $A_t$.}
Having established a method to estimate the parameters \(\beta_a^*\), we adopt a UCB approach to select actions that balance reward maximization and uncertainty in the estimates. 
Specifically,
in round \(t\), we select the action according to the following upper confidence bound:
\begin{equation}
A_t = \arg\max_{a \in [K]} \left\{ \langle \bar{x}_t, \hat{\beta}_{a,t} \rangle 
+ \gamma \|\bar{x}_t\|_{\boldsymbol{M}_{a,t}^{-1}} \right\}.
\end{equation}
The exploration term $\|\bar{x}_t\|_{\boldsymbol{M}_{a,t}^{-1}}$ is motivated by
the following property (Proposition \ref{prop:x_beta}): with high probability,
$| \langle \bar{x}_t, \beta_a^* \rangle - \langle \bar{x}_t, \hat{\beta}_{a,t} \rangle|
\leq \gamma \|\bar{x}_t\|_{\boldsymbol{M}_{a,t}^{-1}}, \forall a \in [K], \; t \in [T].
$
Thus, the UCB term provides a high-probability upper bound on the true reward and quantifies the estimation error.

{\small 
\begin{algorithm}[H]
\caption{\texttt{DisSigUCB}($\lambda$, $\gamma$, $N$)}
\label{alg:dissigucb}
\begin{algorithmic}[1]
{\small 
\Statex \textbf{Input:} Regularizer $\lambda > 0$, exploration coefficient $\gamma > 0 $, signature depth $N$, window length $L$
\State For every action $a \in [K]$, initialize $\hat\beta_{a, L} = \mathbf{0} \in \mathbb{R}^{d+m}$, $\mathbf{M}_{a,L} = \lambda\mathbf{I}_{d+m}$, $u_{a,L} = \mathbf{0} \in \mathbb{R}^{d+m}$
\For{$t=L,\dots,T$}
    \State Observe context $X_{[t-L,t]}$ 
    \State Set $\bar{x}_t = [X_{t-L}, \Sig_N(\hat{X}_{[t-L,t]})]$
    \State $A_t = \arg\max_{a \in [K]} \ \langle \bar{x}_t, \hat{\beta}_{a,t} \rangle + \gamma \|\bar{x}_t\|_{\mathbf{M}_{a,t}^{-1}}$ \label{line:ucb-choice}
    \State Play action $A_t$ and observe reward $r_t$ 
    \State Set $u_{A_t, t+1} \gets u_{A_t, t} + r_t \bar{x}_t$, $\mathbf{M}_{A_t, t+1} \gets \mathbf{M}_{A_t, t} + \bar{x}_t \bar{x}_t^\top$, $\hat\beta_{A_t, t+1} \gets \mathbf{M}_{A_t,t+1}^{-1} u_{A_t, t+1}$
    \For{$a \in [K] \setminus \{A_t\}$} 
        \State Set $u_{a, t+1} \gets u_{a, t}$, $\mathbf{M}_{a, t+1} \gets \mathbf{M}_{a, t}$, $\hat\beta_{a, t+1} \gets \hat\beta_{a, t}$
    \EndFor
\EndFor
}
\end{algorithmic}
\end{algorithm}
}
Together, this yields  a UCB-based algorithm \texttt{DisSigUCB}, detailed in  Algorithm \ref{alg:dissigucb}, taking as input the ridge penalty $\lambda$, the exploration coefficient $\gamma$, and the signature depth $N$.

\section{Regret Analysis of \texttt{DisSigUCB}} \label{sec:regret}
This section establishes a cumulative sublinear regret bound for \texttt{DisSigUCB}, based on the following:

\begin{outstanding_assumption} \label{assum:unified} \leavevmode
\vspace{-0.5em}
\begin{enumerate}
    \item[(i)] Let $\cK$ be a compact subset of $\pathspaceL$.
    With a high probability, 
    every context window $X_{[t-L,t]}$ belongs to $\cK$ and the context process $X$ is uniformly bounded, i.e., \(\|X\|_\infty \leq H\) for some constant \(H\).
    \item[(ii)] For the context vector $\{\bar{x}_t\}_{t\in [T]}$,
   there exists a constant $\rho > 0$ such that 
    \(
    \frac{1}{t} \sum_{s=1}^t \mathbb{E}\big[\bar{x}_s \bar{x}_s^\top \mid \mathcal{F}_{s-1}\big] \succeq \rho \mathbf{I},
    \) for any $t\in [T]$.
\end{enumerate}
\end{outstanding_assumption}


This assumption is consistent with standard assumptions in the contextual bandits literature, including the boundedness and a non-degeneracy condition (e.g., positive definiteness of the covariance matrix, see \cite{CMBOSOM:20, DSLCB:24}) of the context features. Indeed, Assumption~\ref{assum:unified}(i) and assumption in Proposition~\ref{prop:universal_nonlinear} imply that the context vectors $\{\bar{x}_t\}_{t\in[T]}$ induced by the signature transform in \eqref{eq:bar_x_feature1} are uniformly bounded. That is, 

\begin{lemma}[Bounded contextual signature feature] \label{prop:sig_l2}  Suppose that Assumption \ref{assum:unified}(i) and assumption in Proposition~\ref{prop:universal_nonlinear} hold. Then, with high probability, there exists a constant $B > 0$ such that 
\begin{equation}\label{eq:context_bound_B}
|\bar{x}_t| \leq B \quad \text{for all } t \in [T],
\end{equation}
where $\bar x_t$ is defined in \eqref{eq:bar_x_feature1}.
\end{lemma}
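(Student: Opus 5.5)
We bound the two blocks of $\bar{x}_t = [X_{t-L}, \Sig_N(\hat{X}_{[t-L,t]})]$ separately and then add them. We work on the high-probability event $E$ of Assumption~\ref{assum:unified}(i), on which every window $X_{[t-L,t]}$ lies in $\cK$ and $\|X\|_\infty \le H$. The first block is then immediate: $|X_{t-L}| \le \|X\|_\infty \le H$.

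For the signature block, we first control the $p$-variation of the time-augmented window. Since $\cK$ is compact in $\pathspaceL$, the continuous norm $\|\cdot\|_{\cV^p}$ is bounded on $\cK$, say by $C_\cK$. Hence $\|X_{[t-L,t]}\|_p \le C_\cK$ uniformly in $t$ on $E$. The time coordinate $s\mapsto s$ has $p$-variation at most $L$ for $p\ge 1$. By the triangle inequality for the $p$-variation seminorm on $\mathbb{R}^{d+1}$-valued paths,
\[
\|\hat{X}_{[t-L,t]}\|_p \le L + C_\cK =: \omega .
\]

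Next we bound each level of the signature. The case $1\le p<2$ is where signatures are defined as Young integrals. There Lyons' extension theorem (the factorial decay estimate) gives, for every $k\le N$,
\[
\Bigl|\bigl(\Sig^{(i_1,\dots,i_k)}(\hat{X}_{[t-L,t]})\bigr)_{i_1,\dots,i_k}\Bigr| \le \frac{C_p^{\,k}\,\omega^k}{\beta_p (k/p)!},
\]
with constants depending only on $p$. For $p\ge2$, the hypothesis of Proposition~\ref{prop:universal_nonlinear} that each path in $\cK$ admits a well-defined signature has to be read as a lift to a geometric $p$-rough path. The analogous bound then holds with $\omega$ replaced by the $p$-variation of the lift. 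That quantity is again uniformly bounded over $\cK$, because the lift map is continuous and $\cK$ is compact.

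An even simpler route avoids explicit constants. The truncated signature map $x\mapsto \Sig_N(\hat{x})$ is continuous on $\cK$ (continuity of the Itô–Lyons map), so its image of the compact set $\cK$ is compact in $\mathbb{R}^{m_0}$ and hence bounded by some $S$. I would present this compactness argument as the main proof and note the explicit factorial bound as a remark.

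Summing the blocks gives $|\bar{x}_t| \le H + S =: B$ for all $t$ on $E$, with $B$ independent of $t$ and $T$. This is the claimed bound, holding with the same high probability as Assumption~\ref{assum:unified}(i).

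The main obstacle is justifying continuity of $\Sig_N$ on $\cK$ when $p\ge 2$. There the signature depends on the choice of rough-path lift rather than on the path alone. I will handle this by interpreting the assumption in Proposition~\ref{prop:universal_nonlinear} as $\cK$ being a compact set of geometric $p$-rough paths, on which $\Sig_N$ is continuous.
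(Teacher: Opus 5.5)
Your proof is correct, but its main line differs from the paper's.

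The paper argues quantitatively. It applies Lyons' extension theorem (Lemma~\ref{thm:lyons_ext}) to bound each level-$k$ coordinate of $\Sig_N(\hat X_{[t-L,t]})$ by $C_p\,\omega(0,L)^{k/p}/(k/p)!$. It then counts the $(d+1)^k$ words of length $k$ and takes $B=\sqrt{H^2+G^2}$, with $G$ explicit in $N$, $d$, $p$ and $M=\omega(0,L)$. You instead make the soft argument primary: $x\mapsto\Sig_N(\hat x)$ is continuous on the compact set $\cK$, so its image is bounded.

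The explicit route buys a formula for $B$. That matters because $B$ enters $\gamma$ and the regret bound, and the formula shows how $B$ scales with $N$ and the size of $\cK$. Your route gives only existence.

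In exchange, your argument supplies something the paper's proof leaves implicit. There, $M$ is computed for the window at hand and only noted to be finite, so as written $G$ depends on $t$ and on the realized path. But $\gamma$ is an input to Algorithm~\ref{alg:dissigucb}, so it needs a deterministic $B$ uniform in $t$. Your observation closes this: compactness bounds $\|\cdot\|_{\cV^p}$ on $\cK$, hence $\|\hat X_{[t-L,t]}\|_p\le L+C_{\cK}$ for every $t$. Your remark, which feeds this uniform bound into the factorial estimate, is therefore the paper's proof made uniform. For $1\le p<2$ it is already a complete proof and needs no continuity argument.

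One correction for $p\ge2$: drop the justification ``because the lift map is continuous''. Lifts chosen path by path need not be continuous, or even bounded, over $\cK$. For $d\ge2$, take $J$ a fixed antisymmetric matrix on the spatial coordinates and $c\in\sR$ arbitrary. You can add $c\,(t-s)J$ to the second level of any lift and still have a weakly geometric $p$-rough path over the same path. Uniformity therefore has to come from the hypothesis, i.e.\ from reading $\cK$ as compact in the rough-path topology, as you do at the end. The paper's proof needs the same reading, since its control $\omega$ involves signature levels up to $\lfloor p\rfloor$.
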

Its proof is detailed in Section~\ref{ssec:proof_sig_l2}.
\paragraph{Compactness and boundedness of context process.} Assumption~\ref{assum:unified}(i) imposes compactness and uniform boundedness on the context process $X$. This assumption also holds naturally in discrete-time models with bounded noise, and with high probability for more general diffusion processes over a finite time horizon. In the following lemma, we provide concrete conditions for a compact subset of the path space $\pathspaceL$, as required in Assumption~\ref{assum:unified}(i) and provide its proof in Section~\ref{ssec:proof_compact_K}.

\begin{lemma}[Compact subsets of $\pathspaceL$]\label{lemma:compact_K_construction}
For given constants $H_1, H_2 > 0$, and $\a > \frac{1}{p}$,
consider the set $\cK \subset \pathspaceL$ satisfying that
$$
  \cK \coloneqq \{ Y \in \pathspaceL : \|Y\|_\infty \leq H_1, \|Y\|_{\alpha\text{-Höl}} \le H_2 \}.  
$$
Then $\cK$ is compact with respect to $\| \cdot\|_{\cV^p}.$
\end{lemma}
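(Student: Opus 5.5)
Since $\cV^p$ is a subspace of the Banach space of continuous paths with finite $p$-variation, compactness can be checked sequentially. Let $(Y^n)\subset\cK$; I will extract a subsequence converging in $\|\cdot\|_{\cV^p}$ to a limit that itself lies in $\cK$. The plan has three steps: get uniform convergence by Arzelà--Ascoli, show the limit stays in $\cK$, and upgrade uniform convergence to $p$-variation convergence by interpolating between the sup norm and the Hölder bound. The condition $\alpha>1/p$ is used only in the last step.

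First, the family $\cK$ is uniformly bounded, since $\|Y\|_\infty\le H_1$. It is also equicontinuous, since $|Y_t-Y_s|\le H_2|t-s|^\alpha$. By Arzelà--Ascoli, some subsequence (still written $Y^n$) converges uniformly to a continuous $Y$. Pointwise limits preserve both bounds, so $\|Y\|_\infty\le H_1$ and $\|Y\|_{\alpha\text{-Höl}}\le H_2$. In particular $\|Y\|_p\le H_2 L^{\alpha}<\infty$ because $\alpha p\ge 1$, so $Y\in\cK$. Set $Z^n:=Y^n-Y$. Then $\|Z^n\|_\infty\to0$ and $\|Z^n\|_{\alpha\text{-Höl}}\le 2H_2$.

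The main step is showing $\|Z^n\|_p\to0$. Take any partition of $[0,L]$ and estimate each increment by interpolation:
\[
|Z^n_{t_{i+1}}-Z^n_{t_i}|^p\le (2\|Z^n\|_\infty)^{p\theta}\,\big(2H_2|t_{i+1}-t_i|^\alpha\big)^{p(1-\theta)},\qquad \theta\in(0,1).
\]
Choose $\theta$ small enough that $\alpha p(1-\theta)\ge1$; this is possible precisely because $\alpha>1/p$. Then
\[
\sum_i|t_{i+1}-t_i|^{\alpha p(1-\theta)}\le L^{\alpha p(1-\theta)-1}\sum_i|t_{i+1}-t_i| = L^{\alpha p(1-\theta)},
\]
which is uniform over partitions. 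Taking the supremum over partitions gives
\[
\|Z^n\|_p\le C(H_2,L,\alpha,p,\theta)\,\|Z^n\|_\infty^{\theta}\to0.
\]
Combined with $\|Z^n\|_\infty\to0$, this yields $Y^n\to Y$ in $\|\cdot\|_{\cV^p}$.

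The same interpolation bound shows that $\cK$ is closed in $\cV^p$. It also shows that sequential compactness is enough here, since $\cV^p$ is a metric space. The delicate point is the choice of $\theta$ with $\alpha p(1-\theta)\ge1$, because the strict inequality $\alpha>1/p$ is what leaves room for a positive $\theta$. If $\alpha=1/p$ we would only get boundedness of $\|Z^n\|_p$, not convergence to zero.
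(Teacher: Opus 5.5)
Your proof is correct and follows essentially the same route as the paper: Arzel\`a--Ascoli gives a uniformly convergent subsequence whose limit stays in $\cK$, and the $p$-variation of the difference is controlled by interpolating each increment between $2\|h_k\|_\infty$ and the H\"older bound $2H_2|t_{i+1}-t_i|^\alpha$. The paper uses the endpoint choice $\theta = 1-\frac{1}{\alpha p}$ (exponents $p-1/\alpha$ and $1/\alpha$), which gives $\|h_k\|_p^p \le (2\|h_k\|_\infty)^{p-1/\alpha}(2H_2)^{1/\alpha}L$; this is your bound in the case $\alpha p(1-\theta)=1$.
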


\paragraph{Non-degeneracy condition via signature pruning.}
Assumption \ref{assum:unified}(ii) imposes a non-degeneracy condition on the feature distribution, which is often assumed in contextual bandits to improve the regret scaling. In particular, a lower bound on the minimum eigenvalue of the feature covariance matrix is needed to ensure that the parameter is identifiable and can be consistently estimated from data \cite{LCLLCB:11, APSLSB:11, DSLCB:24}. 

Note that the context vector $\bar{x}_t$ defined in \eqref{eq:bar_x_feature1} may exhibit collinearity due to the algebraic structure of the signature transform (Lemma \ref{lemma:shuffle}). We therefore introduce a technique, referred to as \emph{signature pruning}, which removes linearly dependent terms from the contextual signature vector $\bar{x}_t$ in \eqref{eq:bar_x_feature1}. 

To state the result precisely, we introduce some notation. We refer to any ordered multi-index $w = (i_1, \dots, i_k)$ for any $k > 0$ as a \emph{word} with length $|w| = k$.   
This pruning procedure is formalized in the following proposition:
\begin{prop}[Signature pruning] \label{prop:prune_time}  Let $\hat{X} = (\hat{X}^0,\hat{X}^1,\dots,\hat{X}^d)$ defined on $[a,b]$ be the time–augmented path of $X$ in $\mathbb{R}^{d+1}$ where
\(
  \hat{X}^0_t = t, \hat{X}^j_t = X^j_t \)  for $ j=1,\dots,d, $ and $ t \in [a,b].$
Fix a truncation level $N \in \mathbb{N}$. Then for any word $w$ with length $|w| \leq N$ such that  $w$ ends with the time coordinate $0$,
the corresponding signature coordinate $\Sig^w(\hat{X})$ lies in the linear span of
\[
B_N(\hat X) := \{\Sig^v(\hat{X}) : |v| \leq N, \, \text{last}(v) \neq 0\} \cup \{1\}, 
\]
i.e. the set of coordinates whose words with length at most $N$ and whose last letter is not the time channel.
\end{prop}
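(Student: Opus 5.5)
Because the time coordinate satisfies $\d\hat X^0_u=\d u$, the last integration in $\Sig^{w}(\hat X)$ with $w=v0$ is against $\d u$. The plan is to use the shuffle product identity (Lemma~\ref{lemma:shuffle}, $\Sig^{u}(\hat X)\Sig^{v}(\hat X)=\sum_{z\in u\shuffle v}\Sig^{z}(\hat X)$) together with the fact that $\Sig^{(0)}(\hat X)=b-a$ is a known constant. This lets us rewrite such a coordinate in terms of coordinates whose last letter is not $0$. The argument is an induction on the word length $|w|$, and inside each length a secondary induction on the number of trailing $0$'s.

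Write $w=v\,0^k$ with $k\ge 1$ and $v$ either empty or ending in a letter $\neq 0$.

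If $v$ is empty, then $\Sig^{0^k}(\hat X)=(b-a)^k/k!$, which is a multiple of $1\in B_N(\hat X)$.

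Otherwise, shuffle $v$ with the single letter $0$:
\[
\Sig^{v}(\hat X)\,\Sig^{(0)}(\hat X)=\sum_{z\in v\shuffle 0}\Sig^{z}(\hat X).
\]
The words $z$ are obtained by inserting $0$ into one of the $|v|+1$ slots of $v$. Exactly one of them, namely $v0$, ends in $0$; all others end in $\text{last}(v)\neq 0$. Since $\Sig^{(0)}(\hat X)=b-a$, this gives
\[
\Sig^{v0}(\hat X)=(b-a)\Sig^{v}(\hat X)-\sum_{z\neq v0}\Sig^{z}(\hat X).
\]
Every term on the right has length at most $|v|+1\le N$ and last letter $\neq 0$. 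This settles $k=1$.

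For general $k$, shuffle $v0^{k-1}$ with $0$. The words obtained are:
\begin{itemize}
\item $v0^k$, with multiplicity $k$, since inserting $0$ anywhere in the trailing block of $k-1$ zeros, including just after $v$, yields the same word;
\item words $z$ in which the extra $0$ is inserted inside $v$; each such $z$ ends in $0^{k-1}$ preceded by a nonzero letter.
\end{itemize}
This yields
\[
k\,\Sig^{v0^k}(\hat X)=(b-a)\Sig^{v0^{k-1}}(\hat X)-\sum_z \Sig^{z}(\hat X).
\]
Each term on the right has length at most $|w|\le N$ and has at most $k-1$ trailing zeros. By the inner induction hypothesis, applied to all words of length $\le N$ with fewer trailing zeros, these terms lie in $\operatorname{span}B_N(\hat X)$. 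Dividing by $k$ closes the induction.

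The main step to get right is the shuffle bookkeeping. Two points need checking:
\begin{itemize}
\item the multiplicity of $v0^k$ in $v0^{k-1}\shuffle 0$ is exactly $k$, and in particular nonzero, so we can solve for $\Sig^{v0^k}(\hat X)$;
\item no word of length exceeding $N$ is produced, which holds because every word in the shuffle has length $|w|$.
\end{itemize}
Both are straightforward counting, so the argument is essentially the shuffle identity plus an induction on the number of trailing zeros. The identity holds for $\hat X\in\cV^p$ whenever the signature is well defined as a geometric rough path, which is the standing assumption from Proposition~\ref{prop:universal_nonlinear}.
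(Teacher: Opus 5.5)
Your proof is correct and matches the paper's argument: write the word as a nonempty prefix not ending in $0$ followed by trailing $0$'s, shuffle the word with one fewer trailing zero against the single letter $0$ using $\Sig^{(0)}(\hat X)=b-a$, observe that the target word appears with nonzero multiplicity ($k$ in your indexing, $m+1$ in the paper's) while every other shuffle term has fewer trailing zeros, and induct on the number of trailing zeros. Your remark that the shuffle identity requires the signature to come from a geometric rough-path lift when $p\ge 2$ is a useful caveat that the paper leaves implicit.
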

\begin{remark}
\begin{enumerate}
    \item[(i)] Proposition \ref{prop:prune_time} implies that signature coordinates ending in the time coordinate are redundant. Indeed, coordinates whose words end in $0$ and contain at least one spatial component can be expressed as linear combinations of coordinates not ending in $0$, while pure time coordinates are deterministic constants on a fixed window. Hence, we remove all signature features ending in the time coordinate, retaining only a single constant term.
    \item[(ii)] The size of $B_N(\hat X)$, including the empty word $1$, is \(m:=(d+1)^N.\) By contrast, the full truncated signature of the $(d+1)$-dimensional time-augmented path up to depth $N$, including $1$, has dimension \(\frac{(d+1)^{N+1}-1}{d}.\) Hence, as $N\to\infty$, the pruned representation reduces the truncated signature dimension by an asymptotic factor of $1/(d+1)$.
\end{enumerate}
\end{remark}

Let \(B_N(\hat X)\subset \Sig_N(\hat X)\) denote the resulting pruned set of signature features, with dimension \(m < m_0\). With a slight abuse of notation, we define the pruned context vector as
\[
\bar x_t \coloneqq \left[X_{t-L}, \, B_N(\hat{X}_{[t-L,t]})\right] \in \sR^{d+m},
\]
which is the context vector used in the Algorithm \ref{alg:dissigucb}. Under the pruning strategy, we empirically test that Assumption \ref{assum:unified}(ii) holds with diffusion processes (Brownian motion, Geometric Brownian motion, and  Ornstein--Uhlenbeck process); see Section \ref{app:verify_pd} for details.

Under this outstanding assumption, we show that algorithm \texttt{DisSigUCB} achieves a sublinear regret. 
\begin{theorem}[Regret of \texttt{DisSigUCB}]
\label{thm:sigUCB_regret}
Take the assumption as in Proposition~\ref{prop:universal_nonlinear} and Outstanding Assumption \ref{assum:unified}. Denote $S := \max_{a\in[K]}|\beta_a^*|$. Let $\delta \in (0,1)$. Then, with probability at least $1 - 2\delta$, the regret $R(T)$ of \texttt{DisSigUCB}$(\lambda,\gamma,N)$ in Algorithm~\ref{alg:dissigucb} satisfies
\begin{equation}
R(T) \leq C \,\frac{B}{\sqrt{\rho}}
\Bigl(\sqrt{d+m} + S \Bigr)\sqrt{KT \log(KT/\delta)} + \vep.
\end{equation}
Here, Algorithm~\ref{alg:dissigucb} adopts the regularizer $\lambda = 1$ and the exploration coefficient  
\begin{equation}\label{eq: gamma}
    \gamma \coloneqq \sqrt{(d+m)\log\!\left(\frac{K(1 + TB^2)}{\delta}\right)} + S.
\end{equation}
Here $C>0$ is a generic constant, $B$ is given in \eqref{eq:context_bound_B}, $m$ is the signature feature dimension, and $\rho$ is given in Outstanding Assumption \ref{assum:unified}(ii).
\end{theorem}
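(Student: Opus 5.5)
By Theorem~\ref{thm:linear_regret}, it suffices to bound the linearized regret $\tilde R(T) := \sum_{t=L}^T \langle \beta_{a_t^*}^* - \beta_{A_t}^*, \bar x_t\rangle$ up to the additive $\vep$. Here $a_t^*$ maximizes $\langle \beta_a^*,\bar x_t\rangle$. The plan is the standard optimism argument for disjoint LinUCB, with one modification: the elliptical potential lemma is replaced by an eigenvalue lower bound on each arm's Gram matrix. This is what gives $\sqrt{d+m}$ rather than $(d+m)$.

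\textbf{Step 1 (confidence sets).} For each arm $a$, apply the self-normalized martingale bound of \cite{APSLSB:11} to the noise restricted to the rounds $\cT_{a,t}$. The approximation error $\vep/(2T)$ per round is absorbed into the $\vep$ term, or alternatively into $S$. With $\lambda=1$ and $|\bar x_t|\le B$ from Lemma~\ref{prop:sig_l2}, take a union bound over the $K$ arms. This gives, with probability $1-\delta$,
\[
\|\hat\beta_{a,t}-\beta_a^*\|_{\mathbf M_{a,t}} \le \gamma \quad \text{for all } a,t,
\]
with $\gamma$ as in \eqref{eq: gamma}. This is Proposition~\ref{prop:x_beta}.

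\textbf{Step 2 (optimism).} On that event, the UCB choice implies
\[
\langle \beta_{a_t^*}^*-\beta_{A_t}^*,\bar x_t\rangle \le 2\gamma \|\bar x_t\|_{\mathbf M_{A_t,t}^{-1}} \le 2\gamma B\,\lambda_{\min}(\mathbf M_{A_t,t})^{-1/2}.
\]
Summing gives
\[
\tilde R(T)\le 2\gamma B\sum_{a}\sum_{t\in\cT_{a,T+1}} \lambda_{\min}(\mathbf M_{a,t})^{-1/2}.
\]
So the remaining task is to show that $\lambda_{\min}(\mathbf M_{a,t}) \gtrsim \rho\,\tau_{a,t}$ once $\tau_{a,t}$ exceeds a burn-in of order $(B^4/\rho^2)\log(K(d+m)T/\delta)$. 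Given that, $\sum_{j\le \tau}(\rho j)^{-1/2}\lesssim \sqrt{\tau/\rho}$. Cauchy--Schwarz over arms with $\sum_a\tau_{a}\le T$ then gives $\sqrt{KT/\rho}$. The burn-in rounds contribute only lower-order terms, of size at most $2SB$ each. Combining the factor $\gamma\sqrt{\log}$ produces the stated bound.

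\textbf{Step 3 (main obstacle: the eigenvalue bound).} I would write
\[
\sum_{s\in\cT_{a,t}}\bar x_s\bar x_s^\top = \sum_{s\in \cT_{a,t}}\mathbb E[\bar x_s\bar x_s^\top\mid\cF_{s-1}] + \sum_{s\in\cT_{a,t}} Z_s,
\]
where $Z_s := \bar x_s\bar x_s^\top-\mathbb E[\cdot\mid\cF_{s-1}]$ is a matrix martingale difference (the selection indicator $\mathbf 1\{A_s=a\}$ is predictable, since it is $\cF_s$-measurable). Each $Z_s$ satisfies $\|Z_s\|\le 2B^2$. Matrix Freedman/Azuma, with a union bound over arms and times, gives $\|\sum Z_s\|\lesssim B^2\sqrt{\tau\log((d+m)KT/\delta)}$ with probability $1-\delta$.

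The delicate part is the first, predictable sum. Assumption~\ref{assum:unified}(ii) only lower-bounds the average over \emph{all} rounds, not over the rounds where arm $a$ was selected, and the conditional covariances are nonstationary. My plan is to decompose the conditional covariance into the averaged "stationary" part guaranteed by (ii) and a non-stationary remainder. I would try to argue that the selection-weighted average inherits the $\rho$ bound up to constants, and if necessary interpret (ii) along the subsequence $\cT_{a,t}$. This step is where I expect the argument to need the most care, possibly including a slight strengthening of how (ii) is read.

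Finally, intersecting the two events of Steps 1 and 3 yields probability at least $1-2\delta$, which completes the proof.
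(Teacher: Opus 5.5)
\textbf{Verdict: genuine gap in Step 3.} Your Steps 1--2 and the final union bound follow the paper's architecture. That includes the confidence ellipsoid from \cite{APSLSB:11} with a union over arms, optimism giving $2\gamma\|\bar x_t\|_{\mathbf M_{A_t,t}^{-1}}$ per round, eigenvalue growth after a per-arm burn-in $T_0$, $\sum_j j^{-1/2}\le 2\sqrt{\tau}$, and Cauchy--Schwarz over arms. You bound burn-in rounds crudely by $2SB$ each, where the paper uses the elliptical-potential bound on $S_0$ (Lemma~\ref{lemma:S0}); that is fine, since $2SBKT_0$ is only polylogarithmic in $T$. The problem is Step 3, the one nontrivial step, which fails as written for two reasons.

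\textbf{First, the indicator is not predictable.} $\cF_s$-measurability makes $\mathbf 1\{A_s=a\}$ adapted, not $\cF_{s-1}$-measurable. Indeed the UCB rule in Algorithm~\ref{alg:dissigucb} chooses $A_s$ by looking at $\bar x_s$ itself. Consequently
\[
\mathbb E\bigl[\mathbf 1\{A_s=a\}Z_s\mid\cF_{s-1}\bigr]
=\mathbb E\bigl[\mathbf 1\{A_s=a\}\,\bar x_s\bar x_s^\top\mid\cF_{s-1}\bigr]
-\mathbb P(A_s=a\mid\cF_{s-1})\,\mathbb E\bigl[\bar x_s\bar x_s^\top\mid\cF_{s-1}\bigr]
\]
is a selection-bias drift that need not vanish. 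So $\sum_{s\in\cT_{a,t}}Z_s$ is not a matrix martingale, and Freedman cannot be applied to it.

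\textbf{Second, Assumption~\ref{assum:unified}(ii) as stated gives no lower bound on the predictable part.} This holds whether you take your $\sum_{s\in\cT_{a,t}}\mathbb E[\bar x_s\bar x_s^\top\mid\cF_{s-1}]$ or the correctly centered $\sum_{s}\mathbb E[\mathbf 1\{A_s=a\}\bar x_s\bar x_s^\top\mid\cF_{s-1}]$. Condition (ii) controls only the average over \emph{all} rounds. The UCB rule may play $a$ precisely on rounds whose conditional covariance is degenerate, or only for contexts near a lower-dimensional region. So ``the selection-weighted average inherits $\rho$'' is false in general; it would need a covariate-diversity condition that (ii) does not contain.

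\textbf{How the paper handles this.} The paper does not derive this from (ii) either. In the proof of Lemma~\ref{lemma:S1} it applies Lemma~\ref{lemma:min_eig_V} directly to each arm's selected sequence, which implicitly uses (ii) along $\cT_{a,t}$. That is exactly the strengthening you mention as a fallback.

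\textbf{How to close Step 3.} Commit to that reading and re-index time:
\begin{enumerate}
\item Let $s_j$ be the $j$-th round at which $a$ is played (a stopping time), and set $y_j:=\bar x_{s_j}$ and $\mathcal G_j:=\cF_{s_j}$.
\item Assume $\frac{1}{n}\sum_{j\le n}\mathbb E[y_jy_j^\top\mid\mathcal G_{j-1}]\succeq\rho\mathbf I$.
\item Center at $\mathbb E[y_jy_j^\top\mid\mathcal G_{j-1}]$ instead of $\mathbb E[\,\cdot\mid\cF_{s-1}]$.
\end{enumerate}
Then $\Delta_j:=\mathbb E[y_jy_j^\top\mid\mathcal G_{j-1}]-y_jy_j^\top$ is a genuine martingale difference with $\|\Delta_j\|\le 2B^2$. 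It also satisfies $\mathbb E[\Delta_j^2\mid\mathcal G_{j-1}]\preceq B^2\,\mathbb E[y_jy_j^\top\mid\mathcal G_{j-1}]$, so the predictable variation after $k$ steps has norm at most $kB^4$. Matrix Freedman, with a union bound over $k\le T$ and over arms, gives $\lambda_{\max}(\sum_{j\le k}\Delta_j)\le(\frac{4\alpha}{3}+\sqrt{2k\alpha})B^2\le \frac{\rho k}{2}$ for $k\ge T_0$. Here $\alpha=\log((d+m)KT/\delta)$ is used in \eqref{eq:T0}. This is the paper's Lemma~\ref{lemma:min_eig_V}, and with it the rest of your argument goes through. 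Without it, the eigenvalue-growth claim on which your $\sqrt{d+m}$ rate rests remains unproved.
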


Theorem~\ref{thm:sigUCB_regret} establishes a sublinear regret of order $\tilde{O}(\sqrt{\tilde{d}_\Sig T})$ where $\tilde{d}_\Sig := d + m$, where $d$ is the context dimension and $m:=(d+1)^N$ is the depth-$N$ signature feature dimension. Thus, after lifting the contextual windows into the signature space, \texttt{DisSigUCB} attains the same regret scaling as linear and GLM UCB algorithms with feature diversity \cite{DSLCB:24, LLZGLCB:17} and faster than Thompson sampling~\cite{AG:13} {\it without their assumptions} of linear or generalized linear reward structure. Indeed, our setting allows for any continuous reward function that is possibly non-linear and path-dependent. This rate also has the same structural form as kernelized UCB bounds \(\tilde \cO(\sqrt{\tilde d_{\mathrm{ker}}T})\) \cite{VKMFCKCB:13}, but with an explicit signature dimension \(\tilde d_\Sig\) instead of an implicit RKHS effective dimension \(\tilde d_{\mathrm{ker}}\). When \(N=1\), the signature feature dimension is \(\tilde{d}_\Sig=\cO(d)\), recovering the usual \(\tilde\cO(\sqrt{dT})\) scaling. Empirically, \texttt{DisSigUCB} achieves strong performance even when the signature-depth grid is restricted to \(N\leq 3\), yielding an effective regret scaling of \(\tilde\cO(d^{3/2}\sqrt{T})\).

The proof of Theorem~\ref{thm:sigUCB_regret} proceeds in three steps. 
First, Lemma~\ref{lemma:min_eig_V} gives a high-probability lower bound on the smallest eigenvalue of the Gram matrix $\mathbf{M}$ \eqref{eq: M} once an action has been selected sufficiently many times. 
Second, Lemma~\ref{lemma:x_S0_S1} decomposes the cumulative regret into two sums based on how many rounds an action is selected, yielding an early-phase term \(S_0\) and a late-phase term \(S_1\). 
Finally, Lemma~\ref{lemma:S0} establishes a loose upper bound on $S_0$ since each action has only been played a limited number of times while Lemma~\ref{lemma:S1} exploits the growth of the Gram matrix $\mathbf{M}$ \eqref{eq: M} once an action has been played sufficiently many times  using Lemma~\ref{lemma:min_eig_V} to get a tighter bound on $S_1$.


Unlike \cite{CMBOSOM:20}, we do not assume that the contexts are independent of the past or that the conditional covariance matrix remains fixed over time. Our analysis instead allows for weaker conditions with potentially non-stationary covariance dynamics. The main challenge (see Lemma \ref{lemma:min_eig_V}) is handled by separating the Gram matrix into stationary and time-varying contributions, which enables high-probability bounds for the largest eigenvalue of an associated matrix martingale.

\paragraph{Proof of Theorem~\ref{thm:sigUCB_regret}.}  
The first key step is to obtain a high-probability lower bound on the smallest eigenvalue of the Gram matrix once an action has been selected sufficiently many times. The following lemma, adapted from Lemma 7 of \cite{CMBOSOM:20}, provides this bound under Assumption \ref{assum:unified}(ii) while relaxing the martingale difference requirement. Its proof is given in Section~\ref{ssec: proof_lemma:min_eig_V} where we use the matrix Freedman inequality \cite{TFI:11} and separate the Gram matrix into a stationary and non-stationary parts.

\begin{lemma} \label{lemma:min_eig_V} 
Suppose $\{x_t\}_{t=1}^T$ is a stochastic process in $\mathbb{R}^d$ adapted to the filtration $\{\mathcal{F}_t\}$ such that
    \[
   \frac{1}{t} \sum_{s=1}^t \mathbb{E}[x_s x_s^\top | \mathcal{F}_{s-1}] \succeq \rho  \mathbf{I}, \quad \text{for any } t= 1, 2,3,\cdots,
    \]
for some $\rho > 0$. Moreover, assume $|x_t| \leq B$ for all $t$.
Fix $\lambda > 0.$
Then, for any $\delta \in (0,1)$ with probability at least $1 - \delta$, we have
\[
\lambda_{\min}\left( \lambda \mathbf{I} + \sum_{s=1}^t x_s x_s^\top\right) \geq \lambda + \frac{\rho t}{2},
\]
for all $t$ such that $T_0 \leq t \leq T$, where 
\begin{equation}\label{eq:T0}
    T_0 =  \left \lceil \left(\frac{B^2 \sqrt{2\alpha} + B\sqrt{2B^2\alpha + \frac{8\rho\alpha}{3}}}{\rho}\right)^2 \right \rceil,
\end{equation}
and $\alpha = \log(\frac{dT}{\delta})$.
\end{lemma}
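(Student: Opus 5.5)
We split the Gram matrix into a predictable (conditional-expectation) part and a martingale part. Write
\[
\sum_{s=1}^t x_s x_s^\top = \underbrace{\sum_{s=1}^t \mathbb{E}[x_s x_s^\top \mid \mathcal{F}_{s-1}]}_{=:\,\Sigma_t} \;-\; \underbrace{\sum_{s=1}^t \bigl(\mathbb{E}[x_s x_s^\top \mid \mathcal{F}_{s-1}] - x_s x_s^\top\bigr)}_{=:\,Y_t}.
\]
By the hypothesis, $\Sigma_t \succeq \rho t\,\mathbf{I}$ for every $t$. This is the ``stationary'' lower-bound part. The non-stationarity of the conditional covariances only enters through $\Sigma_t$, and it is controlled deterministically by the averaged condition, so it never has to be treated as a martingale. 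The term $Y_t$ is a self-adjoint matrix martingale. Its increments $Z_s := \mathbb{E}[x_s x_s^\top\mid\mathcal{F}_{s-1}] - x_s x_s^\top$ satisfy $\mathbb{E}[Z_s\mid\mathcal{F}_{s-1}]=0$ and $Z_s \preceq \mathbb{E}[x_sx_s^\top\mid\mathcal{F}_{s-1}] \preceq B^2\mathbf{I}$, so $\lambda_{\max}(Z_s)\le B^2$. By Weyl's inequality, $\lambda_{\min}(\lambda\mathbf{I}+\sum_s x_sx_s^\top) \ge \lambda + \rho t - \lambda_{\max}(Y_t)$. It therefore suffices to show that, with probability at least $1-\delta$, $\lambda_{\max}(Y_t)\le \rho t/2$ simultaneously for all $T_0\le t\le T$.

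To bound $\lambda_{\max}(Y_t)$ we use the matrix Freedman inequality \cite{TFI:11}, which needs the predictable quadratic variation $W_t=\sum_{s\le t}\mathbb{E}[Z_s^2\mid\mathcal{F}_{s-1}]$. We have $\mathbb{E}[Z_s^2\mid\mathcal{F}_{s-1}] \preceq \mathbb{E}[(x_sx_s^\top)^2\mid\mathcal{F}_{s-1}] = \mathbb{E}[|x_s|^2 x_sx_s^\top\mid\mathcal{F}_{s-1}] \preceq B^4\mathbf{I}$, so $\|W_t\|\le B^4 t =: \sigma_t^2$. Freedman's inequality then gives, for each fixed $t$,
\[
\mathbb{P}\bigl(\lambda_{\max}(Y_t)\ge u\bigr)\le d\exp\!\Bigl(-\frac{u^2/2}{\sigma_t^2 + B^2u/3}\Bigr).
\]
Alternatively one can apply it with the stopping-time form over all $t$ and the bound $\sigma^2 = B^4 T$, but a per-$t$ application is cleaner here. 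We take a union bound over $t\le T$, requiring each failure probability to be at most $\delta/T$. This means the exponent must be at least $\alpha=\log(dT/\delta)$, which holds once
\[
u^2 \ge 2\alpha\bigl(B^4 t + B^2 u/3\bigr).
\]

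Set $u=\rho t/2$. The condition becomes the quadratic inequality in $\sqrt t$
\[
\tfrac{\rho^2}{4} t \ \ge\ 2\alpha B^4 + \tfrac{\alpha B^2\rho}{3}.
\]
Solving this for the threshold should produce an expression of the form $T_0$ in \eqref{eq:T0}. The displayed $T_0$, the square of $\bigl(B^2\sqrt{2\alpha}+B\sqrt{2B^2\alpha+8\rho\alpha/3}\bigr)/\rho$, is larger than the bare threshold, so it certainly suffices. The main obstacle is this bookkeeping: choosing $\sigma^2$ and the constants in Freedman's bound so that the root of the quadratic matches \eqref{eq:T0} exactly. A looser-but-valid route is to show that $t\ge T_0$ implies the inequality above, using $(a+b)^2\ge a^2+b^2$.

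Finally, on the intersection of the good events, every $t\in[T_0,T]$ satisfies $\lambda_{\max}(Y_t)\le\rho t/2$. Hence $\lambda_{\min}(\lambda\mathbf{I}+\sum_{s\le t}x_sx_s^\top)\ge \lambda+\rho t-\rho t/2=\lambda+\rho t/2$, with probability at least $1-\delta$.
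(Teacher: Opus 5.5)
Your proof is the paper's argument. You use the same split $\lambda\mathbf{I}+\sum_s x_sx_s^\top=\lambda\mathbf{I}+\Sigma_t-Y_t$ with $\Sigma_t\succeq\rho t\,\mathbf{I}$. You apply matrix Freedman to the martingale part with $\|W_t\|\le B^4t$, and you take a union bound over $t\le T$ at level $\delta/T$, which gives $\alpha=\log(dT/\delta)$.

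The differences are minor and mostly in your favour:
\begin{itemize}
\item You use the one-sided bound $\lambda_{\max}(Z_s)\le B^2$, where the paper uses $R=2B^2$.
\item You apply Freedman at each fixed $t$, rather than through the paper's events $E_k$. This is legitimate because $\|W_t\|\le B^4t$ holds almost surely.
\item You plug $u=\rho t/2$ directly into the exponent. This gives the sufficient condition $t\ge t^*:=8\alpha B^4/\rho^2+4\alpha B^2/(3\rho)$. That condition is linear in $t$, not a quadratic in $\sqrt t$ as you call it. The paper instead fixes $u_t=(4\alpha/3+\sqrt{2t\alpha})B^2$ and then solves $u_t\le\rho t/2$ as a quadratic in $\sqrt t$. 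That is exactly where the displayed $T_0$ comes from, and why your computation does not reproduce it verbatim.
\end{itemize}

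One step needs correcting: the fallback check via $(a+b)^2\ge a^2+b^2$ fails. Here $a=B^2\sqrt{2\alpha}/\rho$ and $b=B\sqrt{2B^2\alpha+8\rho\alpha/3}/\rho$.
\begin{itemize}
\item The inequality $(a+b)^2\ge a^2+b^2$ only yields $T_0\ge 4\alpha B^4/\rho^2+8\alpha B^2/(3\rho)$.
\item This is strictly smaller than $t^*$. The difference is $\frac{4\alpha B^2}{\rho}\bigl(\frac{B^2}{\rho}-\frac13\bigr)>0$, because $\rho\le B^2$ (take $t=1$ in the hypothesis).
\item You need the cross term. Since $b\ge a$, we have $(a+b)^2\ge 3a^2+b^2=8\alpha B^4/\rho^2+8\alpha B^2/(3\rho)\ge t^*$.
\end{itemize}
So $T_0\ge t^*$, and your conclusion $\lambda_{\min}\ge\lambda+\rho t/2$ holds for all $T_0\le t\le T$ with probability at least $1-\delta$.
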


Second, the following key lemma decomposes the cumulative regret into two sums, $S_0$ and $S_1$, based on how many rounds an action is selected. For initial rounds, we establish a loose upper bound since each action has only been played a limited number of times. Once an action gets selected a sufficient number of times, we exploit the growth of the Gram matrix $\mathbf{M}$ \eqref{eq: M} to get a tighter bound by Lemma~\ref{lemma:min_eig_V}. The proof of the lemma is provided in Section~\ref{ssec:x_S0_S1}.

\begin{lemma} \label{lemma:x_S0_S1}
Suppose that Assumption \ref{assum:unified} holds. With $T_0$ given in \eqref{eq:T0},
for any action $a \in [K]$, let $\cT_{a,\epsilon} \subseteq \cT_{a,T+1}$ be the first $\min\{T_0,\tau_{a,T+1}\}$ elements of $\cT_{a,T+1}$ and let $\cT_{a,T+1}\setminus \cT_{a,\epsilon}$ denote the remaining elements. Let $\cA := \{a \in [K]: \tau_{a, T+1} > T_0\}$ denote the set of actions that are selected more than $T_0$ times. Let $\delta \in (0,1)$. Then with probability at least $1-\delta$, one can upper bound the cumulative regret $R(T)$ as follows,
\begin{equation}
\label{eq:RT_UB_S0_S1}
R(T) \leq 2\gamma \left(\sum_{a=1}^K \sum_{s \in \cT_{a, \epsilon}} \|\bar{x}_s\|_{\mathbf{M}_{a,s}^{-1}} + \sum_{a \in \cA} \sum_{s \in \cT_{a, T+1} \setminus \cT_{a, \epsilon}} \|\bar{x}_s\|_{\mathbf{M}_{a, s}^{-1}} \right) = 2\gamma (S_0 + S_1) + {\vep},
\end{equation}
where $S_0 := \sum_{a=1}^K \sum_{s \in \cT_{a, \epsilon}} \|\bar{x}_s\|_{\mathbf{M}_{a,s}^{-1}}$, $S_1 := \sum_{a \in \cA} \sum_{s \in \cT_{a, T+1} \setminus \cT_{a, \epsilon}} \|\bar{x}_s\|_{\mathbf{M}_{a, s}^{-1}}$, and $\gamma$ is given in \eqref{eq: gamma}.
\end{lemma}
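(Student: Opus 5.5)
The plan is the standard optimism argument for \texttt{LinUCB}, combined with the approximation bound of Theorem~\ref{thm:linear_regret}.

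\textbf{Step 1 (reduction to the linear surrogate).} By Theorem~\ref{thm:linear_regret}, on the high-probability event of Outstanding Assumption~\ref{assum:unified}(i) we have
\[
R(T)\le \sum_{t=L}^T \bigl(\langle \beta^*_{a_t^*},\bar x_t\rangle-\langle\beta^*_{A_t},\bar x_t\rangle\bigr)+\vep ,
\]
where $a_t^*\in\arg\max_a\langle\beta_a^*,\bar x_t\rangle$. It therefore suffices to bound the linear instantaneous regret by $2\gamma\|\bar x_t\|_{\mathbf{M}_{A_t,t}^{-1}}$.

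\textbf{Step 2 (confidence ellipsoid).} I will invoke the self-normalized bound of \cite{APSLSB:11}, applied separately to each arm $a$ and union-bounded over the $K$ arms with $\delta/K$ each. This gives, with probability at least $1-\delta$, for all $a$ and $t$ simultaneously,
\[
\|\hat\beta_{a,t}-\beta_a^*\|_{\mathbf{M}_{a,t}}\le \sqrt{(d+m)\log\bigl(K(1+TB^2)/\delta\bigr)}+S=\gamma .
\]
The inputs are $\lambda=1$, $|\bar x_s|\le B$ (Lemma~\ref{prop:sig_l2}), and the determinant bound $\det\mathbf{M}_{a,t}\le(1+TB^2/(d+m))^{d+m}$.

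The rewards satisfy $r_s=\langle\beta_a^*,\bar x_s\rangle+b_s+\eta_s$, with misspecification $|b_s|\le\vep/(2T)$. The bias contribution is $\|\mathbf{M}_{a,t}^{-1/2}\sum_s \bar x_s b_s\|$, which I would bound by $\sqrt{t}\,\vep/(2T)$ via a Cauchy--Schwarz / projection argument. This is negligible and can be absorbed either into $\gamma$ up to constant adjustment, or into the $\vep$ term, as the statement does. By Cauchy--Schwarz, the ellipsoid bound then yields Proposition~\ref{prop:x_beta}:
\[
|\langle\bar x_t,\hat\beta_{a,t}-\beta_a^*\rangle|\le\gamma\|\bar x_t\|_{\mathbf{M}_{a,t}^{-1}} .
\]

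\textbf{Step 3 (optimism).} On this event, the UCB selection rule gives
\[
\langle\beta^*_{a_t^*},\bar x_t\rangle\le \langle\hat\beta_{a_t^*,t},\bar x_t\rangle+\gamma\|\bar x_t\|_{\mathbf{M}_{a_t^*,t}^{-1}}\le \langle\hat\beta_{A_t,t},\bar x_t\rangle+\gamma\|\bar x_t\|_{\mathbf{M}_{A_t,t}^{-1}}\le \langle\beta^*_{A_t},\bar x_t\rangle+2\gamma\|\bar x_t\|_{\mathbf{M}_{A_t,t}^{-1}} .
\]
Summing over $t$ and regrouping rounds by the chosen arm gives $\sum_a\sum_{s\in\cT_{a,T+1}}\|\bar x_s\|_{\mathbf{M}_{a,s}^{-1}}$. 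Splitting each $\cT_{a,T+1}$ into its first $\min\{T_0,\tau_{a,T+1}\}$ elements $\cT_{a,\epsilon}$ and the remainder yields $S_0+S_1$. The remainder is nonempty only for $a\in\cA$. This gives \eqref{eq:RT_UB_S0_S1}.

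\textbf{Main obstacle.} The delicate step is Step 2. The samples for arm $a$ are indexed by the random set $\cT_{a,t}$, so one must check that $\eta_s\mathbf 1\{A_s=a\}$ is still conditionally subgaussian with respect to the filtration $\mathcal{F}_{s+1}$. This holds because $A_s$ is $\mathcal{F}_s$-measurable, so the method-of-mixtures martingale argument applies. The misspecification bias must also be handled carefully so that it contributes only $\vep$ overall. I would first try to bound the total bias effect directly by $\sum_t \vep/T\le\vep$, by carrying $b_s$ through the optimism chain rather than through the ellipsoid.
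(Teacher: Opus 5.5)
Your Steps 1--3 are the paper's argument. The approximation error \eqref{eq:exact_linear_cost} costs at most $\vep$. Lemma~\ref{prop:x_beta} comes from Lemma~\ref{thm:self_bound_martingale} applied per arm at level $\delta/K$. The UCB rule gives the per-round bound $2\gamma\|\bar x_t\|_{\mathbf{M}_{A_t,t}^{-1}}$, and regrouping by arm and cutting each $\cT_{a,T+1}$ after $\min\{T_0,\tau_{a,T+1}\}$ elements gives $S_0+S_1$. Your remark that $\bar x_s\mathbf{1}\{A_s=a\}$ is $\mathcal{F}_s$-measurable, so the self-normalized bound applies to the per-arm subsequence, is correct and is left implicit in the paper.

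Where you go beyond the paper is the misspecification bias $b_s$. The paper's proof ignores it: it invokes Lemma~\ref{thm:self_bound_martingale} for the exactly linear model. Your estimate $\|\mathbf{M}_{a,t}^{-1/2}\mathbf{X}_{a,t}^\top b\|\le\|b\|_2\le\sqrt{\tau_{a,t}}\,\vep/(2T)$ is right. However, your two suggestions to charge this to the additive $\vep$, or to carry $b_s$ through the optimism chain as $\sum_t \vep/T$, do not work.

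\begin{itemize}
\item The additive $\vep$ only pays for the \emph{current} round's approximation error, which is your Step 1.
\item The \emph{past} biases enter through $\hat\beta_{a,t}$ and shift the prediction at $\bar x_t$ by $\bar x_t^\top\mathbf{M}_{a,t}^{-1}\mathbf{X}_{a,t}^\top b$. The only general bound on this is $\|\bar x_t\|_{\mathbf{M}_{a,t}^{-1}}\sqrt{\tau_{a,t}}\,\vep/(2T)$.
\item The $\sqrt{\tau_{a,t}}$ amplification is genuine. With $\lambda=1$, nearly collinear features carrying biases of opposite sign already give an error of order $\sqrt{\tau_{a,t}}\,\vep/T$ in the transverse direction.
\item Consequently the best available bound on the accumulated effect is of order $\vep\sqrt{(d+m)K}$ up to logarithms, not $\vep$.
\end{itemize}

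The bias therefore has to sit in the confidence radius. The enlarged radius $\gamma'=\gamma+\vep/(2\sqrt{T})$ must also be the one the algorithm uses. If it appears only in the analysis, the optimism step leaves a term $(\gamma'-\gamma)\|\bar x_t\|_{\mathbf{M}_{a_t^*,t}^{-1}}$ for an arm not played at round $t$, and $S_0+S_1$ does not control it.

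You have two ways to repair this:
\begin{itemize}
\item Run \texttt{DisSigUCB} with exploration coefficient $\gamma'$, and read the lemma with $\gamma'$ in place of $\gamma$.
\item Check that \eqref{eq: gamma} exceeds the sharp radius $\sqrt{2\log(K/\delta)+\log\det\mathbf{M}_{a,t}}+S$ by at least $\vep/(2\sqrt{T})$. That slack is strictly positive when $d+m\ge 2$, so this works for small enough $\vep$.
\end{itemize}
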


In the following lemma, we provide an upper bound for $S_0$ in Lemma \ref{lemma:x_S0_S1}, whose proof is in Section~\ref{ssec:proof_S0}.
\begin{lemma} 
\label{lemma:S0}
Suppose that Assumption \ref{assum:unified} holds
and $\lambda \leq \max(1, B^2)$. Then 
$$S_0 \leq \sqrt{2KT_0(d+m)\log\left(1 + \frac{T_0B^2}{\lambda(d+m)}\right)},$$  
where $T_0$ is given by \eqref{eq:T0}.
\end{lemma}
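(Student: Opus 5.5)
We bound $S_0$ one action at a time with the standard elliptical potential argument and then combine the actions by Cauchy--Schwarz. Fix $a\in[K]$ and list $\cT_{a,\epsilon}=\{s_1<s_2<\dots<s_{n_a}\}$, where $n_a:=\min\{T_0,\tau_{a,T+1}\}\le T_0$.

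\textbf{Recursive structure of the Gram matrix.} By Algorithm~\ref{alg:dissigucb}, $\mathbf{M}_{a,\cdot}$ changes only at rounds where $a$ is played. Hence
\[
\mathbf{M}_{a,s_j}=\lambda\mathbf{I}+\sum_{i<j}\bar x_{s_i}\bar x_{s_i}^\top=:V_{j-1},
\]
which gives the rank-one recursion $V_j=V_{j-1}+\bar x_{s_j}\bar x_{s_j}^\top$ with $V_0=\lambda\mathbf{I}_{d+m}$.

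\textbf{Per-action bound.} Cauchy--Schwarz gives
\[
\sum_{j\le n_a}\|\bar x_{s_j}\|_{V_{j-1}^{-1}}\le\sqrt{n_a\sum_{j}\|\bar x_{s_j}\|^2_{V_{j-1}^{-1}}}.
\]
We then need to handle terms that may exceed $1$. By Lemma~\ref{prop:sig_l2}, $|\bar x_s|\le B$, so
\[
\|\bar x_{s_j}\|^2_{V_{j-1}^{-1}}\le B^2/\lambda .
\]
The hypothesis $\lambda\le\max(1,B^2)$ is meant to control this ratio, so that $u\le c\log(1+u)$ on $[0,B^2/\lambda]$ with $c=2$. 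If this clean constant fails, we fall back on $\min(1,u)\le 2\log(1+u)$ together with a case split, absorbing the constant into the bound. Using $\log\det V_j=\log\det V_{j-1}+\log(1+\|\bar x_{s_j}\|^2_{V_{j-1}^{-1}})$ (matrix determinant lemma), the sum telescopes:
\[
\sum_j\|\bar x_{s_j}\|^2_{V_{j-1}^{-1}}\le 2\log\frac{\det V_{n_a}}{\det V_0}.
\]
AM--GM on the eigenvalues, with $\operatorname{tr}V_{n_a}\le\lambda(d+m)+n_aB^2$, then yields
\[
2\log\frac{\det V_{n_a}}{\det V_0}\le 2(d+m)\log\Bigl(1+\frac{T_0B^2}{\lambda(d+m)}\Bigr),
\]
where we used $n_a\le T_0$ and monotonicity.

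\textbf{Combining actions.} Let $\Lambda:=2(d+m)\log\bigl(1+\frac{T_0B^2}{\lambda(d+m)}\bigr)$. Summing over $a$,
\[
S_0\le\sum_a\sqrt{n_a\Lambda}\le\sqrt{\Lambda}\,\sqrt{K\textstyle\sum_a n_a}
\]
by Cauchy--Schwarz over $K$ terms. To reach the stated $\sqrt{2KT_0(\cdot)}$ form, we instead use $n_a\le T_0$ directly, so $\sum_a\sqrt{n_a\Lambda}\le K\sqrt{T_0\Lambda}$. Since this bound is linear in $K$, we would instead pool the Cauchy--Schwarz over all pairs $(a,j)$, using $\sum_a n_a\le KT_0$, and sum the per-action log-determinants. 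That route gives a factor $\sqrt{KT_0\cdot K\Lambda}$, which is still off by $\sqrt K$.

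\textbf{Main obstacle.} The principal difficulty is this bookkeeping of $K$. Matching the stated bound exactly, rather than up to a $\sqrt K$ factor, seems to require using the per-action $\sqrt{n_a\Lambda}$ together with $\sum_a\sqrt{n_a}\le\sqrt{K\sum_a n_a}$ and then bounding $\sum_a n_a$ by $T_0$ rather than $KT_0$. That final bound does not obviously hold, so we will check whether the intended argument bounds $n_a$ or $\sum_a n_a$. The elliptical-potential step is routine; the secondary point to verify is that the hypothesis $\lambda\le\max(1,B^2)$ really yields the constant $2$.
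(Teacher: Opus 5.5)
Your argument is the paper's argument: a per-action elliptical potential bound (the paper cites Lemmas 10 and 11 of Abbasi-Yadkori et al.\ for exactly your telescoping and AM--GM steps), then Cauchy--Schwarz within each action and across actions. The two places where you stall are errors in the paper's proof, not gaps in your reasoning, and the inequality cannot be proved as stated. On the $K$-bookkeeping, the paper's last line asserts $\sqrt{\sum_a\tau_{a,\epsilon}}\,\sqrt{2K(d+m)\log(\cdot)}\le\sqrt{2KT_0(d+m)\log(\cdot)}$, which silently uses $\sum_a\tau_{a,\epsilon}\le T_0$. Since $\tau_{a,\epsilon}=\min\{T_0,\tau_{a,T+1}\}$, you only have $\sum_a\tau_{a,\epsilon}\le\min\{KT_0,T\}$. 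This equals $KT_0$ as soon as every action is played $T_0$ times, i.e.\ when $\cA=[K]$. So this route actually yields your $S_0\le K\sqrt{T_0\Lambda}$, or equivalently $\sqrt{K\min\{KT_0,T\}\Lambda}$. The per-action elliptical-potential bound is tight up to the logarithm, so you should not expect a smarter summation to recover the missing $\sqrt K$.

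Your doubt about the constant $2$ is also justified. Lemma 11 of Abbasi-Yadkori et al.\ requires $\lambda_{\min}(V)\ge\max(1,B^2)$, which forces every $\|\bar x_s\|^2_{\mathbf M_{a,s}^{-1}}\le 1$; the paper's Lemma~\ref{lemma:sum_star_ineq} and the hypothesis here reverse that inequality. Under $\lambda\le\max(1,B^2)$ the statement is actually false. The first play of any action has $\mathbf M_{a,s}=\lambda\mathbf I$ and $|\bar x_s|\ge 1$ (from the constant signature coordinate), so $S_0\ge\lambda^{-1/2}$. The right-hand side, by contrast, grows only like $\sqrt{\log(1/\lambda)}$ as $\lambda\to 0$ with $T_0$ and $B$ fixed.

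The correct form of what you proved is this: for $\lambda\ge\max(1,B^2)$, $S_0\le K\sqrt{2T_0(d+m)\log\bigl(1+\tfrac{T_0B^2}{\lambda(d+m)}\bigr)}$. For general $\lambda$, including the $\lambda=1$ of Theorem~\ref{thm:sigUCB_regret}, use $u\le\max(1,B^2/\lambda)\min(1,u)\le 2\max(1,B^2/\lambda)\log(1+u)$ on $[0,B^2/\lambda]$; this puts an extra factor $\max(1,B^2/\lambda)$ under the square root. Since $T_0=\mathcal O(\log(T/\delta))$ with constants depending on $B$ and $\rho$, both corrections affect only the lower-order term of the regret bound. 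The leading $\tilde{\mathcal O}(\sqrt{(d+m)KT})$ rate survives.
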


Next, we upper bound $S_1$ in Lemma \ref{lemma:x_S0_S1}, whose proof is in Section~\ref{ssec:proof_S1}.
\begin{lemma}
\label{lemma:S1}
    $S_1 \leq \frac{2B}{\sqrt{\rho}} \sqrt{2KT}$ under Assumption \ref{assum:unified} with probability at least $1 - \delta$. 
\end{lemma}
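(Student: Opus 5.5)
The plan is to work action by action and use Lemma~\ref{lemma:min_eig_V} to control every summand in $S_1$ pointwise. Fix $a\in\cA$ and write $\cT_{a,T+1}=\{s_1<s_2<\dots<s_{\tau}\}$ with $\tau=\tau_{a,T+1}>T_0$. The rounds contributing to $S_1$ are $s_j$ with $j>T_0$. For such a round the Gram matrix is
\[
\mathbf{M}_{a,s_j}=\lambda\mathbf{I}+\sum_{i<j}\bar x_{s_i}\bar x_{s_i}^\top,
\]
a ridge Gram matrix built from $j-1\ge T_0$ vectors. I would apply Lemma~\ref{lemma:min_eig_V} (with dimension $d+m$ in place of $d$) to the subsequence $y_i:=\bar x_{s_i}$. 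This subsequence is adapted to the filtration $\mathcal G_i:=\mathcal F_{s_i}$, and $|y_i|\le B$ by Lemma~\ref{prop:sig_l2}. The lemma then gives, with probability at least $1-\delta$,
\[
\lambda_{\min}(\mathbf{M}_{a,s_j})\ge \lambda+\rho(j-1)/2\ge \rho (j-1)/2 .
\]

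Given this, each term satisfies
\[
\|\bar x_{s_j}\|_{\mathbf{M}_{a,s_j}^{-1}}\le |\bar x_{s_j}|\,\lambda_{\min}(\mathbf{M}_{a,s_j})^{-1/2}\le B\sqrt{2/(\rho(j-1))}.
\]
Summing over $j$ from $T_0+1$ to $\tau$ and using $\sum_{j\le n}j^{-1/2}\le 2\sqrt n$ gives a per-action bound of at most $\frac{2B}{\sqrt{\rho}}\sqrt{2\tau_{a,T+1}}$. Summing over $a\in\cA$ and applying Cauchy--Schwarz with $\sum_a\tau_{a,T+1}\le T$ yields $\sum_a\sqrt{\tau_{a,T+1}}\le\sqrt{KT}$. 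This produces the claimed bound $S_1\le \frac{2B}{\sqrt\rho}\sqrt{2KT}$.

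The main obstacle is justifying the hypothesis of Lemma~\ref{lemma:min_eig_V} along the random subsequence of rounds where $a$ is played. That is, we need
\[
\frac1n\sum_{i\le n}\mathbb E[y_iy_i^\top\mid\mathcal G_{i-1}]\succeq\rho\mathbf I .
\]
Assumption~\ref{assum:unified}(ii) is stated for the full sequence, and the selection event $\{A_s=a\}$ depends on $\bar x_s$ itself. I would argue, as in \cite{CMBOSOM:20}, that the eigenvalue bound is applied to the (conditionally) uniform-in-$t$ statement and interpret Assumption~\ref{assum:unified}(ii) as holding for the feature sequence fed to each arm's regression. If necessary, I would add the phrase ``as in the assumption, for the per-arm subsequence'' rather than derive it.

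A second, milder point is the failure probability. The lemma must hold simultaneously for all $K$ arms. I would handle this with a union bound, replacing $\delta$ by $\delta/K$ inside $\alpha=\log(dT/\delta)$. This only changes $T_0$ by a logarithmic factor and leaves the bound on $S_1$ itself unchanged, since $S_1$ does not depend on $T_0$ beyond starting the sum at $j>T_0$.
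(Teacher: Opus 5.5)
Your proposal is correct and follows the paper's proof step for step. The paper also applies Lemma~\ref{lemma:min_eig_V} to each action with failure probability $\delta/K$ and takes a union bound, bounds each summand by $B\sqrt{2}/\sqrt{\rho\,\tau_{a,s}}$ through $\lambda_{\min}(\mathbf{M}_{a,s})$, and then sums using $\sum_{j\le n} j^{-1/2}\le 2\sqrt{n}$ and Cauchy--Schwarz over actions. The per-arm non-degeneracy issue you flag is not addressed in the paper either, which simply invokes Lemma~\ref{lemma:min_eig_V} for each action's Gram matrix. So your explicit reading of Assumption~\ref{assum:unified}(ii) as holding along each arm's subsequence, together with your remark that the $\delta/K$ union bound requires $T_0$ to be computed with a $\log(dKT/\delta)$ factor, states precisely what the paper's argument tacitly assumes.
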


Now that we have constructed upper bounds for the sums $S_0$ (Lemma \ref{lemma:S0}) and $S_1$ (Lemma \ref{lemma:S1}), we can finish the proof for Theorem \ref{thm:sigUCB_regret}.
\begin{proof}[\unskip\nopunct]
By a union bound over the high-probability events in Lemmas~\ref{lemma:x_S0_S1} and~\ref{lemma:S1}, both bounds hold simultaneously with probability at least \(1-2\delta\). Substituting the bounds from Lemmas \ref{lemma:S0} and \ref{lemma:S1} into \eqref{eq:RT_UB_S0_S1}, we obtain
\[
R(T) \leq 2\gamma\left(\sqrt{2KT_0(d+m)\log\left(1 + \frac{T_0B^2}{\lambda(d+m)}\right)} +  \frac{2B}{\sqrt{\rho}}  \sqrt{2KT} \right) + {\vep}.
\]
Recall that $\gamma = \sqrt{(d+m)\log\!\left(\frac{K(1 + TB^2)}{\delta}\right)} + S$ \eqref{eq: gamma} where we choose $\lambda = 1$ as in \texttt{DisSigUCB}. Furthermore, for $T$ sufficiently large, $T_0 \leq T$ since $T_0  = \mathcal{O}\left(\log\!\left(\frac{(d+m)T}{\delta}\right)\right)$. Writing only the asymptotically dominating terms in $T$ gives
\[
R(T) \lesssim \frac{B}{\sqrt{\rho}}\left(\sqrt{(d+m)\log\left(\frac{KTB^2}{\delta}\right)} + S\right)\sqrt{KT} ~+ \vep
\]
with probability at least $1 - 2\delta$. This implies that
\[
R(T) \leq \tilde{\mathcal{O}}\left(\sqrt{(d+m)KT} ~+ \vep\right),
\]
where $\tilde{\mathcal{O}}$ hides poly-logarithmic factors in $K,T,\delta$ and constants depending on $B,\rho,S$.
\end{proof}

\section{Detailed Proofs of Technical Lemmas}\label{sec:detailed_proofs}
In this section, we include the proofs of the lemmas and propositions in Section \ref{sec:regret}. 
\subsection{Proof of Proposition \ref{prop:prune_time}}\label{sec:sig_prune}


An important operation on words, which reflects the multiplicative structure of signatures, is the shuffle product defined below. The shuffle product characterizes the multiplicative structure of signature coordinates, as stated in the following lemma.

\begin{definition}[Shuffle product]
Let $u = (i_1,\dots,i_k)$ and $v = (j_1,\dots,j_\ell)$ be two words. The \emph{shuffle product} $u \shuffle v$ is defined as the formal sum of all words obtained by interleaving $u$ and $v$ while preserving the relative order of the letters within $u$ and $v$. That is,
\[
u \shuffle v := \sum_{\sigma \in \mathrm{Sh}(k,\ell)} (w_{\sigma(1)}, \dots, w_{\sigma(k+\ell)}),
\]
where $w = (i_1,\dots,i_k,j_1,\dots,j_\ell)$ and $\mathrm{Sh}(k,\ell)$ denotes the set of $(k,\ell)$-shuffles, i.e., permutations $\sigma$ of $\{1,\dots,k+\ell\}$ such that
\[
\sigma^{-1}(1) < \cdots < \sigma^{-1}(k), \quad
\sigma^{-1}(k+1) < \cdots < \sigma^{-1}(k+\ell).
\]
\end{definition}

\begin{lemma}[Shuffle identity \cite{CK:25}] \label{lemma:shuffle}
Let $\hat X$ be a path and let $u, v$ be words. Then the signature satisfies the shuffle identity
\[
\Sig^u(\hat X)\,\Sig^v(\hat X) \;=\; \Sig^{u \shuffle v}(\hat X),
\]
i.e.,
\[
\Sig^u(\hat X)\,\Sig^v(\hat X)
= \sum_{w \in u \shuffle v} \Sig^w(\hat X).
\]
\end{lemma}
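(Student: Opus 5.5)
We prove the shuffle identity directly from the definition of signature coordinates as iterated integrals over simplices, by induction on the total length $|u|+|v|$. It is convenient to work with the running signature on $[0,t]$, i.e. to set $S^w_t := \Sig^w(\hat X_{[0,t]})$. The signature of the full window is then $S^w_L$.

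The definition of the iterated integral gives the recursion
\[
S^{(i_1,\dots,i_k)}_t=\int_0^t S^{(i_1,\dots,i_{k-1})}_s\,\d \hat X^{i_k}_s .
\]
This holds with the convention $S^{\emptyset}\equiv 1$, and the shuffle product satisfies $u\shuffle\emptyset=\emptyset\shuffle u=u$. We also use the matching recursive description of the shuffle product: if $u=u'i$ and $v=v'j$ (writing $u'i$ for $u'$ with the letter $i$ appended), then
\[
u\shuffle v=(u'\shuffle v)\,i+(u\shuffle v')\,j .
\]
Here appending a letter is extended linearly to formal sums. This description follows from the interleaving definition by sorting the shuffles according to whether the last letter of the merged word comes from $u$ or from $v$.

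The induction proceeds as follows.

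\textbf{Base case.} If $u$ or $v$ is the empty word, the identity is trivial.

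\textbf{Inductive step.} Apply the integration-by-parts (product) rule to the two running integrals $S^u_t=\int_0^t S^{u'}_s\,\d\hat X^i_s$ and $S^v_t=\int_0^t S^{v'}_s\,\d\hat X^j_s$. This gives
\[
S^u_tS^v_t=\int_0^t S^{u'}_sS^v_s\,\d\hat X^i_s+\int_0^t S^u_sS^{v'}_s\,\d\hat X^j_s .
\]
In each integrand the total word length has dropped by one. The induction hypothesis therefore rewrites the integrands as $\sum_{w\in u'\shuffle v}S^w_s$ and $\sum_{w\in u\shuffle v'}S^w_s$. The recursion then turns $\int_0^t S^w_s\,\d\hat X^i_s$ into $S^{wi}_t$. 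The right-hand side becomes $\sum_{w\in u\shuffle v}S^w_t$ by the recursive shuffle formula. Setting $t=L$ gives the claim.

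The main obstacle is justifying the product rule for paths that are only of finite $p$-variation. There are two cases.

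\begin{itemize}
\item For $p<2$ the integrals are Young integrals. The product rule holds exactly as in the classical case: it is true for piecewise-linear (bounded variation) approximations, and it passes to the limit by continuity of the Young integral.
\item For $p\ge2$, the standing assumption in Proposition~\ref{prop:universal_nonlinear} is that the path admits a well-defined signature, meaning a geometric rough path lift. Geometric rough paths are by definition $p$-variation limits of signatures of smooth paths. The identity holds for every smooth approximant, and both sides are finite polynomial expressions in the (continuous) truncated signature, so the identity passes to the limit.
\end{itemize}

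I would therefore present the proof first for bounded-variation paths, including the time coordinate $\hat X^0_t=t$, which is smooth. I would then extend it by this density and continuity argument, which covers e.g. Stratonovich lifts of semimartingales. For Itô signatures the identity would fail, so geometricity is essential.
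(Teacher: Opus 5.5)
Your argument is correct, but the paper gives no proof to compare it with: Lemma \ref{lemma:shuffle} is imported by citation from \cite{CK:25} and stated for an unspecified ``path''.

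What you give is the standard proof. You induct on $|u|+|v|$ using integration by parts for the running signatures. Both the iterated-integral recursion and the shuffle recursion $u'i\shuffle v'j=(u'\shuffle v'j)\,i+(u'i\shuffle v')\,j$ peel off the last letter, which matches the paper's convention that $i_k$ is the outermost integration variable.

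Your route makes explicit a regularity hypothesis that the citation leaves implicit. For $p\ge 2$ the identity holds only for a geometric (e.g.\ Stratonovich) lift. It fails for the It\^o signature, which satisfies a quasi-shuffle rule instead. This matters because the paper admits Brownian-type contexts with $p>2$ and then relies on the shuffle identity in Proposition \ref{prop:prune_time}.

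Two points need tightening.
\begin{enumerate}
\item In the Young case, piecewise-linear approximants in general converge in $p'$-variation for $p<p'<2$, not in $p$-variation. That still suffices for continuity of the Young integral. Alternatively, apply Young integration by parts and associativity directly, with no approximation.
\item In the rough case, the words in $u\shuffle v$ have length $|u|+|v|$, which can exceed $\lfloor p\rfloor$. Passing to the limit therefore needs continuity of Lyons' extension map up to level $|u|+|v|$, not just convergence of the lifts at level $\lfloor p\rfloor$. That is the result your parenthetical ``(continuous)'' should cite.
\end{enumerate}
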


\begin{proof}[Proof of Proposition \ref{prop:prune_time}] \label{ssec:proof_sig_prune}
To ease the notation, we denote $S^w := S^w(\hat{X})$ for any word $w$. 
   
For a word $w$ that contains only time coordinates, $S^w$ is a constant, hence in $\text{span}(1) \subseteq \text{span}(B_N).$

Now fix a word $w$ with $|w| \leq N$ that (i) contains at least one coordinate $j \neq 0$ and (ii) ends with time coordinate $0$. Note that $w$ can be uniquely written as $w = u0^m$ for some non-empty $u$ which does not end with 0 and $1 \leq m \leq |w|-1$. 

We prove $w \in B_N$ by induction on $m$. First, for $m=1$, we have $w = u0$. By the shuffle identity in Lemma \ref{lemma:shuffle}, we have
\[
S^u S^0 = \sum_{v \in u \shuffle 0} S^v = S^w + \sum_{\substack{v\in u\shuffle 0 \\ v\neq u0}} S^v,
\]
where in the second equality we separate the shuffle $v = u0$ from the sum. For any $v \in u \shuffle 0$ with $v \neq u0$, the word $v$ does not end in 0 and still contains a channel different from 0 (coming from $u$), so $S^v$ belongs to $B_N$. Furthermore, $S^u \in B_N$ since $u$ is non-empty and does not end with 0. Finally, $S^0 = b-a$ is a scalar constant by definition, which implies that $S^w$ is a linear combination of elements of $B_N$, i.e. $S^w \in \text{span}(B_N)$.

Now, suppose by induction that for $m>1$, any $S^w$ with $w = u0^m$ lies in $\text{span}(B_N)$. We aim to show that $S^{w'}$ with $w' = u0^{m+1}$ also belongs to $\text{span}(B_N)$. By shuffle identity in Lemma \ref{lemma:shuffle}, we have
    \[
    S^{u0^m} S^0 = \sum_{v \in u0^m \shuffle 0} S^v = (m+1) S^{w'} + \sum_{v \in \sigma(u, 0) 0^m} S^v
    \]
    where $\sigma(u,0)$ denotes the set of words obtained by inserting a single $0$ in $u$, but not after its last letter. In the second equality, we separate the $m+1$ shuffles where 0 is inserted into the final block $0^m$, all of which give the same word $w'=u0^{m+1}$. Each word $v\in \sigma(u,0)0^m$ has the form $\tilde u\,0^m$, where $\tilde u$ is non-empty and does not end in $0$ and the last letter of $v$ is still $0$. Thus $v$ satisfies the induction hypothesis, which implies that $S^v\in \mathrm{span}(B_N)$ for all $v\in \sigma(u,0)\,0^m$. By the induction hypothesis we also have $S^{u0^m}\in \mathrm{span}(B_N)$. Using the equality above and the fact that $S^0 = b-a$ is a scalar constant, we conclude that $S^{w'}\in\mathrm{span}(B_N)$.
\end{proof}

\subsection{Proof of Lemma \ref{prop:sig_l2}} \label{ssec:proof_sig_l2}
We first state the following useful lemma, which provides the factorial decay bounds needed to control the truncated signature. 
\begin{lemma}[Lyons' Extension Theorem \cite{LV:07}] \label{thm:lyons_ext} Let $X \in \pathspaceL$ and suppose $X$ satisfies Assumption~\ref{assum:unified}(i), so that its time-augmented signature is well-defined. Then there exists a constant $C_p>0$, depending only on \(p\), such that for any level \(k=1,\dots,N\), word $v \in \{0,\dots,d\}^k$, and $0 \leq s \leq t \leq T$,
\[
\bigl|\Sig^v(\hat X_{[s,t]})\bigr|
\le
\frac{C_p\,\omega(s,t)^{k/p}}{(k/p)!},
\]
where $\omega$ is a control function defined by
\[
\omega(s,t)
:=
\sum_{m=1}^{\lfloor p\rfloor}
\sup_{\mathcal P_{[s,t]}=\{t_0<\cdots<t_n\}}
\sum_{l=1}^n
\left|
\left\{\Sig^{(i_1,\dots,i_m)}(\hat X_{[t_{l-1},t_l]})\right\}_{i_1,\dots,i_m=0}^d
\right|^{\frac{p}{m}}, \qquad
0 \leq s \leq t \leq T.
\]
\end{lemma}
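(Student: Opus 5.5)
The plan is to derive the bound from Lyons' extension theorem in the form of \cite{LV:07}. That theorem says the following. Suppose a multiplicative functional $(1,\mathbb X^1,\dots,\mathbb X^{\lfloor p\rfloor})$ satisfies, for some control $\omega$ (continuous, superadditive, zero on the diagonal),
\[
|\mathbb X^i_{s,t}|\le \frac{\omega(s,t)^{i/p}}{\beta_p\,(i/p)!},\qquad i\le\lfloor p\rfloor .
\]
Then its unique multiplicative extension satisfies the same estimate at every level $k$, where $\beta_p$ depends only on $p$. Under Assumption~\ref{assum:unified}(i) the time-augmented path $\hat X$ has finite $p$-variation, and it is even of bounded variation in the time channel. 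Hence the truncated signature up to level $\lfloor p\rfloor$ is a multiplicative functional by Chen's identity, and its extension coincides with the full signature $\Sig(\hat X_{[s,t]})$.

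The first step is to check that the $\omega$ in the statement is an admissible control. For each $m\le\lfloor p\rfloor$, the map $(s,t)\mapsto\sup_{\mathcal P}\sum_l|\mathbb X^m_{t_{l-1},t_l}|^{p/m}$ is the $p/m$-variation of level $m$, computed with respect to the multiplicative structure. It is superadditive because partitions of adjacent intervals concatenate. Continuity follows from finiteness of the $p$-variation, and a finite sum of controls is a control. Taking the trivial partition $\{s,t\}$ then gives the raw bound $|\mathbb X^m_{s,t}|\le\omega(s,t)^{m/p}$ for every $m\le\lfloor p\rfloor$.

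The second step is to reconcile this raw bound with the normalized hypothesis $\omega^{i/p}/(\beta_p (i/p)!)$ of the extension theorem. I would do this by dilation. Since $\lfloor p\rfloor$ depends only on $p$, I choose $\lambda=\lambda_p\ge1$ so large that $\lambda^{-i}\le 1/(\beta_p (i/p)!)$ for all $i\le\lfloor p\rfloor$. The dilated functional $\delta_{1/\lambda}\mathbb X$ then satisfies the normalized hypothesis with the same $\omega$. Dilation commutes with the multiplicative extension, because the $k$-th level of the extension is homogeneous of degree $k$. Applying the theorem to $\delta_{1/\lambda}\mathbb X$ and undoing the dilation gives
\[
|\Sig^v(\hat X_{[s,t]})|\le|\mathbb X^k_{s,t}|\le \frac{\lambda_p^{k}\,\omega(s,t)^{k/p}}{\beta_p\,(k/p)!}.
\]
Here the first inequality holds because a coordinate of a tensor is bounded by its (projective or Euclidean) norm.

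The main obstacle is the factor $\lambda_p^k$. It depends only on $p$ at each fixed level, but it grows with $k$, so it does not yet give a single constant $C_p$ uniform over $k=1,\dots,N$. I would first try to sharpen the second step so that no dilation is needed. The idea is to rerun the induction in the proof of the extension theorem directly with the unnormalized lower-level bounds $|\mathbb X^i|\le\omega^{i/p}$. The neo-classical inequality controls $\sum_{i}\frac{a^{i/p}b^{(k-i)/p}}{(i/p)!((k-i)/p)!}$ by $p\,(a+b)^{k/p}/(k/p)!$. The hope is that the extra constant enters only through the $\lfloor p\rfloor$ base levels and is absorbed once into $C_p$, instead of propagating multiplicatively. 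If this sharpening fails, I would fall back on the dilated bound above. I would then record explicitly that the resulting constant is $C_p=\lambda_p^{k}/\beta_p$ for the level in question. This is uniform over $k\le N$ only if one accepts a dependence through $N$, which is harmless for Lemma~\ref{prop:sig_l2}.
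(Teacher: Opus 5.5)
Your dilation step is correct, and it is the standard way to apply the normalized extension theorem to the unnormalized $\omega$ of the statement. The paper itself gives no argument beyond the citation. Choosing $\lambda_p^{\,i}\ge \beta_p\,(i/p)!$ for $i\le\lfloor p\rfloor$ and applying the theorem to $\delta_{1/\lambda_p}\mathbb X$ proves
\[
\bigl|\Sig^v(\hat X_{[s,t]})\bigr|\le \frac{\lambda_p^{\,k}}{\beta_p}\,\frac{\omega(s,t)^{k/p}}{(k/p)!},
\]
which is the theorem read with the rescaled control $\lambda_p^{\,p}\,\omega$. Two side remarks. First, the normalized factorial-decay form you quote is Lyons' extension theorem \cite{lyons1998differential}; the Lyons--Victoir paper cited in the statement concerns the existence of rough-path lifts. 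Second, for $p\ge 2$ the levels $2,\dots,\lfloor p\rfloor$ and Chen's identity come from the assumed lift, not from finite $p$-variation of $\hat X$ alone. The statement makes the same assumption, so this is not a gap.

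The gap is the one you flag. Over $k\le N$ your constant is $\lambda_p^{\,N}/\beta_p$, which depends on $N$, whereas the lemma asserts a $C_p$ depending only on $p$. Your proposed repair, rerunning the induction from the unnormalized bounds $|\mathbb X^i|\le\omega^{i/p}$, cannot absorb the constant once, because the induction is quadratic. In the point-removal step the change at level $k$ is $\sum_{i=1}^{k-1}\mathbb X^i_{t_{j-1},t_j}\otimes\mathbb X^{k-i}_{t_j,t_{j+1}}$. If level $i$ carries a constant $K_i$, the neo-classical inequality and the summation over removed points give only $K_k\le c_p\max_{1\le i<k}K_iK_{k-i}$, for some $c_p>1$ depending on $p$. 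Starting from $K_i=1$ for $i\le\lfloor p\rfloor$, this yields $K_{\lfloor p\rfloor+j}\le c_p^{\,j}$, which is geometric growth again. The recursion is consistent with a $k$-independent constant $K$ only if $c_pK^2\le K$, i.e. $K\le 1/c_p<1$. That is exactly the $1/\beta_p$ normalization, and obtaining it from the raw bounds forces $\omega\mapsto\lambda_p^{\,p}\omega$, which reintroduces $\lambda_p^{\,k}$. So neither your route nor the cited theorem gives a $k$-uniform constant with the $\omega$ of the statement.

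You should commit to the fallback: state the lemma with a constant $C_{p,N}$, or with $\omega$ replaced by $\lambda_p^{\,p}\omega$ and $C_p=1/\beta_p$. This is all that Lemma~\ref{prop:sig_l2} uses, since there only levels $k\le N$ with $N$ fixed appear and $G$ is a finite sum of constants in any case.
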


\begin{proof}[Proof of Lemma \ref{prop:sig_l2}]
Since $s \mapsto s$ has finite 1-variation on $[0,L]$ and $s \mapsto X_{t-L+s}$ has finite $p$-variation on $[0,L]$, the time-augmented path $\hat{X}_{[t-L, t]}$ also has finite $p$-variation. 
Furthermore, since $X$ is uniformly bounded under Assumption \ref{assum:unified}(i), the squared norm of the contextual feature is given by
\[
|\bar{x}_t|^2 = |X_{t-L}|^2 + |\Sig_N(\hat{X}_{[t-L,t]})|^2 \leq H^2 + |\Sig_N(\hat{X}_{[t-L,t]})|^2,
\]
where $H$ is given in Assumption~\ref{assum:unified}. So, it is sufficient to find a uniform bound on the signature transform. 

We decompose the truncated signature at level $N$ into its individual signature levels and corresponding signature coordinates. Applying Lemma~\ref{thm:lyons_ext} to the path $\hat{X}_{[t-L,t]}$ on $[0,L]$, we can upper bound each signature coordinate as follows.
\[
\left|\Sig_N(\hat{X}_{[t-L, t]})\right|^2 = 1 + \sum_{k=1}^N \sum_{|v| = k} |\Sig^v(\hat{X}_{[t-L,t]})|^2 \leq 1 + \sum_{k=1}^N \sum_{|v| = k} \left(\frac{C_p \omega(0, L)^{k/p}}{(k/p)!}\right)^2.
\]

The signature of $X$ is constructed in a finite $p$-variation setting, which implies that \(\omega(0,L)<\infty\) \cite{FV:10}. Set \(M:=\omega(0,L)\). For each depth $k=1, \dots, N$, there are exactly \((d+1)^k\) choices of index $v$ such that $|v| = k$, so the right-hand side is a finite sum of finite constants. Therefore,
\[
\left|\Sig_N(\hat{X}_{[t-L, t]})\right|^2 \leq G^2, \qquad G^2 := 1 + \sum_{k=1}^N \sum_{|v| = k} \left(\frac{C_p M^{k/p}}{(k/p)!}\right)^2.
\]
It follows then that
\(
|\bar{x}_t| \leq \sqrt{H^2 + G^2} =: B.
\)
\end{proof}

\subsection{Proof of Lemma \ref{lemma:compact_K_construction}}
\label{ssec:proof_compact_K}
\begin{proof}[\unskip\nopunct]
By Arzela-Ascoli theorem, for a sequence $\{x_n\}$ in $\cK$, there exists a converging subsequence $\{x_{n_k}\} \to x$ w.r.t $\|\cdot\|_\infty$. Moreover, since $x \in \cK$, $\cK$ is closed and $\cK$ is compact w.r.t $\|\cdot\|_\infty$. Next, we want to show that $\cK$ is compact w.r.t $\|\cdot\|_{\cV^p}$. Since $\|x_{n_k} - x\|_\infty \to 0$, it is sufficient to show that $\|x_{n_k} - x\|_p \to 0$ by definition of $\|\cdot\|_{\cV^p}$. Let $\Pi$ be a partition of $[0,L]$ with $m$ partition points and let $h_k(t) := x_{n_k}(t) - x(t)$. We have
\begin{align*}
\|x_{n_k} - x\|_p^p &= \sup_\Pi \sum_{i=1}^{m-1} |h_k(t_{i+1}) - h_k(t_i)|^p \\&= \sup_\Pi \sum_{i=1}^{m-1} |h_k(t_{i+1}) - h_k(t_i)|^{p - 1/\alpha} |h_{k}(t_{i+1}) - h_k(t_i)|^{1/\alpha} \\
&\leq (2 \|h_k\|_\infty)^{p - 1/\alpha} \sup_\Pi \sum_{i=1}^{m-1} |h_k(t_{i+1}) - h_k(t_i)|^{1/\alpha} \\
&\leq (2 \|h_k\|_\infty)^{p - 1/\alpha} \sup_\Pi \sum_{i=1}^{m-1} (2H_2)^{1/\alpha}|t_{i+1} - t_i| \\
&= (2 \|h_k\|_\infty)^{p - 1/\alpha}(2H_2)^{1/\alpha}L,
\end{align*}
where in the second line we upper bound $h_k(t)$ by its sup norm and in the third line we use the Hölder bound on $x$. Furthermore, since $p - 1/\alpha > 0$ and $\|h_k\|_\infty \to 0$ by uniform convergence, we have that $\|x_{n_k} - x\|_p \to 0.$ 
\end{proof}

\subsection{Proof of Lemma \ref{lemma:min_eig_V}} \label{ssec: proof_lemma:min_eig_V}
We begin by recalling a matrix Freedman inequality, which provides high-probability control of the maximum eigenvalue of a matrix martingale, and will serve as the main tool in the proof of Lemma \ref{lemma:min_eig_V}.
\begin{lemma}[Matrix Freedman, Theorem 1.2 of \cite{TFI:11}]
\label{thm:matrix_freedman}
Let $\{\mathbf{Y}_k : k = 0,1,2,\dots\}$ be a matrix martingale adapted to a filtration $\{\mathcal{F}_k\}$, where each $\mathbf{Y}_k$ is a self-adjoint matrix of dimension $d$. Define the difference sequence $\{\mathbf{X}_k\}_{k\ge 1}$ by
\[
\mathbf{X}_k = \mathbf{Y}_k - \mathbf{Y}_{k-1}, \quad k = 1,2,\dots
\]
Assume that the differences are uniformly bounded in the sense that
\[
\lambda_{\max}(\mathbf{X}_k) \le R \quad \text{almost surely for all } k \ge 1.
\]
Define the predictable quadratic variation process
\[
\mathbf{W}_k := \sum_{j=1}^k \mathbb{E}\!\left[\mathbf{X}_j^2\mid \cF_{j-1} \right], \quad k = 1,2,\dots
\]
Then, for all $t \ge 0$ and $\sigma^2 > 0$,
\[
\mathbb{P}\Big\{\exists\, k \ge 0 : \lambda_{\max}(\mathbf{Y}_k) \ge t \ \mathrm{and} \ \|\mathbf{W}_k\| \le \sigma^2 \Big\}
\le d \cdot \exp\!\left( -\frac{t^2/2}{\sigma^2 + Rt/3} \right).
\]
\end{lemma}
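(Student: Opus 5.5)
The plan follows Tropp's matrix Laplace-transform argument, with Lieb's concavity theorem replacing scalar independence and an optional-stopping argument handling the "there exists $k$" in the event.

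First, the scalar ingredient. Fix $\theta>0$ and set $g(\theta):=(e^{\theta R}-\theta R-1)/R^2$. Because $x\mapsto (e^{\theta x}-\theta x-1)/x^2$ is nondecreasing on $\sR$, the transfer rule for self-adjoint matrices with $\lambda_{\max}(\mathbf{X}_k)\le R$ gives
\[
e^{\theta \mathbf{X}_k}\preceq \mathbf{I}+\theta\mathbf{X}_k+g(\theta)\mathbf{X}_k^2 .
\]
Taking conditional expectations and using $\mathbb{E}[\mathbf{X}_k\mid\cF_{k-1}]=0$ and $\mathbf{I}+\mathbf{A}\preceq e^{\mathbf{A}}$, together with operator monotonicity of $\log$, we obtain
\[
\log \mathbb{E}[e^{\theta \mathbf{X}_k}\mid \cF_{k-1}]\preceq g(\theta)\,\mathbb{E}[\mathbf{X}_k^2\mid\cF_{k-1}] .
\]

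Second, the supermartingale. Define
\[
S_k:=\operatorname{tr}\exp\bigl(\theta\mathbf{Y}_k-g(\theta)\mathbf{W}_k\bigr),
\]
with $\mathbf{Y}_0=0$ (centering if needed) so that $S_0=d$. Condition on $\cF_{k-1}$, where $\mathbf{Y}_{k-1}$ and $\mathbf{W}_k$ are fixed. Lieb's theorem says $\mathbf{A}\mapsto\operatorname{tr}\exp(\mathbf{H}+\log\mathbf{A})$ is concave, so Jensen gives
\[
\mathbb{E}[S_k\mid\cF_{k-1}]\le \operatorname{tr}\exp\bigl(\theta\mathbf{Y}_{k-1}-g(\theta)\mathbf{W}_k+\log\mathbb{E}[e^{\theta\mathbf{X}_k}\mid\cF_{k-1}]\bigr).
\]
Inserting the bound from the first step and using monotonicity of $\operatorname{tr}\exp$ yields $\mathbb{E}[S_k\mid\cF_{k-1}]\le S_{k-1}$, so $(S_k)$ is a positive supermartingale. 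This step, combining Lieb's concavity with the predictable compensator, is the main obstacle. The delicate point is that $\mathbf{W}_k$ is $\cF_{k-1}$-measurable, so it can be placed in the fixed argument $\mathbf{H}$; I would verify carefully that this predictability is exactly what the argument needs.

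Third, stopping. Let $\kappa$ be the first $k$ at which $\lambda_{\max}(\mathbf{Y}_k)\ge t$ and $\|\mathbf{W}_k\|\le\sigma^2$. On $\{\kappa<\infty\}$ we have
\[
S_\kappa\ge \exp\bigl(\lambda_{\max}(\theta\mathbf{Y}_\kappa-g(\theta)\mathbf{W}_\kappa)\bigr)\ge e^{\theta t-g(\theta)\sigma^2},
\]
since the trace dominates the largest eigenvalue and $\lambda_{\max}(\mathbf{A}-\mathbf{B})\ge\lambda_{\max}(\mathbf{A})-\|\mathbf{B}\|$. Optional stopping for $S_{\kappa\wedge n}$, followed by Fatou's lemma as $n\to\infty$, gives
\[
\mathbb{P}(\kappa<\infty)\le d\,e^{-\theta t+g(\theta)\sigma^2}.
\]

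Finally, optimize over $\theta$. Take $\theta=R^{-1}\log(1+Rt/\sigma^2)$. This yields the Bennett form $d\exp\bigl(-\tfrac{\sigma^2}{R^2}h(Rt/\sigma^2)\bigr)$ with $h(u)=(1+u)\log(1+u)-u$. The elementary inequality $h(u)\ge \tfrac{u^2/2}{1+u/3}$ then converts this into the stated Bernstein-type bound $d\exp\bigl(-\tfrac{t^2/2}{\sigma^2+Rt/3}\bigr)$.
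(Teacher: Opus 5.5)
Your proposal is correct and is essentially Tropp's own proof of the cited theorem; the paper gives no proof of its own and simply invokes \cite{TFI:11}. The steps match his: the transfer-rule bound on $e^{\theta\mathbf{X}_k}$, Lieb's concavity plus predictability of $\mathbf{W}_k$ to make $\operatorname{tr}\exp(\theta\mathbf{Y}_k-g(\theta)\mathbf{W}_k)$ a positive supermartingale, a stopping-time/Fatou argument, and the Bennett-to-Bernstein conversion. One correction: $\mathbf{Y}_0=0$ is a standing hypothesis in Tropp's setup, not something you can arrange by ``centering''. The event involves $\lambda_{\max}(\mathbf{Y}_k)$ itself, and the bound fails for a constant martingale $\mathbf{Y}_k\equiv c\mathbf{I}$ with $c>0$ large. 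The hypothesis does hold in the paper's application, since the martingale $-S_t$ there starts at zero.
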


\begin{proof}[Proof of Lemma \ref{lemma:min_eig_V}]
Define the following variables for $1 \leq t \leq T$,
    \[
    Y_t := \mathbb{E}[x_t x_t^\top | \mathcal{F}_{t-1}], \quad S_t := \sum_{s=1}^t (x_s x_s^\top - Y_s), \quad A_t := \sum_{s=1}^t Y_s.
    \]
    Then, we can write $V_t = \lambda\mathbf{I} + A_t + S_t$, which implies that
    \[
    \lambda_{\min}(V_t) \geq \lambda + \lambda_{\min}(A_t) + \lambda_{\min}(S_t) \geq \lambda + \rho t + \lambda_{\min}(S_t),
    \]
    where the second inequality follows from $A_t = \sum_{s=1}^t Y_s \succeq \rho t \mathbf{I}$ by assumption.
    
    We next aim to show that 
    $\lambda_{\min}(S_t) = - \lambda_{\max}(-S_t) \geq -\frac{\rho t}{2}$ by using matrix Freedman inequality (cf. Lemma \ref{thm:matrix_freedman}) to upper bound the maximum eigenvalue of $-S_t$.   

    To apply matrix Freedman inequality (cf. Lemma \ref{thm:matrix_freedman}), we first need to show that $\{-S_t\}$ is a self-adjoint martingale with respect to $\{\mathcal{F}_t\}$. Note that $\{-S_t\}_{t \geq 1}$ is clearly self adjoint and $\cF_t$-adapted by construction. To show $\{-S_t\}_{t \geq 1}$ is a martingale, note that 
     \[
    \mathbb{E}[-S_t + S_{t-1} | \mathcal{F}_{t-1}] 
    = \mathbb{E}[Y_t - x_t x_t^\top | \mathcal{F}_{t-1}] 
    = 0.
    \]
    Finally, note that $\|x_t x_t^\top\| = \|x_t\|^2 \le B^2$, and by Jensen's inequality,
    \begin{equation}\label{eq:Y_t_bound}
        \|Y_t\| \le \mathbb{E}\!\left[\|x_t x_t^\top\| \mid \mathcal{F}_{t-1}\right] \le B^2,
    \end{equation}
which implies that  the martingale differences $\Delta_t \coloneqq  Y_t - x_t x_t^\top$ as well as $-S_t$ are integrable for all $t$.
Then  \[
    \lambda_{\max}(\Delta_t) \leq \|\Delta_t \| \leq \| Y_t \| + \| x_t x_t^\top \| \leq 2B^2 =: R.
    \]
Now define the quadratic variation process such that
\[
W_t \coloneqq \sum_{s=1}^t \mathbb{E}[\Delta_s^2 | \mathcal{F}_{s-1}].
\]
For all $t$, we upper bound $W_t$ by observing that
\begin{align*}
    \mathbb{E}[\Delta_s^2 | \mathcal{F}_{s-1}] &= \mathbb{E}[(Y_s - x_s x_s^\top)(Y_s - x_s x_s^\top) | \mathcal{F}_{s-1}] \\
    &= Y_s^2 - Y_s \mathbb{E}[x_s x_s^\top | \mathcal{F}_{s-1}] - \mathbb{E}[x_s x_s^\top | \mathcal{F}_{s-1}] Y_s + \mathbb{E}[x_s x_s^\top x_s x_s^\top | \mathcal{F}_{s-1}] \\
    &= \mathbb{E}[|x_s|^2 x_s x_s^\top | \mathcal{F}_{s-1}] - Y_s^2 \\
    &\preceq B^2 Y_s - Y_s^2 \preceq B^2 Y_s.
\end{align*}
The above analysis implies that $W_t \preceq B^2 \sum_{s=1}^t Y_s$. Therefore, together with \eqref{eq:Y_t_bound}, we have
\begin{equation}
\label{eq:W_t_bound}
    \|W_t\| \leq B^2 \|\sum_{s=1}^t Y_s \| \leq B^2 \sum_{s=1}^t \|Y_s\| = tB^4 =: \sigma_t^2.
\end{equation}

For any $k \in \sN$ and for all $u_k \geq 0$, define the event $E_k$ such that
      \[
    E_k
    = \Big\{\exists\, t \le k : \lambda_{\max}(-S_t) \ge u_k,\ \|W_t\| \le \sigma_k^2 \Big\}
    \subseteq
    \Big\{\exists\, t \ge 0 : \lambda_{\max}(-S_t) \ge u_k,\ \|W_t\| \le \sigma_k^2 \Big\}.
    \]
    Then by monotonicity and matrix Freedman inequality in Lemma \ref{thm:matrix_freedman}, we have
    \begin{equation}
    \label{eq:E_k_UB}
        \mathbb{P}(E_k) \leq \mathbb{P}\{\exists t \geq 0: \lambda_{\max}(-S_t) \geq u_k, \|W_t\| \leq \sigma_k^2 \} \leq d \exp \left(-\frac{u_k^2/2}{\sigma_k^2 + Ru_k/3} \right).
    \end{equation}
To obtain a uniform bound over all $k \le T$ with probability at least $1-\delta$, we apply a union bound and choose $u_k$ so that $\mathbb{P}(E_k) \le \delta/T$. Let
\[
\Phi(k) := \frac{u_k^2/2}{\sigma_k^2 + Ru_k/3}, 
\quad 
\alpha := \log\!\Bigl(\frac{dT}{\delta}\Bigr).
\]
It suffices that $\Phi(k) \ge \alpha$, which is equivalent to
\[
u_k^2 - \frac{2R\alpha}{3}u_k - 2\alpha\sigma_k^2 \ge 0,
\]
which yields
\[
u_k \ge \frac{R\alpha}{3} + \sqrt{\left(\frac{R\alpha}{3}\right)^2 + 2\alpha\sigma_k^2}.
\]
A convenient sufficient choice is
\begin{equation}\label{eq:uk_choice}
    u_k \coloneqq \frac{2R\alpha}{3} + \sigma_k\sqrt{2\alpha} = \left( \frac{4 \alpha}{3} + \sqrt{2 k \alpha} \right) B^2,
\end{equation}
which ensures $\mathbb{P}(E_k) \le \delta/T$ for all $k \le T$.

Next, we relate the failure event to a union over $\{E_k\}_{k=1}^T$:
\[
\{\exists t \le T: \lambda_{\max}(-S_t) \ge u_t\}
= \bigcup_{k=1}^T \{\exists t \le k: \lambda_{\max}(-S_t) \ge u_k\}
= \bigcup_{k=1}^T E_k.
\]
The first inclusion follows since if $\lambda_{\max}(-S_{t'}) \ge u_{t'}$ for some $t'$, then the event holds at $k=t'$. Conversely, if $\lambda_{\max}(-S_t) \ge u_k$ for some $t \le k$, then by monotonicity of $u_k$, we have $\lambda_{\max}(-S_t) \ge u_t$. By a union bound,
\[
\mathbb{P}\!\left(\exists t \le T: \lambda_{\max}(-S_t) \ge u_t\right)
\le \sum_{k=1}^T \mathbb{P}(E_k) \le \delta,
\]
and hence
\[
\mathbb{P}\!\left(\forall t \le T: \lambda_{\max}(-S_t) \le u_t\right) \ge 1-\delta,
\]
which implies $\lambda_{\min}(S_t) \ge -u_t$ for all $t$ with probability at least $1-\delta$.
Therefore, with probability at least $1-\delta$,
\[
\lambda_{\min}(V_t) \ge \lambda + \rho t + \lambda_{\min}(S_t)
\ge \lambda + \rho t - u_t.
\]
To ensure a linear lower bound, it suffices that
\begin{equation}
\label{eq:u_t_cond}
u_t \le \frac{\rho t}{2}, \quad \forall t \ge T_0.
\end{equation}
Combining \eqref{eq:u_t_cond} and \eqref{eq:uk_choice} gives
\[
\frac{\rho}{2}t - B^2\sqrt{2\alpha}\sqrt{t} - \frac{4B^2\alpha}{3} \ge 0,
\]
which yields
\[
T_0 := \left\lceil \left(\frac{B^2 \sqrt{2\alpha} + B\sqrt{2B^2\alpha + \frac{8\rho\alpha}{3}}}{\rho}\right)^2 \right\rceil.
\]
Consequently, for all $T_0 \le t \le T$,
\[
\lambda_{\min}(V_t) \ge \lambda + \frac{\rho t}{2},
\]
with probability at least $1-\delta$. 
\end{proof}

\subsection{Proof of Lemma \ref{lemma:x_S0_S1}} \label{ssec:x_S0_S1}
In this section, we prove Lemma \ref{lemma:x_S0_S1}, which upper bounds the cumulative regret in terms of two quantities, $S_0$ and $S_1$. We begin by stating two auxiliary results (cf. Lemmas \ref{thm:self_bound_martingale} and \ref{prop:x_beta}). Lemma \ref{thm:self_bound_martingale} provides an error bound for the ridge regression estimator, which is then used in Lemma \ref{prop:x_beta} to derive an error bound for the reward estimate.

\begin{lemma}[Theorem 2 of \cite{APSLSB:11}] \label{thm:self_bound_martingale} 
Let $\{\mathcal{F}_t\}_{t\ge 0}$ be a filtration. Let $\{\bar x_t\}_{t=1}^\infty$ be an $\mathbb{R}^{d+m}$ real-valued stochastic process such that $\bar x_t$ is $\mathcal{F}_t$-measurable. Let $\{\eta_t\}_{t\ge 1}$ be a real-valued process such that $\eta_t$ is $\mathcal{F}_t$-measurable and conditionally $1$-sub-Gaussian. Let $\lambda>0$ and $|\beta^*_a| \leq S$ for some $S > 0$. For any $t \geq 0$, define
\[
\overline V_t := \lambda \mathbf{I} + \sum_{s=1}^t \bar x_s \bar x_s^\top.
\]
Assume a linear model $Y_{a,t}=\langle \bar x_t,\beta^*_a\rangle + \eta_t$
and let $\hat\beta_{a,t}$ denote the regularized least-squares estimator.
Then for any $\delta\in(0,1)$, with probability at least $1-\delta$, for all $t\ge 0$,
\[
\|\hat{\beta}_{a,t} - \beta_a^* \|_{\overline{V}_t} \leq \sqrt{2 \log \left(\frac{\det(\overline{V}_t)^{1/2}+ \det(\lambda \mathbf{I})^{-1/2} }{\delta}\right)} + S \sqrt{\lambda}.
\]
Furthermore, if for all $t \geq 0$, $|\bar x_t| \leq B$, then with probability at least $1 - \delta$, for all $t \geq 0$,
\begin{equation}
    \|\hat{\beta}_{a,t} - \beta_a^* \|_{\overline{V}_t} \leq \sqrt{ (d+m)  \log \left(\frac{1 + t B^2 / \lambda}{\delta}\right)} + S \sqrt{\lambda}.
\end{equation}

\end{lemma}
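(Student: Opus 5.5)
The statement is the self-normalized bound of Abbasi-Yadkori et al. Its proof has two parts: the general determinant-based bound, and the specialization using $|\bar x_t|\le B$.

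\emph{Step 1: error decomposition.} Write $\hat\beta_{a,t}=\overline V_t^{-1}\sum_{s\le t}\bar x_s Y_{a,s}$ and substitute the linear model. This gives
\[
\hat\beta_{a,t}-\beta_a^*=\overline V_t^{-1}\xi_t-\lambda\overline V_t^{-1}\beta_a^*,
\qquad \xi_t:=\sum_{s\le t}\bar x_s\eta_s.
\]
Taking the $\overline V_t$-norm and using the triangle inequality yields
\[
\|\hat\beta_{a,t}-\beta_a^*\|_{\overline V_t}\le\|\xi_t\|_{\overline V_t^{-1}}+\lambda\|\beta_a^*\|_{\overline V_t^{-1}}.
\]
Since $\overline V_t\succeq\lambda\mathbf I$, the second term is at most $\lambda\cdot S/\sqrt\lambda=S\sqrt\lambda$. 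What remains is a uniform-in-time bound on the self-normalized martingale term $\|\xi_t\|_{\overline V_t^{-1}}$.

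\emph{Step 2: supermartingale and method of mixtures.} For each $z\in\mathbb R^{d+m}$, define
\[
M_t(z)=\exp\Bigl(\langle z,\xi_t\rangle-\tfrac12\|z\|^2_{V_t}\Bigr),\qquad V_t=\sum_{s\le t}\bar x_s\bar x_s^\top.
\]
The conditional $1$-subgaussianity of $\eta_s$, together with the $\mathcal F_s$-measurability of $\bar x_s$ (the noise is conditionally subgaussian given the filtration under which $\bar x_s$ is known), makes $M_t(z)$ a nonnegative supermartingale with $\mathbb E M_t(z)\le1$.

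Next integrate against $z\sim\mathcal N(0,\lambda^{-1}\mathbf I)$ to get $\bar M_t=\int M_t(z)\,dh(z)$. This is again a supermartingale with expectation at most $1$. A Gaussian computation gives the closed form
\[
\bar M_t=\Bigl(\tfrac{\det(\lambda\mathbf I)}{\det\overline V_t}\Bigr)^{1/2}\exp\Bigl(\tfrac12\|\xi_t\|^2_{\overline V_t^{-1}}\Bigr).
\]

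\emph{Step 3: uniform-in-time control.} This is the main obstacle. Apply Ville's maximal inequality to the nonnegative supermartingale $\bar M_t$, or equivalently use a stopping-time argument with Fatou's lemma. This gives $\mathbb P(\exists t:\bar M_t\ge1/\delta)\le\delta$. On the complement,
\[
\|\xi_t\|^2_{\overline V_t^{-1}}\le2\log\bigl(\det(\overline V_t)^{1/2}\det(\lambda\mathbf I)^{-1/2}/\delta\bigr)\quad\text{for all }t.
\]
This yields the first display; the statement's "$+$" inside the logarithm should be read as a product, which is the Abbasi-Yadkori form.

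\emph{Step 4: specialization to bounded contexts.} By AM–GM on the eigenvalues,
\[
\det\overline V_t\le\bigl(\operatorname{tr}(\overline V_t)/(d+m)\bigr)^{d+m}\le\bigl(\lambda+tB^2/(d+m)\bigr)^{d+m}.
\]
Hence
\[
2\log\bigl(\det(\overline V_t)^{1/2}\lambda^{-(d+m)/2}/\delta\bigr)\le(d+m)\log\bigl(1+tB^2/(\lambda(d+m))\bigr)+2\log(1/\delta).
\]
This is bounded by $(d+m)\log\bigl((1+tB^2/\lambda)/\delta\bigr)$ whenever $d+m\ge2$; the case $d+m=1$ is degenerate and can be absorbed into constants. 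This proves the second display.

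In the bandit application, Steps 1–4 are applied separately to each action $a$, restricting to the rounds $\mathcal T_{a,t}$. Sampling along those random times still gives a martingale, because the event $\{A_s=a\}$ is $\mathcal F_s$-measurable.
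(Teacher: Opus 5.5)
Your proposal is correct and reproduces the argument behind Theorem 2 of Abbasi-Yadkori et al., which the paper simply cites without proving: the error decomposition, the method-of-mixtures supermartingale with a $\mathcal{N}(0,\lambda^{-1}\mathbf{I})$ mixing measure, a Ville/stopping-time maximal inequality for uniform-in-time control, and the determinant--trace bound. Your repairs to the statement as printed are also right: the logarithm should contain a product, not a sum; $\bar x_s$ must be measurable with respect to the $\sigma$-algebra given which $\eta_s$ is subgaussian; and the last step needs $d+m\ge 2$. The one exception is that the case $d+m=1$ cannot be ``absorbed into constants'', because the printed bound can genuinely fail there; it simply never arises in this paper, since $m=(d+1)^N\ge 2$.
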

\begin{lemma} \label{prop:x_beta}  Suppose that Assumption \ref{assum:unified} holds and there exists $\beta^*$ satisfying \eqref{eq:exact_linear_cost} such that $|\beta_a^*| \leq S$ for all $a \in [K]$. Then with probability at least $1 - \delta$, we have  for all $t \in [T]$
\begin{equation} \label{eq:x_beta}
|\langle \bar{x}_t, \beta^*_a - \hat{\beta}_{a,t} \rangle | \leq \gamma \|\bar{x}_t\|_{\mathbf{M}_{a,t}^{-1}},
\end{equation}
where $\gamma$ is given in \eqref{eq: gamma}.
\end{lemma}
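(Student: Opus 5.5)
The plan is to derive Lemma~\ref{prop:x_beta} from the self-normalized bound of Lemma~\ref{thm:self_bound_martingale}, applied separately to each action, followed by Cauchy--Schwarz in the $\mathbf{M}_{a,t}$-geometry and a union bound over the $K$ actions.

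\textbf{Step 1: reduce to a linear model per action.} Fix $a\in[K]$. Re-index the rounds in $\cT_{a,t}$ in increasing order, so that $\mathbf{M}_{a,t}$ coincides with $\overline V_{\tau_{a,t}}$ built from the features $\{\bar x_s\}_{s\in\cT_{a,t}}$. For $s\in\cT_{a,s+1}$ write
\[
r_s=\langle \bar x_s,\beta_a^*\rangle+\eta_s+e_s,\qquad e_s:=f_a(X_{[s-L,s]})-\langle\bar x_s,\beta_a^*\rangle .
\]
By \eqref{eq:exact_linear_cost}, $|e_s|\le \vep/(2T)$ on the event of Assumption~\ref{assum:unified}(i). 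The selection time of each round is a stopping time, and $\bar x_s$ is $\mathcal F_s$-measurable, so the subsampled noise remains conditionally $1$-subgaussian with respect to the subsampled filtration. This is the standard argument in \cite{APSLSB:11,LCLLCB:11}.

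\textbf{Step 2: confidence ellipsoid.} Split $\hat\beta_{a,t}-\beta_a^*$ into three pieces: the noise part, the regularization bias, and the misspecification part $\mathbf{M}_{a,t}^{-1}\sum_{s\in\cT_{a,t}}\bar x_s e_s$.

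The first two pieces are controlled by Lemma~\ref{thm:self_bound_martingale} (second form, using $|\bar x_s|\le B$ from Lemma~\ref{prop:sig_l2}, $\lambda=1$, and $\tau_{a,t}\le T$) at confidence level $\delta/K$. This gives a bound $\sqrt{(d+m)\log(K(1+TB^2)/\delta)}+S$, which is exactly $\gamma$ in \eqref{eq: gamma}.

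For the misspecification piece, $\|\mathbf{M}^{-1}\sum \bar x_s e_s\|_{\mathbf{M}}=\|\sum\bar x_s e_s\|_{\mathbf{M}^{-1}}$. Writing this as $\|\mathbf{X}^\top e\|_{\mathbf{M}^{-1}}\le |e|\,\|\mathbf{X}\mathbf{M}^{-1}\mathbf{X}^\top\|^{1/2}\le |e|\le \sqrt T\,\vep/(2T)$ shows it is negligible. I would either absorb it into the constant (the paper already carries an additive $\vep$ in the regret) or note that it is $O(\vep/\sqrt T)$.

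Taking a union bound over $a\in[K]$, with probability at least $1-\delta$ we get, for all $a$ and $t$,
\[
\|\hat\beta_{a,t}-\beta_a^*\|_{\mathbf{M}_{a,t}}\le\gamma
\]
up to this negligible term.

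\textbf{Step 3: conclude.} By Cauchy--Schwarz,
\[
|\langle\bar x_t,\beta_a^*-\hat\beta_{a,t}\rangle|\le\|\bar x_t\|_{\mathbf{M}_{a,t}^{-1}}\,\|\beta_a^*-\hat\beta_{a,t}\|_{\mathbf{M}_{a,t}},
\]
which yields \eqref{eq:x_beta}.

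The main obstacle is Step 1 combined with the misspecification term. Lemma~\ref{thm:self_bound_martingale} is stated for an exactly linear model along the full sequence, whereas here the observations are subsampled adaptively and the model is only approximately linear. For the subsampling I will rely on the optional-skipping argument: the indicator $\mathbf 1\{A_s=a\}$ is $\mathcal F_s$-measurable, so it can be folded into the features (set $\bar x_s\mathbf 1\{A_s=a\}$), which keeps the martingale structure intact. The residual $e_s$ is handled by the crude bound above, with a slightly inflated constant if one insists on exactly $\gamma$.
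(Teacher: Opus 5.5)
Your argument is the paper's proof: Lemma~\ref{thm:self_bound_martingale} is applied to each action at level $\delta/K$, followed by a union bound over $a\in[K]$ and Cauchy--Schwarz in the $\mathbf{M}_{a,t}$-geometry. You additionally make the optional-skipping reduction explicit, which the paper leaves implicit.

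Your treatment of the approximation error $e_s$ is more careful than the paper's. The paper's proof applies the exactly-linear self-normalized bound and silently drops $e_s$, even though the rewards satisfy only $r_s=\langle\bar x_s,\beta_a^*\rangle+\eta_s+e_s$. As you note, what your decomposition actually proves is the radius $\gamma+\vep/(2\sqrt{T})$, not exactly $\gamma$. You can recover exactly $\gamma$ when $\vep$ is small enough by using the slack between \eqref{eq: gamma} and the sharp radius $\sqrt{2\log(K/\delta)+(d+m)\log\bigl(1+\tau_{a,t}B^2/(d+m)\bigr)}+S$; this sharp radius is at most $\gamma$ because $d+m\ge 2$. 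Otherwise the lemma should be stated with the inflated radius. That costs little downstream: it adds only $\frac{\vep}{\sqrt{T}}(S_0+S_1)$ to the regret, i.e.\ $\vep$ times problem-dependent and logarithmic factors.
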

\begin{proof} 
Under Assumption \ref{assum:unified}, Lemma~\ref{prop:sig_l2} ensures that $|\bar{x}_t| \leq B$ for all $t \in [T]$. Moreover, with $|\beta_a^*| \leq S$, we apply Lemma \ref{thm:self_bound_martingale} with failure probability
$\delta/K$ for each $a \in [K]$, and then take a union bound over all the actions. We see that with probability at least $1-\delta$ for all $a\in[K]$ and $t\in[T]$,
\begin{equation} \label{eq:beta_M}
    \|\hat{\beta}_{a,t} - \beta^*_a \|_{\mathbf{M}_{a,t}} \leq \gamma,
\end{equation}
where $\gamma$ is given in \eqref{eq: gamma}.

Furthermore, $\mathbf{M}_{a,t} \succeq \lambda \mathbf{I}$ is symmetric and positive definite by construction, then we denote $ \mathbf{M}^{1/2}_{a,t}$ as the symmetric positive–definite square root of \(\mathbf{M}_{a,t}\). 
Then, we have for any action $a \in [K]$ and round $t \in [T]$,
\begin{align*}
    |\langle\bar{x}_t, \beta^*_a - \hat{\beta}_a \rangle| &= |(\mathbf{M}_{a,t}^{-1/2} \bar{x}_t)^\top \mathbf{M}_{a,t}^{1/2} (\beta^*_a - \hat{\beta}_a) | \\
    &\leq |\mathbf{M}_{a,t}^{-1/2} \bar{x}_t| \cdot |\mathbf{M}_{a,t}^{1/2} (\beta^*_a - \hat{\beta}_a)| \\
    &= \|\bar{x}_t\|_{\mathbf{M}_{a,t}^{-1}} \cdot \|\beta^*_a - \hat{\beta}_a\|_{\mathbf{M}_{a,t}} \leq \gamma \|\bar{x}_t\|_{\mathbf{M}_{a,t}^{-1}}
\end{align*}
where the second line follows from Cauchy-Schwarz and the last inequality follows from \eqref{eq:beta_M} with probability at least $1-\delta$.
\end{proof}

\begin{proof}[Proof of Lemma \ref{lemma:x_S0_S1}]
We first upper bound the regret in each round. For round $t\in\sN$, let $A_t$ be the action chosen by \texttt{DisSigUCB} and $A^*_t \in \arg \max_a f_a(X_{[t-L,t]})$ be the optimal action. By \eqref{eq:exact_linear_cost}, with probability at least $1 - \delta$, the regret in round $t$ can be upper bounded as follows,
$$
f_{A^*_t}(X_{[t-L,t]}) - f_{A_t}(X_{[t-L,t]}) \leq \langle \bar{x}_t, \beta_{A^*_t}^* \rangle - \langle \bar{x}_t, \beta^*_{A_t}\rangle  + \frac{\vep}{T},
$$
with
\begin{align*}
    \langle \bar{x}_t, \beta_{A^*_t}^* \rangle - \langle \bar{x}_t, \beta^*_{A_t}\rangle 
    &= \langle \bar{x}_t, \beta_{A^*_t}^* - \hat{\beta}_{A^*_t, t} \rangle + \langle  \bar{x}_t, \hat{\beta}_{A^*_t, t} \rangle -\left( \langle \bar{x}_t, \beta_{A_t}^* - \hat{\beta}_{A_t, t} \rangle + \langle  \bar{x}_t, \hat{\beta}_{A_t, t} \rangle \right)\\
    &\leq \langle  \bar{x}_t, \hat{\beta}_{A^*_t, t}\rangle + \gamma \|\bar{x}_t\|_{\mathbf{M}_{A^*_t, t}^{-1}} - \langle  \bar{x}_t, \hat{\beta}_{A_t, t} \rangle + \gamma \|\bar{x}_t \|_{\mathbf{M}_{A_t, t}^{-1}} \\
    &\leq \langle \bar{x}_t, \hat{\beta}_{A_t, t}\rangle + \gamma \|\bar{x}_t\|_{\mathbf{M}_{A_t, t}^{-1}} - \langle  \bar{x}_t, \hat{\beta}_{A_t, t} \rangle + \gamma \|\bar{x}_t \|_{\mathbf{M}_{A_t, t}^{-1}} \\
    &\leq 2 \gamma \|\bar{x}_t\|_{\mathbf{M}_{A_t, t}^{-1}},
\end{align*}  
where the second line uses \eqref{eq:x_beta} from Lemma \ref{prop:x_beta} to bound the inner products
$\langle \bar x_t,\beta_a^*-\hat\beta_{a,t}\rangle$ in terms of $\gamma\|\bar x_t\|_{\mathbf{M}_{a,t}^{-1}}$. The third line follows from the action choice $A_t$ in \texttt{DisSigUCB} (line \ref{line:ucb-choice} in Algorithm \ref{alg:dissigucb}).

The above bound on the per-round regret provides an upper bound on the cumulative regret,
\begin{equation}
\label{eq:RT_UB}
R(T) \leq 2\gamma \sum_{t=1}^T \|\bar{x}_t\|_{\mathbf{M}_{A_t, t}^{-1}} + {\vep} = 2 \gamma \sum_{a=1}^K \sum_{s \in \cT_{a,T+1}} \|\bar{x}_s\|_{\mathbf{M}_{a,s}^{-1}} + {\vep}.
\end{equation}
Finally, we observe that the double sum can be separated into the two desired sums based on whether or not an action has been selected more than $T_0$ times, i.e.,
\begin{equation}
\label{eq:double_sum}
\sum_{a=1}^K \sum_{s \in \cT_{a,T+1}} \|\bar{x}_s\|_{\mathbf{M}_{a,s}^{-1}} = \sum_{a=1}^K \sum_{s \in \cT_{a, \epsilon}} \|\bar{x}_s\|_{\mathbf{M}_{a,s}^{-1}} + \sum_{a \in \mathcal{A}} \sum_{s \in \cT_{a, T+1} \setminus \cT_{a, \epsilon}}  \|\bar{x}_s\|_{\mathbf{M}_{a, s}^{-1}}.
\end{equation}
\end{proof}

\subsection{Proof of Lemma \ref{lemma:S0}} \label{ssec:proof_S0}
To prove Lemma \ref{lemma:S0}, which provides an upper bound for $S_0$ defined in Lemma \ref{lemma:x_S0_S1}, we first introduce the following two lemmas (cf.\ Lemmas \ref{lemma:det_trace_ineq}  and \ref{lemma:sum_star_ineq}), where Lemma \ref{lemma:det_trace_ineq} establishes an upper bound on determinant of the Gram matrices and Lemma \ref{lemma:sum_star_ineq} bounds the norm of contexts with the determinant of Gram matrices.

\begin{lemma}[Lemma 10 of \cite{APSLSB:11}] 
\label{lemma:det_trace_ineq}  Suppose $x_1, x_2, \dots, x_t \in \mathbb{R}^d$ for any $1 \leq s \leq t$, $|x_s| \leq B$. Fix $\lambda > 0$. Then, 
\[
\det\left( \lambda \mathbf{I} + \sum_{s=1}^t x_s x_s^\top \right) \leq (\lambda + tB^2/d)^d.
\]
\end{lemma}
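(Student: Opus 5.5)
The plan is the classical AM--GM argument on the eigenvalues of $V_t := \lambda \mathbf{I} + \sum_{s=1}^t x_s x_s^\top$. The matrix $V_t$ is symmetric positive definite, with eigenvalues $\mu_1,\dots,\mu_d > 0$. Its determinant is $\prod_i \mu_i$ and its trace is $\sum_i \mu_i$. By the arithmetic--geometric mean inequality,
\[
\det(V_t) = \prod_{i=1}^d \mu_i \le \Bigl(\frac{1}{d}\sum_{i=1}^d \mu_i\Bigr)^d = \Bigl(\frac{\operatorname{tr}(V_t)}{d}\Bigr)^d .
\]

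Next I will compute the trace by linearity:
\[
\operatorname{tr}(V_t) = \lambda d + \sum_{s=1}^t \operatorname{tr}(x_s x_s^\top) = \lambda d + \sum_{s=1}^t |x_s|^2 \le \lambda d + tB^2,
\]
using $\operatorname{tr}(x x^\top) = x^\top x = |x|^2$ and the hypothesis $|x_s|\le B$. Dividing by $d$ gives $\operatorname{tr}(V_t)/d \le \lambda + tB^2/d$. Since $u \mapsto u^d$ is monotone on $[0,\infty)$, this yields the claim.

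There is no real obstacle here. The only point needing care is that the eigenvalues are nonnegative, which is what AM--GM requires. This holds because $V_t \succeq \lambda \mathbf{I} \succ 0$. When the lemma is applied in Lemma~\ref{lemma:S0}, the dimension $d$ is replaced by $d+m$ for the feature vectors $\bar x_s$.
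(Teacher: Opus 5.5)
Your proof is correct and is the standard argument for this lemma: AM--GM on the positive eigenvalues of $\lambda\mathbf{I}+\sum_{s=1}^t x_s x_s^\top$, followed by the trace bound $\operatorname{tr}\le \lambda d + tB^2$. The paper quotes the lemma from the cited work without reproducing a proof, and that source's proof is essentially this argument, so nothing is missing.
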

\begin{lemma}[Lemma 11 of \cite{APSLSB:11}] \label{lemma:sum_star_ineq} Let $\{x_t\}_{t=1}^\infty$ be a sequence in $\mathbb{R}^d$, $V$ a positive definite matrix in $\sR^{d\times d},$ and define $\overline{V}_t = V + \sum_{s=1}^t x_s x_s^\top$. If $|x_t| \leq B$ for all $t \in \sN$ and $\lambda_{\min}(V) \leq \max(1, B^2)$, then for any $n\in \sN,$
\[
\sum_{t=1}^n \|x_t\|^2_{\overline{V}_{t-1}^{-1}} \leq 2 \log \left(\frac{\operatorname{det}(\overline{V}_n)}{\det(V)}\right).
\]
\end{lemma}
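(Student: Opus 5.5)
The plan is the standard elliptical-potential argument: rewrite each summand through a determinant identity, so that the sum telescopes into $\log(\det\overline V_n/\det V)$.

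\emph{Step 1 (one-step determinant identity).} For each $t\ge 1$ we have $\overline V_t=\overline V_{t-1}+x_tx_t^\top$. Factor this as $\overline V_t=\overline V_{t-1}^{1/2}\bigl(\mathbf I+\overline V_{t-1}^{-1/2}x_tx_t^\top\overline V_{t-1}^{-1/2}\bigr)\overline V_{t-1}^{1/2}$. The rank-one matrix in the middle has the single nonzero eigenvalue $\|x_t\|^2_{\overline V_{t-1}^{-1}}$. The matrix determinant lemma then gives
\[
\det(\overline V_t)=\det(\overline V_{t-1})\bigl(1+\|x_t\|^2_{\overline V_{t-1}^{-1}}\bigr).
\]

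\emph{Step 2 (telescoping).} Iterate Step 1 from $\overline V_0=V$ up to $\overline V_n$ and take logarithms. This yields
\[
\log\frac{\det(\overline V_n)}{\det(V)}=\sum_{t=1}^n\log\bigl(1+\|x_t\|^2_{\overline V_{t-1}^{-1}}\bigr).
\]
It therefore suffices to show, term by term, that $\|x_t\|^2_{\overline V_{t-1}^{-1}}\le 2\log(1+\|x_t\|^2_{\overline V_{t-1}^{-1}})$.

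\emph{Step 3 (scalar inequality).} The elementary inequality $u\le 2\log(1+u)$ holds for $u\in[0,1]$. To see this, compare derivatives: $g(u)=2\log(1+u)-u$ satisfies $g(0)=0$ and $g'(u)=\frac{1-u}{1+u}\ge 0$ on $[0,1]$. So the remaining task is to show that each $u_t:=\|x_t\|^2_{\overline V_{t-1}^{-1}}$ lies in $[0,1]$.

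Since $\overline V_{t-1}\succeq V$, we get
\[
u_t\le x_t^\top V^{-1}x_t\le \frac{|x_t|^2}{\lambda_{\min}(V)}\le\frac{B^2}{\lambda_{\min}(V)},
\]
which is at most $1$ as soon as $\lambda_{\min}(V)\ge \max(1,B^2)$. Summing the scalar inequality over $t$ and using Step 2 then completes the proof.

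\emph{Main obstacle.} The only delicate point is Step 3, namely controlling $u_t\le 1$. This step needs the eigenvalue condition in the direction $\lambda_{\min}(V)\ge\max(1,B^2)$, as in Lemma 11 of \cite{APSLSB:11}. With the inequality as printed ($\le$), the claim fails. For example, take $V=\epsilon\mathbf I$ and $n=1$: then the left-hand side is $|x_1|^2/\epsilon$, while the right-hand side is $2\log(1+|x_1|^2/\epsilon)$, which is much smaller for small $\epsilon$. So I would state and prove the lemma with the hypothesis $\lambda_{\min}(V)\ge\max(1,B^2)$.

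If one insists on general $V$, the fallback is the weaker bound $\sum_t\min(1,u_t)\le 2\log(\det\overline V_n/\det V)$. This follows from the same Steps 1 and 2, using $\min(1,u)\le 2\log(1+u)$ for all $u\ge 0$ (for $u\ge1$, $2\log(1+u)\ge 2\log 2>1$). I would then check that this clipped version is enough where Lemma~\ref{lemma:sum_star_ineq} is invoked in the proof of Lemma~\ref{lemma:S0}.
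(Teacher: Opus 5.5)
Your argument is correct. It is exactly the proof of Lemma~11 in \cite{APSLSB:11}, which the paper cites without reproducing: the rank-one determinant identity, telescoping, and $u\le 2\log(1+u)$ on $[0,1]$. Your diagnosis of the hypothesis is also right. The cited lemma assumes $\lambda_{\min}(V)\ge\max(1,B^2)$, and your example $V=\epsilon\mathbf{I}$, $n=1$ shows that the version printed here, with $\le$, is false.

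It is worth finishing the check you proposed at the end, because the misprint propagates into the paper.

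\emph{Where the lemma is used.} In the proof of Lemma~\ref{lemma:S0}, the lemma is applied with $V=\lambda\mathbf{I}$ and $\lambda=1$, justified by ``$\lambda\le\max(1,B^2)$''. That is exactly the regime where it can fail. Moreover, $\bar x_t$ contains the constant signature coordinate $1$, so every admissible $B$ satisfies $B\ge 1$. Hence the corrected hypothesis $1\ge\max(1,B^2)$ holds only in the degenerate case $B=1$.

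\emph{How your clipped bound repairs it.} Since $u_t\le B^2/\lambda$, you have $u_t\le\max(1,B^2/\lambda)\min(1,u_t)$. Therefore $\sum_t u_t\le 2\max(1,B^2/\lambda)\log(\det\overline V_n/\det V)$. So the bound of Lemma~\ref{lemma:S0} holds after multiplying by $\max(1,B/\sqrt{\lambda})$. This only affects the lower-order $S_0$ term, so the $\sqrt{T}$ rate in Theorem~\ref{thm:sigUCB_regret} is unchanged.

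\emph{Alternative fix.} One can instead take $\lambda\ge\max(1,B^2)$. The cost is a change in $\gamma$ through the $S\sqrt{\lambda}$ term of Lemma~\ref{thm:self_bound_martingale}.
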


\begin{proof}[Proof of Lemma \ref{lemma:S0}]
For any action $a \in [K]$, we denote $\cT_{a,\epsilon} \subseteq \cT_{a,T+1}$ as the first $\min\{T_0,\tau_{a,T+1}\}$ elements of $\cT_{a,T+1}$. 

Under Assumption \ref{assum:unified}, we have that $|\bar{x}_t| \leq B$ for all $t \in [T]$ by Lemma \ref{prop:sig_l2}. For fixed $a\in[K]$, define the early-phase Gram matrix
\[
\mathbf{M}_{a,\epsilon} := \lambda \mathbf{I} + \sum_{s\in \cT_{a,\epsilon}} \bar{x}_s \bar{x}_s^\top.
\]
Then, by Lemma \ref{lemma:det_trace_ineq}, we have 
\begin{equation}
\label{eq:det_M}
\det(\mathbf{M}_{a, \epsilon}) \leq \left(\lambda + \frac{\tau_{a, \epsilon}B^2}{d+m} \right)^{d+m}, 
\end{equation}
by definition of $\tau_{a, \epsilon}.$  Further by Lemma \ref{lemma:sum_star_ineq}, since $\lambda \leq \max(1, B^2)$, we have
\begin{equation}
\label{eq:sum_s}
\sum_{s\in \cT_{a, \epsilon}} \|\bar{x}_s\|^2_{\mathbf{M}_{a, s}^{-1}} \leq 2 \log \left(\frac{\det(\mathbf{M}_{a, \epsilon})}{\det(\lambda \mathbf{I})}\right) \leq 2(d+m) \log\left(1 + \frac{\tau_{a, \epsilon}B^2}{\lambda(d+m)}\right), 
\end{equation}
where we use \eqref{eq:det_M} in the second inequality. Therefore,
\begin{align}
    \sum_{a=1}^K\sum_{s\in \cT_{a, \epsilon}} \|\bar{x}_s\|_{\mathbf{M}_{a, s}^{-1}} &\leq \sum_{a=1}^K \sqrt{\tau_{a, \epsilon}} \sqrt{\sum_{s\in \cT_{a, \epsilon}} \|\bar{x}_s\|^2_{\mathbf{M}_{a, s}^{-1}}} \notag \\
    &\leq \sum_{a=1}^K \sqrt{\tau_{a, \epsilon}} \sqrt{2(d+m) \log\left(1 + \frac{\tau_{a, \epsilon}B^2}{\lambda(d+m)}\right)} \\ &\leq \sum_{a=1}^K \sqrt{\tau_{a, \epsilon}} \sqrt{2(d+m) \log\left(1 + \frac{T_0B^2}{\lambda(d+m)}\right)}\label{eq:sum_x_part1}
\end{align}
where the first inequality is by Cauchy--Schwarz, the second uses \eqref{eq:sum_s}, and the last inequality follows from $\tau_{a,\epsilon}\le T_0$ .
Then by using Cauchy--Schwarz in $a$, we have
$$
\begin{aligned}
   \eqref{eq:sum_x_part1}  &\leq \sqrt{\sum_{a=1}^K \tau_{a, \epsilon}} \sqrt{\sum_{a=1}^K 2(d+m) \log\left(1 + \frac{T_0B^2}{\lambda(d+m)}\right)} \leq \sqrt{2KT_0(d+m)\log\left(1 + \frac{T_0B^2}{\lambda(d+m)}\right)}.
\end{aligned}
$$
\end{proof}

\subsection{Proof of Lemma \ref{lemma:S1}} \label{ssec:proof_S1}
The proof of \ref{lemma:S1} utilizes the high-probability lower bound on the eigenvalue growth of $\boldsymbol{M}_{a,t}$ derived in Lemma \ref{lemma:min_eig_V} for actions that have been selected sufficiently many times. 

\begin{proof}[Proof of Lemma \ref{lemma:S1}]
Let $T_0$ be given as in \eqref{eq:T0}.
Define the following event where the minimum eigenvalue bound holds for all actions chosen at least $T_0$ number of times,
\[
E := \left\{\forall a \in [K], t \in [T] \text{ with } \tau_{a, t} \geq T_0: \lambda_{\min}(\mathbf{M}_{a,t}) \geq \frac{\rho \tau_{a, t}}{2} \right\}.
\]
Applying Lemma \ref{lemma:min_eig_V} to each action $a \in [K]$ with failure probability $\delta/K$, we have that  with probability at least $1-\frac{\delta}{K}$, $\lambda_{\min}(\mathbf{M}_{a,t}) \geq \frac{\rho \tau_{a,t}}{2}$ for all $t\in[T]$ with $\tau_{a,t} \ge T_0$. A union bound over $a \in [K]$ yields $\mathbb{P}(E) \geq 1 - \delta$. 

Under event $E$, every term in the sum $S_1$ can be upper bounded as follows
\begin{equation}
\label{eq:matrix_norm_x_s}
\|\bar{x}_s\|_{\mathbf{M}_{a,s}^{-1}} \leq \frac{|\bar{x}_{s}|}{\sqrt{\lambda_{\min}(\mathbf{M}_{a, s})}} \leq \frac{B\sqrt{2}}{\sqrt{\rho \tau_{a, s}}} ,  
\end{equation}
where in the second inequality, we use Lemma \ref{prop:sig_l2} to bound $|\bar x_s|$, and lower bound $\lambda_{\min}(\mathbf{M}_{a,s})$ by the definition of event $E$. 
This implies that the upper bound on $S_1$ is
\begin{align*}
    S_1 &= \sum_{a \in \cA} \sum_{s \in \cT_{a, T+1} \setminus \cT_{a, \epsilon}}  \|\bar{x}_s\|_{\mathbf{M}_{a, s}^{-1}} 
    \overset{(a)}{\leq }\frac{B\sqrt{2}}{\sqrt{\rho}}\sum_{a \in \cA} \sum_{s \in \cT_{a, T+1} \setminus \cT_{a, \epsilon}} \frac{1}{\sqrt{\tau_{a, s}}} \\
    &{=} \frac{B\sqrt{2}}{\sqrt{\rho}}\sum_{a \in \cA} \sum_{j = T_0 + 1}^{\cT_{a,T+1}} \frac{1}{\sqrt{j}} \overset{(b)}{\leq} \frac{2B\sqrt{2}}{\sqrt{\rho}}\sum_{a \in \cA} \sqrt{\cT_{a,T+1}} \leq \frac{2B\sqrt{2}}{\sqrt{\rho}} \sum_{a \in [K]} \sqrt{\cT_{a,T+1}} \\
    &\leq \frac{2B\sqrt{2}}{\sqrt{\rho}}  \sqrt{KT},
\end{align*}
where the inequality $(a)$ follows from \eqref{eq:matrix_norm_x_s}. The inequality $(b)$ follows from 
the inequality $\sum_{j=m}^n \frac{1}{\sqrt{j}} \leq 2(\sqrt{n} - \sqrt{m-1}) \leq 2 \sqrt{n}$. Lastly, Cauchy-Schwarz over $a$ together with the observation that the total number of selections for each action equals the total number of rounds, $\sum_{a \in [K]} \tau_{a, T+1} = T$, gives the final inequality.
\end{proof}

\section{Numerical Experiments}
\label{sec:exp}
In this section, we include  synthetic experiments, as well as  three applications with real datasets to monitor temperature sensor networks, classify sleep stages, and adaptively decide hospital nurse staffing, to assess the empirical performance of \texttt{DisSigUCB} under different settings. For benchmarks, we use \texttt{DisLinUCB} \cite{LCLLCB:11} for the synthetic experiments and, in addition, \texttt{KernelUCB} \cite{VKMFCKCB:13} with a Gaussian radial basis function kernel for the three numerical applications. Since these baselines take vector-valued rather than path-valued contexts, we use the time-average of each context window as input.

\subsection{Synthetic Experiments}
We start with three synthetic contextual processes in $\sR^d$ with $d=1$: Brownian motion (BM) and Geometric Brownian motion (GBM) for the nonstationary process, and Ornstein-Uhlenbeck (OU) process for the stationary process.

We set the time horizon  $T=100$, window length $L=1$, and simulate all contextual processes using the Euler--Maruyama method with $1000$ grid points per unit time interval. Thus, the time step  $\Delta t=10^{-3}$, and we write $t_k=k\Delta t$. Let $\varepsilon_k\sim \mathcal N(0,\Delta t)$ be i.i.d. Gaussian increments. The simulated processes are
\begin{align*}
    \text{BM:}\quad
    &X^{\mathrm{BM}}_{t_{k+1}}
    =
    X^{\mathrm{BM}}_{t_k}
    +
    \varepsilon_k,
    &
    X^{\mathrm{BM}}_{t_0}=0,
    \\
    \text{GBM:}\quad
    &Y_{t_{k+1}}
    =
    Y_{t_k}
    +
    \alpha Y_{t_k}\Delta t
    +
    \nu Y_{t_k}\varepsilon_k,
    &
    Y_{t_0}=1,
    \\
    \text{OU:}\quad
    &X^{\mathrm{OU}}_{t_{k+1}}
    =
    X^{\mathrm{OU}}_{t_k}
    + \theta( \mu - X^{\mathrm{OU}}_{t_k})
    \Delta t
    +
    \sigma\varepsilon_k,
    &
    X^{\mathrm{OU}}_{t_0}\sim \mathcal N(0,1).
\end{align*}
For the BM and OU models, the contextual processes are directly given by $X^{\mathrm{BM}}$ and $X^{\mathrm{OU}}$, respectively. For the GBM model, we apply an additional log-normalization step, as commonly used in practice. Specifically, for an observation window $[t-L,t]$, we use the log-return path $X^{\mathrm{GBM}}_{[t-L,t]}$, defined by
\[
    X^{\mathrm{GBM}}_s
    =
    \log\left(\frac{Y_s}{Y_{t-L}}\right),
    \qquad s\in[t-L,t],
\]
as the contextual vector. 


 

 \subsubsection{Empirical Validation of Assumption \ref{assum:unified}(ii).} \label{app:verify_pd}



We first validate that Assumption \ref{assum:unified}(ii) holds under our pruning strategy (cf. Proposition \ref{prop:prune_time}). For each round $t \in [T]$, we estimate the Gram matrix as
\(
\widehat\Sigma_t := \frac{1}{t} \sum_{s=1}^t \bar{x}_s \bar{x}_s^\top.
\)
We then compute its minimum eigenvalue $\lambda_{\min}(\widehat\Sigma_t)$.

In our simulations, we consider BM, GBM, and OU processes with varying signature depths $N=1,2,3,4$. Figure \ref{fig:val_assumption_pd} reports the median, as well as the 25\% and 75\% quantiles, of the smallest eigenvalue over 100 independent trials. Figure \ref{fig:val_assumption_pd} shows that, after pruning, the empirical Gram matrices $\widehat{\Sigma}_t$ quickly become well-conditioned: across all three stochastic processes, $\lambda_{\min}(\widehat{\Sigma}_t)$ rapidly stabilizes at a strictly positive level. As $N$ increases, this stabilization occurs more slowly and the limiting value of $\lambda_{\min}(\widehat{\Sigma}_t)$ decreases, consistent with the growth in feature dimension and the increased collinearity among higher-order iterated integrals.

\begin{figure}[H]
  \centering
  \begin{subfigure}[t]{0.325\linewidth}
    \centering
    \includegraphics[width=\linewidth]{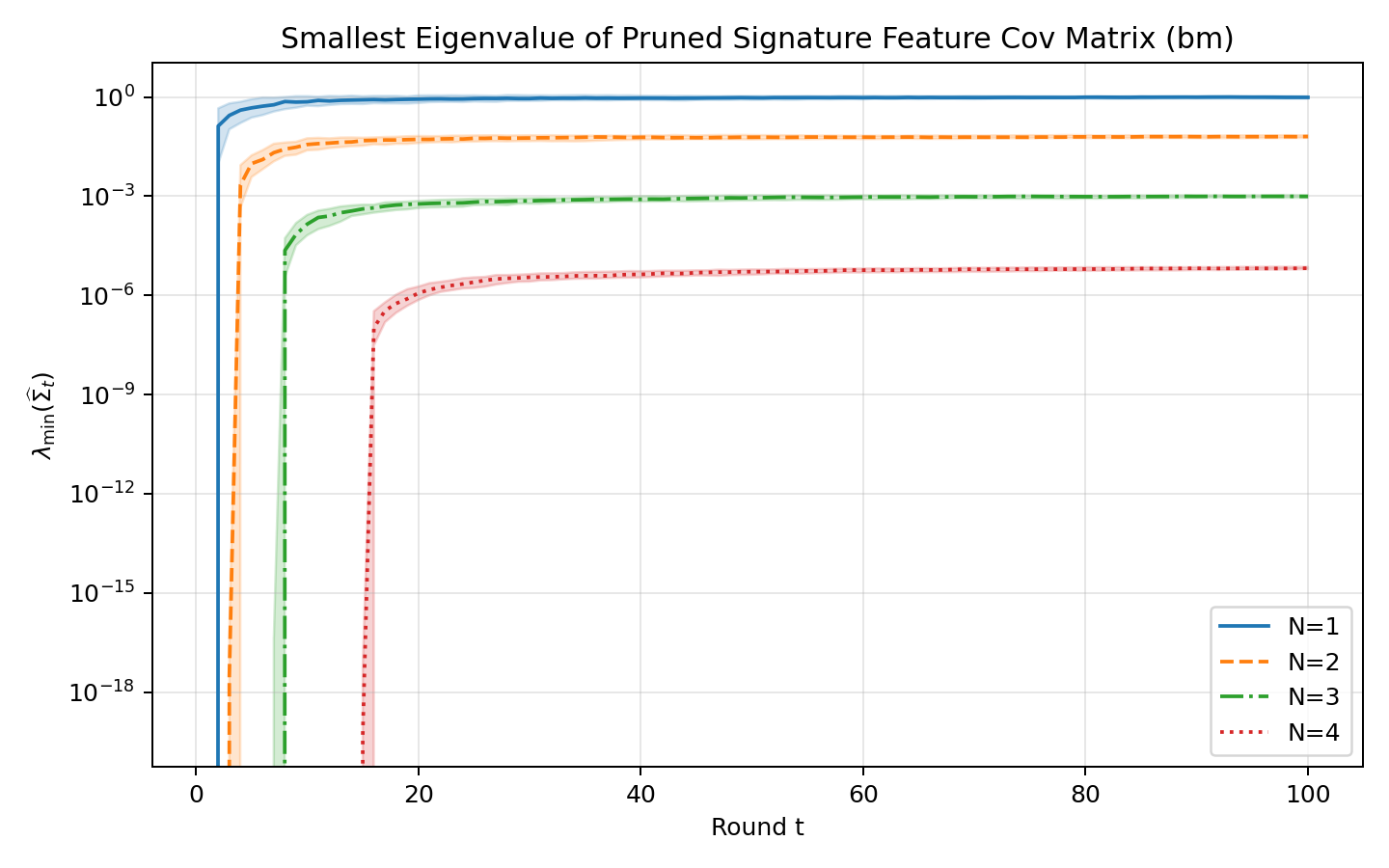}
    \caption{Brownian Motion.}
    \label{fig:pd_bm}
  \end{subfigure}
  \begin{subfigure}[t]{0.325\linewidth}
    \centering
    \includegraphics[width=\linewidth]{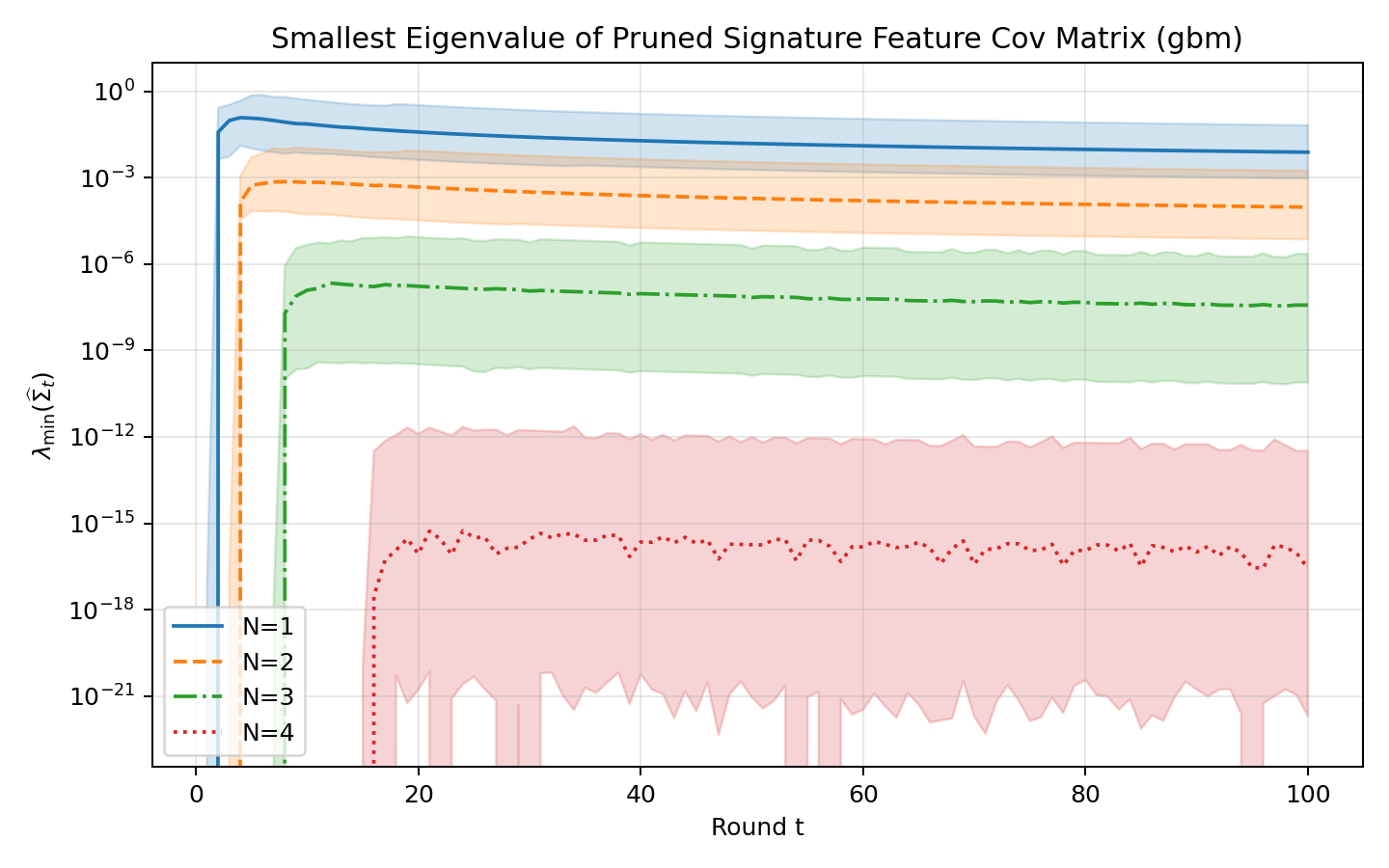}
    \caption{Geometric Brownian Motion\\ ($\alpha=0.1, \nu=1$).}
    \label{fig:pd_gbm}
  \end{subfigure}
  \begin{subfigure}[t]{0.325\linewidth}
    \centering
    \includegraphics[width=\linewidth]{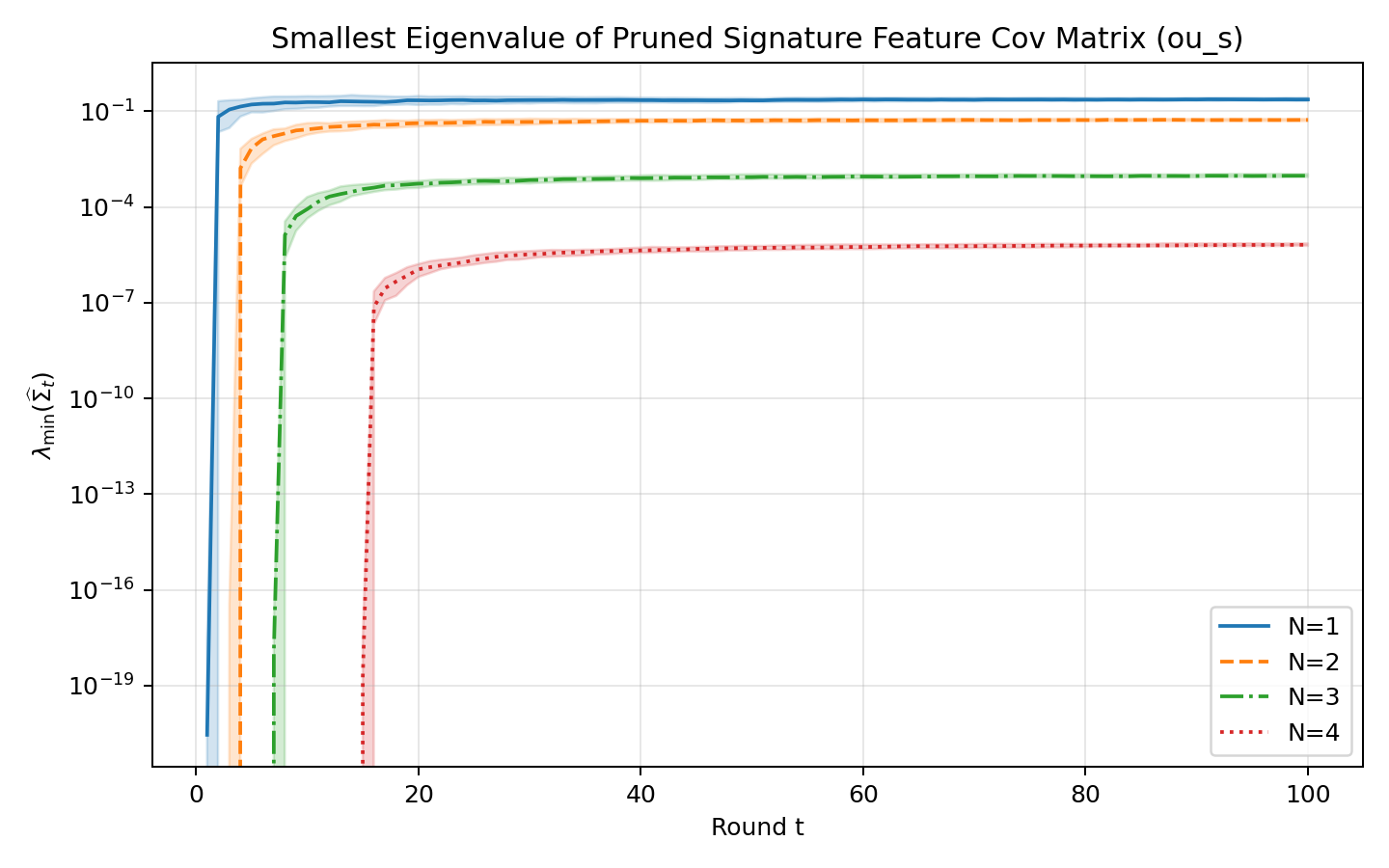}
    \caption{Ornstein-Uhlenbeck ($\theta=1, \mu=0, \sigma=1$).
    }
    \label{fig:pd_ou_s}
  \end{subfigure}

  \caption{Empirical validation of Assumption \ref{assum:unified}(ii): smallest eigenvalue of Gram matrix.}
  \label{fig:val_assumption_pd}
\end{figure}

{
\begin{figure}
   \centering
   \begin{subfigure}[t]{0.33\linewidth}
     \centering
     \includegraphics[width=\linewidth]{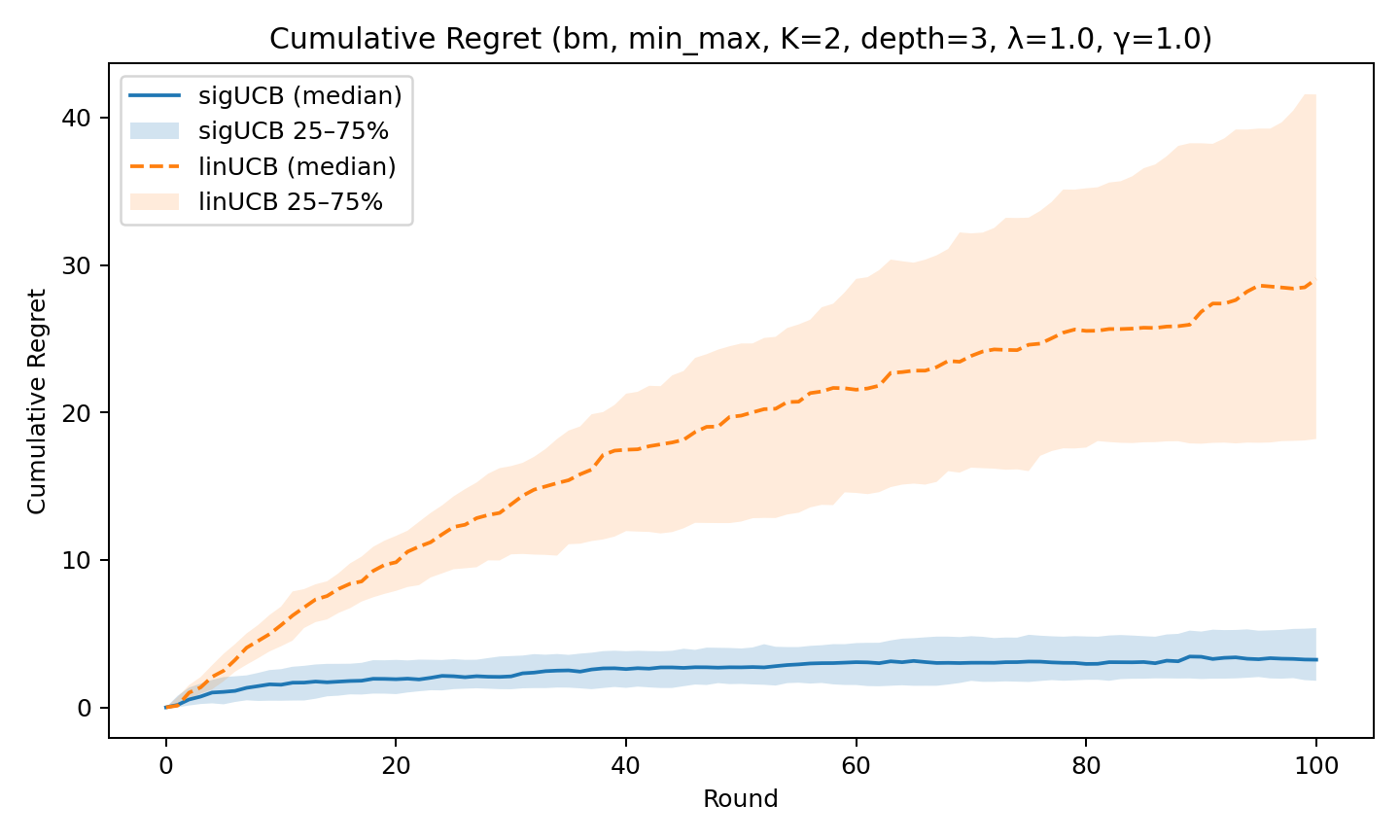}
     \caption{Brownian Motion.}
     \label{fig:bm_non_stationary_non_linear}
   \end{subfigure}\hfill
   \begin{subfigure}[t]{0.33\linewidth}
     \centering
     \includegraphics[width=\linewidth]{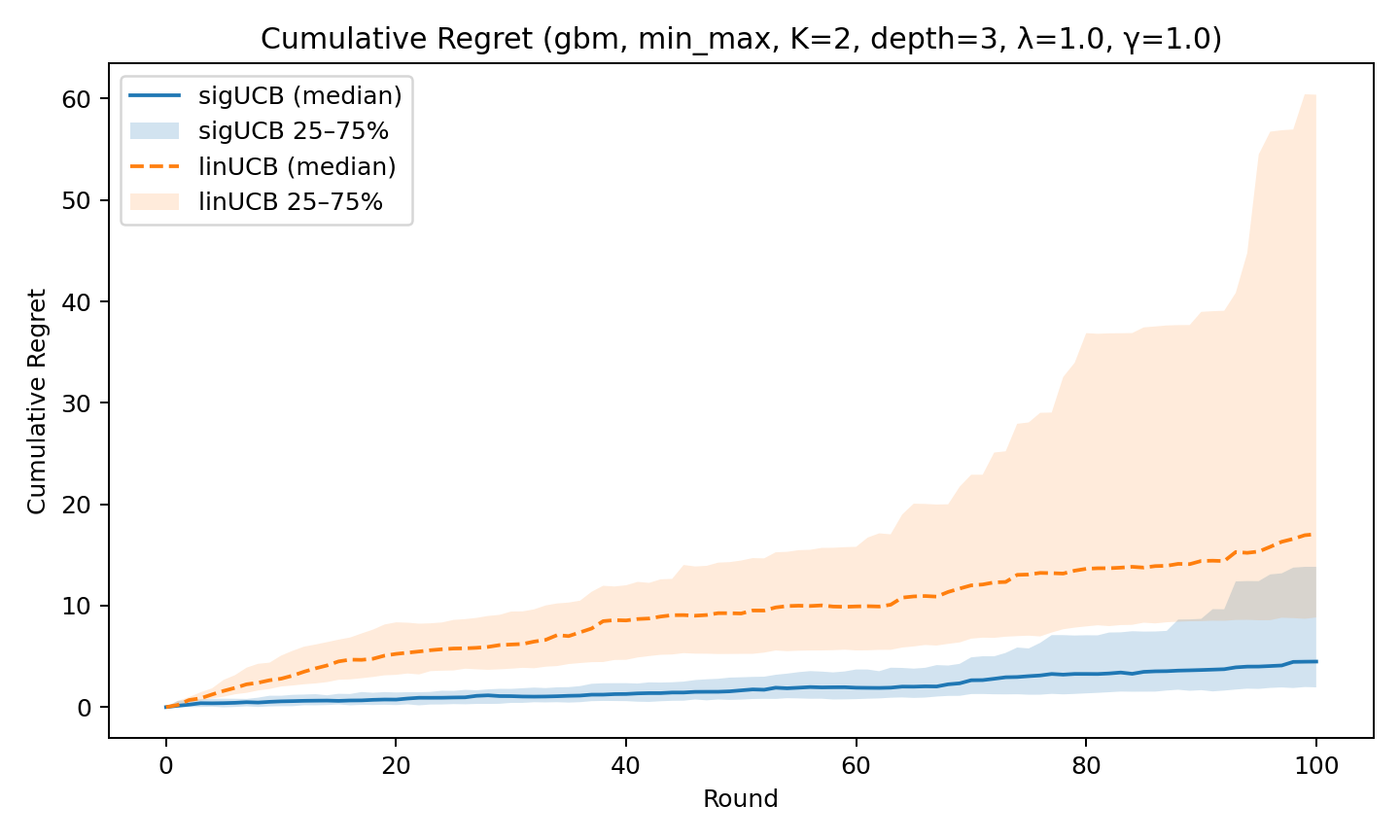}
     \caption{Geometric Brownian Motion \\ ($\alpha=0.1$, $\nu = 1$).}
     \label{fig:gbm_non_stationary_non_linear}
   \end{subfigure}
   \begin{subfigure}[t]{0.33\linewidth}
     \centering
      \includegraphics[width=\linewidth]{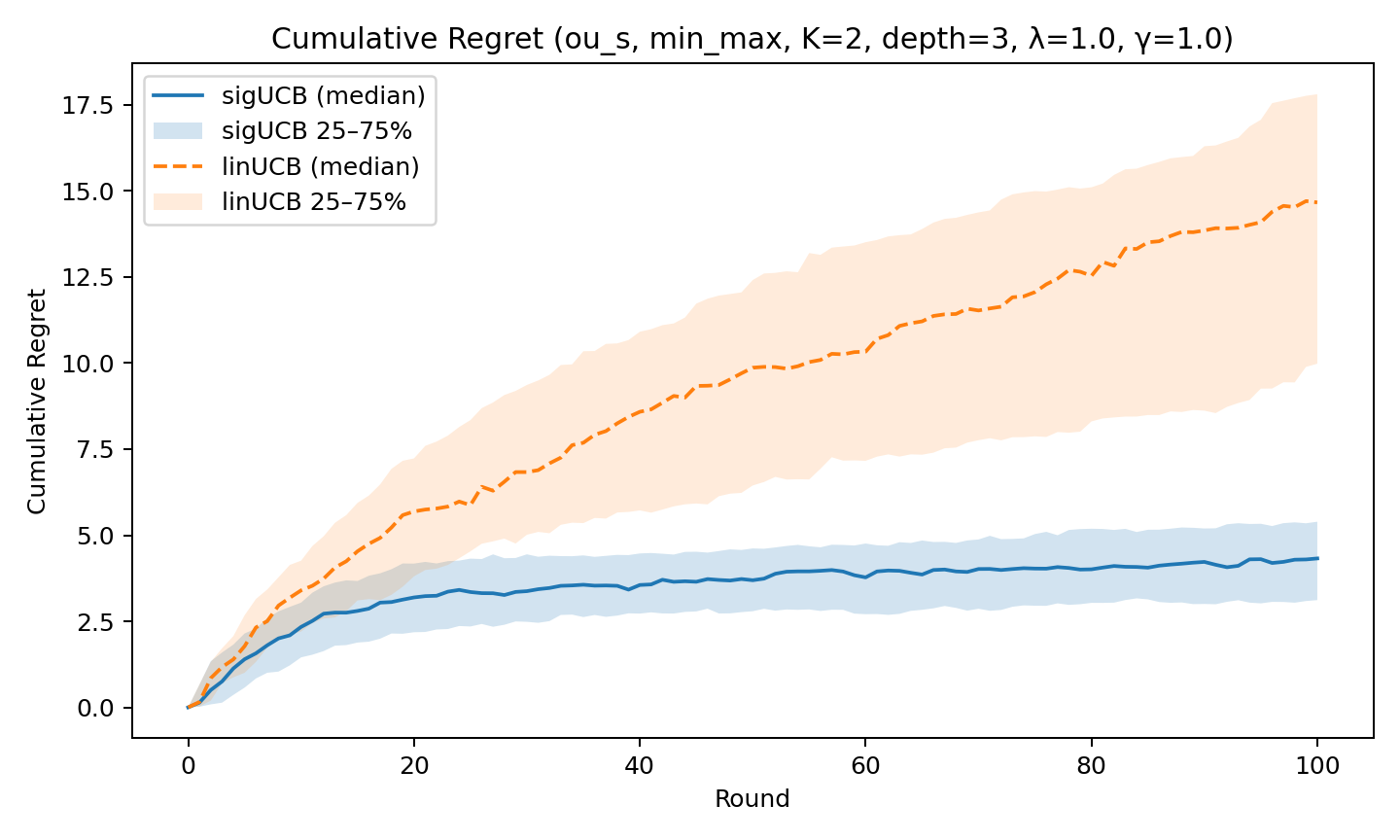}
     \caption{Stationary Ornstein-Uhlenbeck \\
     ($\mu=0, \theta=1,\sigma=1$).}
   \end{subfigure}
   \caption{Cumulative regret with nonlinear reward; (a) and (b) are non-stationary Brownian motion and geometric Brownian motion, (c) is the stationary OU process.}
   \label{fig:non_stationary_non_linear}
\end{figure}
\begin{figure}
   \centering
   \begin{subfigure}[t]{0.33\linewidth}
     \centering
     \includegraphics[width=\linewidth]{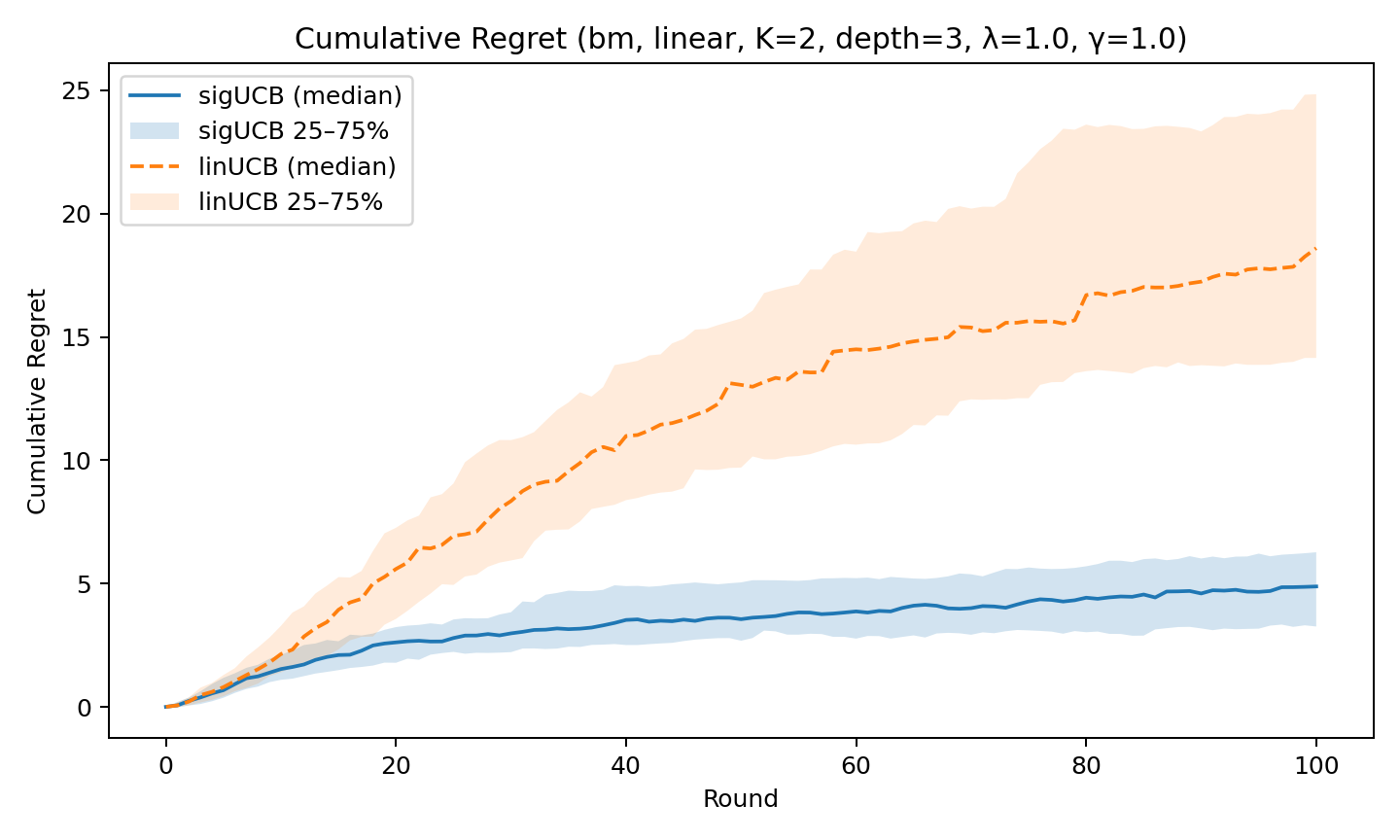}
     \caption{Brownian Motion.}
     \label{fig:bm_non_stationary_linear}
   \end{subfigure}\hfill
   \begin{subfigure}[t]{0.33\linewidth}
     \centering
     \includegraphics[width=\linewidth]{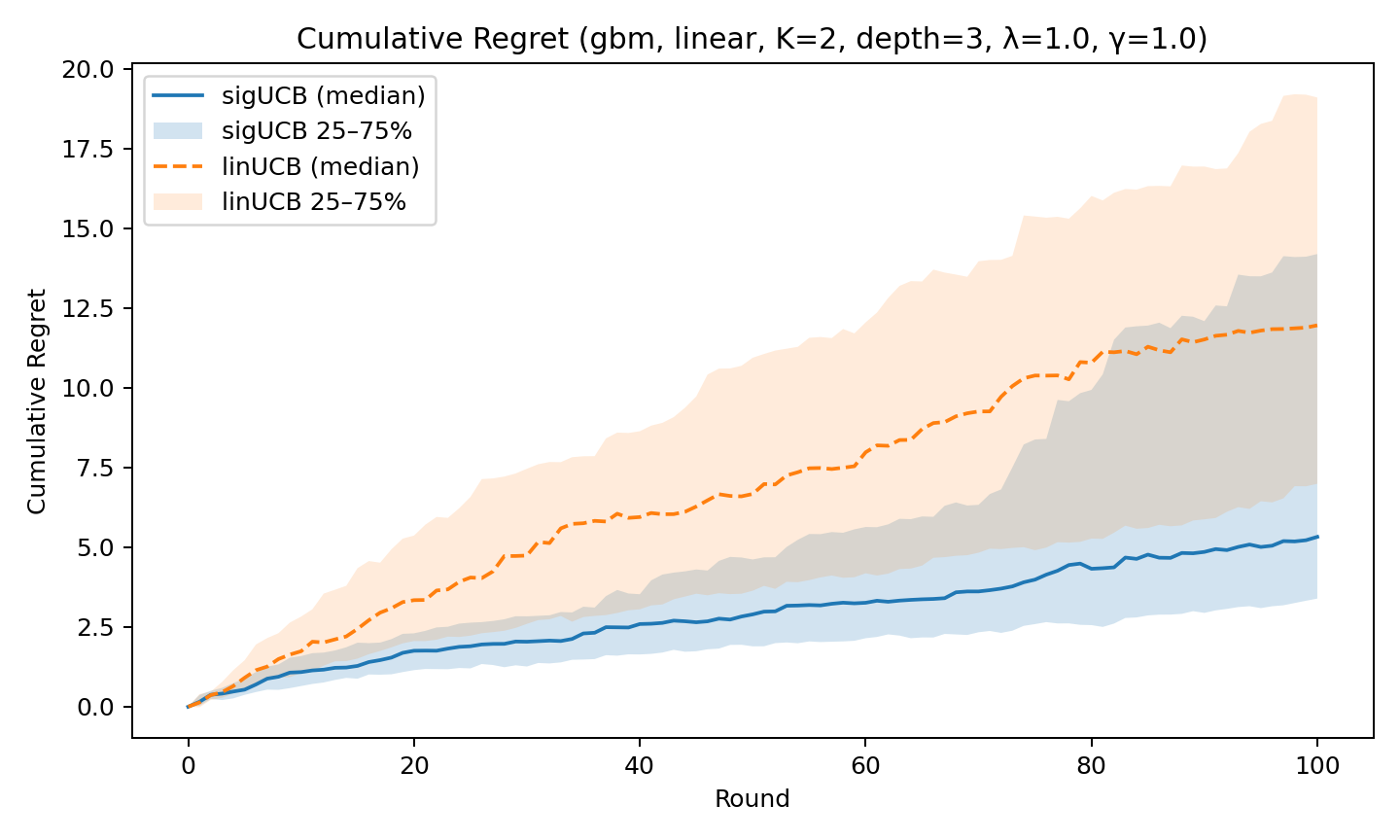}
     \caption{Geometric Brownian Motion \\ ($\alpha=0.1$, $\nu = 1$).}
     \label{fig:gbm_non_stationary_linear}
   \end{subfigure}
      \begin{subfigure}[t]{0.33\linewidth}
     \centering
     \includegraphics[width=\linewidth]{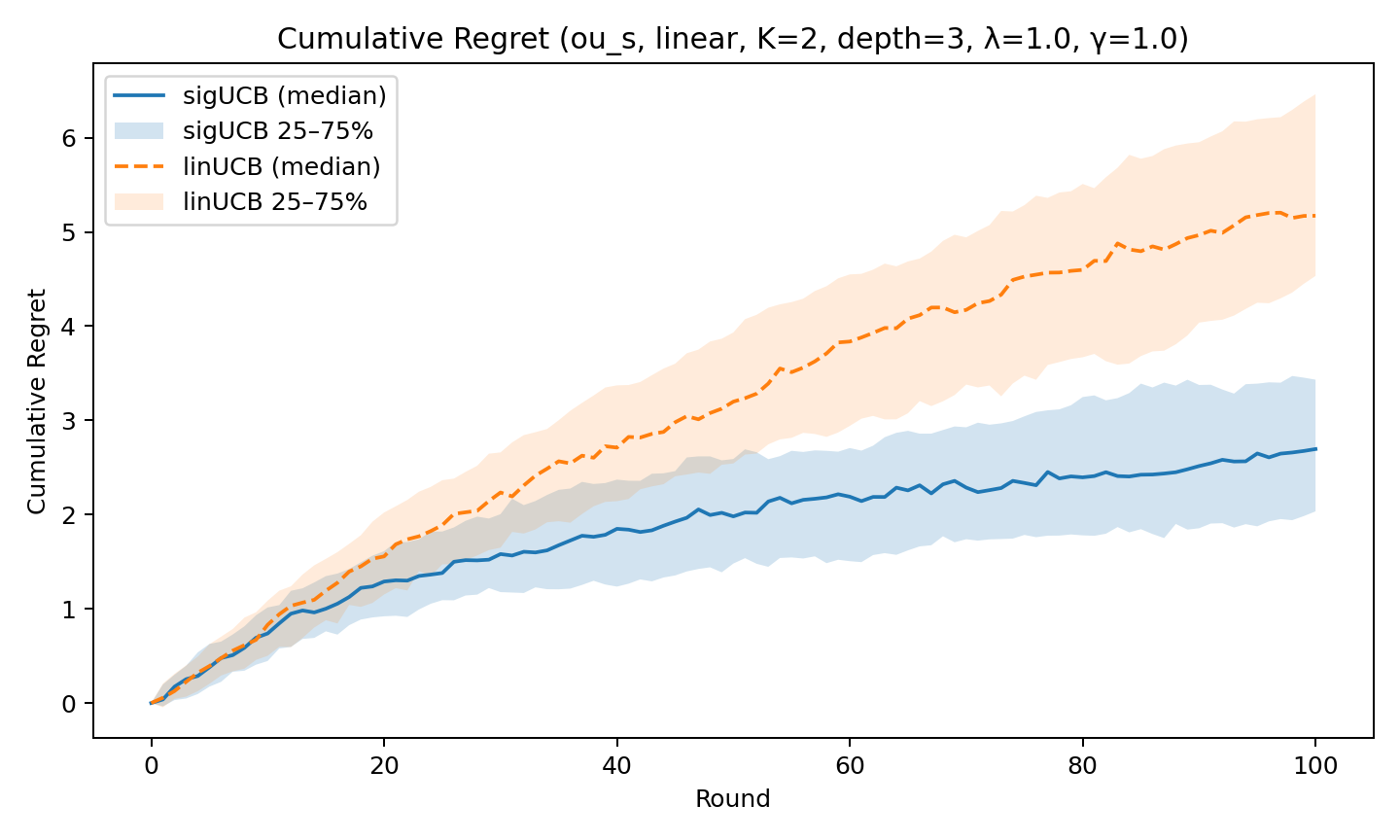}
     \caption{Stationary Ornstein-Uhlenbeck\\   ($\mu=0, \theta=1,\sigma=1$).}
     \label{fig:ou_stationary_linear}
   \end{subfigure}
   \caption{Cumulative regret with linear reward; (a) and (b) are non-stationary Brownian motion and geometric Brownian motion, (c) is the stationary OU process.}   \label{fig:non_stationary_linear}
\end{figure}}
\subsubsection{Regret performance with linear and nonlinear rewards.}
 For this set of experiments, we set the number of actions $K$ to 2. 
 We consider the following two different reward functions.
 \begin{itemize}
     \item  Linear reward function: for each action $a \in [K]$, we randomly generate $\beta^*_a \in \mathbb{R} \sim \text{Unif}(-1,1)$. The linear reward is then defined by
 \begin{equation}
 \label{eq:linear_reward}
     f_a(X_{[t-1,t]}) = \beta^*_a \bar{X}_{[t-1,t]},
 \end{equation}
 where $\bar{X}_{[t-1,t]}$ is the mean value of $X_{[t-1,t]}$.
 \item  Nonlinear reward function: for each action $a\in [K]$,
 \[
 f_a(X_{[t-1,t]}) = \begin{cases}
     |\max X_{[t-1,t]}| & \text{ if } a = 1, \\
     |\min X_{[t-1,t]}| & \text{ if } a = 2.
 \end{cases}
 \] 
 \end{itemize}

For our proposed algorithm $\texttt{DisSigUCB}$ and the baseline $\texttt{DisLinUCB}$, we set the regularization parameter to $\lambda = 1$ and the exploration coefficient to $\gamma = 1$. Because $\texttt{DisLinUCB}$ cannot natively process path-valued contextual inputs, we provide it with the time-averaged context, $\bar{X}_{[t-1,t]}$, as a representative contextual vector. Per-round regret is calculated as the difference between the reward of the optimal action and that of the action selected by the algorithm. To ensure statistical significance, we perform $100$ independent iterations for each experimental configuration, covering both stationary and non-stationary contextual processes across linear and nonlinear reward models. The resulting cumulative regret curves for signature depth $N=3$ are illustrated in Figures~\ref{fig:non_stationary_non_linear} and~\ref{fig:non_stationary_linear}.

In the nonlinear reward setting (see Figure~\ref{fig:non_stationary_non_linear}), $\texttt{DisSigUCB}$ consistently achieves substantially lower cumulative regret than $\texttt{DisLinUCB}$ across all three contextual processes. This performance gain supports the empirical benefits of the signature transform's universal nonlinearity. In the linear reward setting (see Figure~\ref{fig:non_stationary_linear}), $\texttt{DisSigUCB}$ continues to outperform $\texttt{DisLinUCB}$ in both the non-stationary and stationary regimes; this suggests that the signature map remains competitive even when the true reward is linear by providing a richer set of predictive features. Notably, this advantage persists under the stationary OU process, which exhibits short-range temporal dependence. This indicates that $\texttt{DisSigUCB}$ effectively leverages the underlying temporal structure within the observed context windows to enhance its predictive accuracy.

\subsection{Learning to Monitor Temperature Sensor Networks}
We consider a spatiotemporal sensor monitoring task similar to that in \cite{SZBPWKCB:11, CGKCB:17}, using the \href{https://db.csail.mit.edu/labdata/labdata.html}{Intel Berkeley Research Lab dataset} \cite{ILD:04}. The dataset contains temperature measurements from $K=54$ sensors deployed throughout the lab approximately every 30 seconds. We assume that there are four anchor sensors always available to the learner. In each hour (round) $t$, the learner observes the anchors' recent temperature measurements and selects one of the \(K\) sensors to approximate either the network-wide average or maximum temperature during the next hour.

{\bf Context.} Let $X_{t,k}$ denote the temperature recorded by sensor $k\in[K]$ at time $t$. We aggregate the data via mean in 5-minute intervals giving 12 observations per hour. Let $\cA_0 \subset [K]$ be the set of anchor sensors. We define the contextual process by
$
Y_t := (X_{t,k})_{k\in \mathcal A_0} \in \mathbb{R}^{|\cA_0|}.
$ 
Then the learner observes $Y_{[t-1,t)}$ before they make a decision for hour $t$. 

{\bf Action and reward.}
At each hour $t$, the learner chooses a sensor $A_t \in [K]$. We consider two types of rewards. The first one is the negative absolute difference of the average temperature of the chosen sensor $A_t$ and the true average temperature,
\(
r_t = -|\bar{X}_{t, A_t} - \sum_{k \in [K]} \frac{\bar{X}_{t,k}}{K}|,
\)
where $\bar{X}_{t,k}$ is the average of $X_{s,k}$ over the time interval $[t, t+1)$. The second reward  is the negative absolute difference of the max temperature of the chosen sensor $A_t$ and the true max temperature,
\(
r_t = -|\max_{s \in [t,t+1)} X_{s, A_t} - \max_{s \in [t, t+1), k \in [K]} X_{s,k}|.
\)

{\bf Results.} We use the first 24 hours from February 28, 2004 1 AM to tune parameters for all algorithms and then evaluate from February 29, 2004 1 AM to March 8, 2004 1 AM.  We run 10 trials, randomly selecting \(|\mathcal A_0|=4 \) anchor sensors in each trial. Restricting the signature-depth grid to \(N\leq 3\) did not noticeably affect the \texttt{DisSigUCB}'s performance. Figures~\ref{fig:temp_sensor_avg_combined} and~\ref{fig:temp_sensor_max_combined} show the true network statistic, the median statistic of the chosen sensor, and median cumulative regret for the average- and maximum-temperature tasks. Both statistics exhibit strong daily fluctuations and an increasing trend. The average-temperature task is easier, as reflected by the smaller errors. In both tasks, \texttt{DisSigUCB} outperforms \texttt{KernelUCB}. Our algorithm is comparable to \texttt{DisLinUCB} on the (easier) average-temperature task and better on the (harder) maximum-temperature task.

\begin{figure}[h]
    \centering
    \begin{subfigure}[t]{0.45\linewidth}
        \centering
        \includegraphics[width=\linewidth]{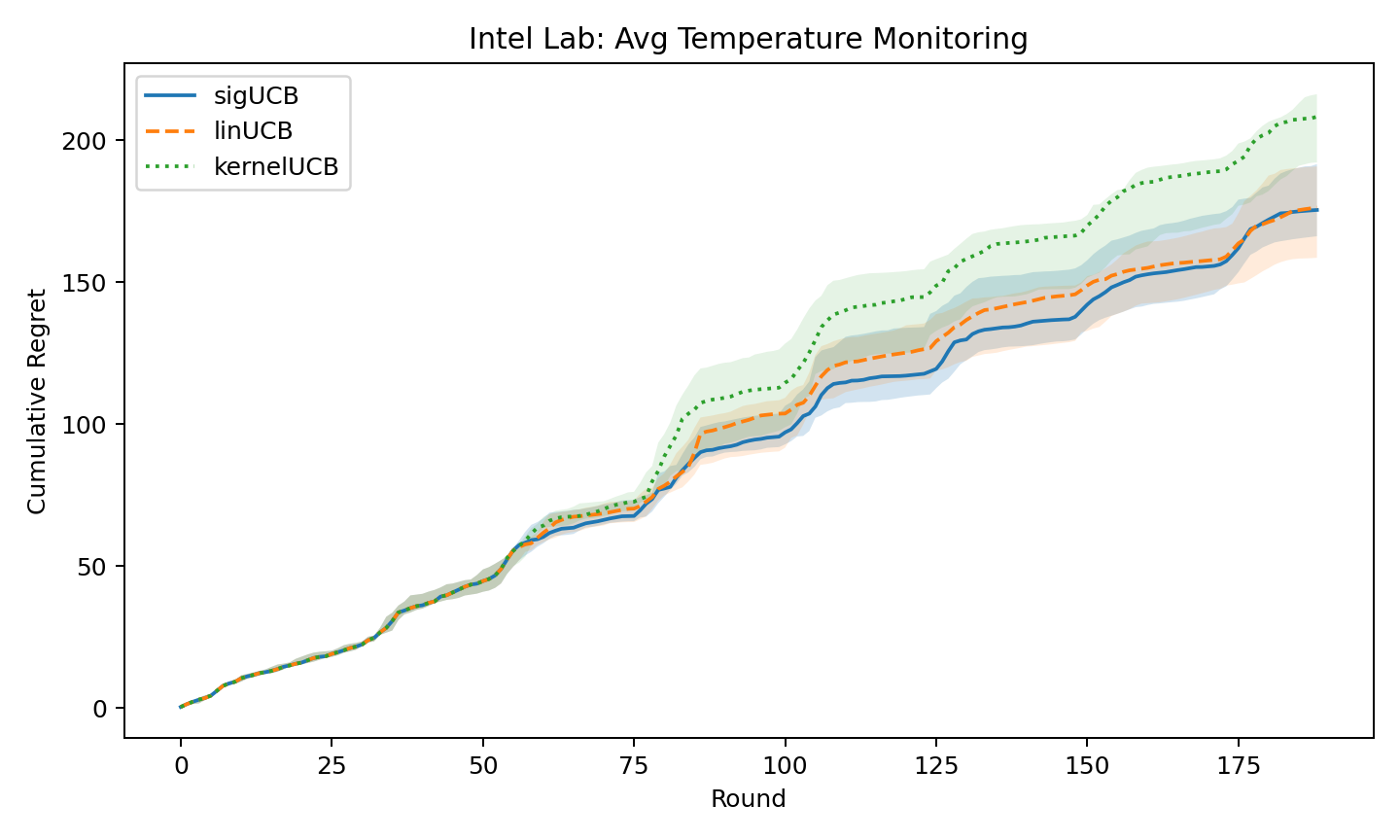}
        \caption{Cumulative regret}
        \label{fig:temp_sensor_regret}
    \end{subfigure}
    \hspace{0.03\textwidth}
    \begin{subfigure}[t]{0.45\linewidth}
        \centering
        \includegraphics[width=\linewidth]{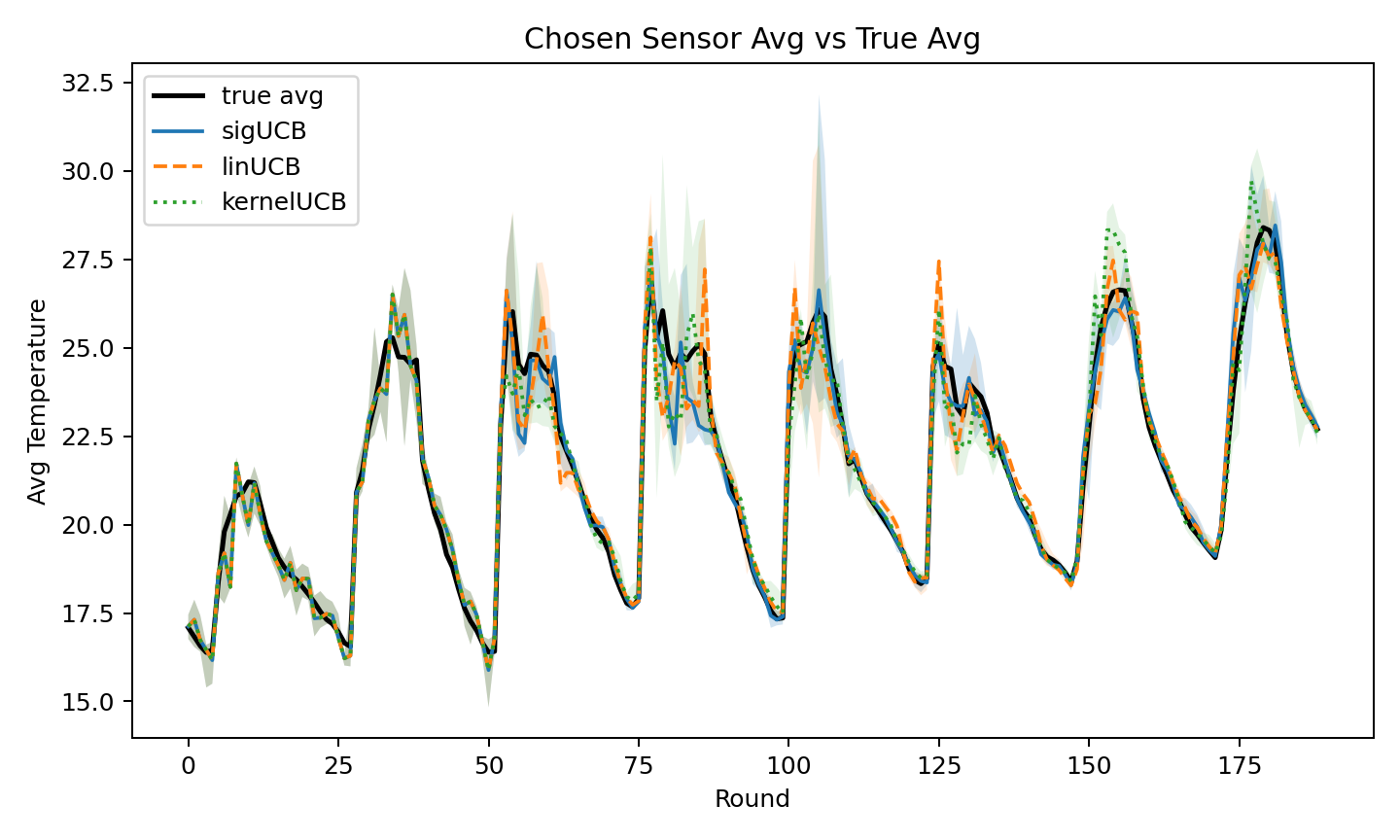}
        \caption{Average temperature of chosen sensors}
        \label{fig:temp_sensor_max}
    \end{subfigure}
     \caption{Average-temperature task. Lines show medians; shaded bands show 25th--75th percentiles.}
\label{fig:temp_sensor_avg_combined}
\end{figure}

\begin{figure}
    \centering
    \begin{subfigure}[t]{0.45\linewidth}
        \centering
        \includegraphics[width=\linewidth]{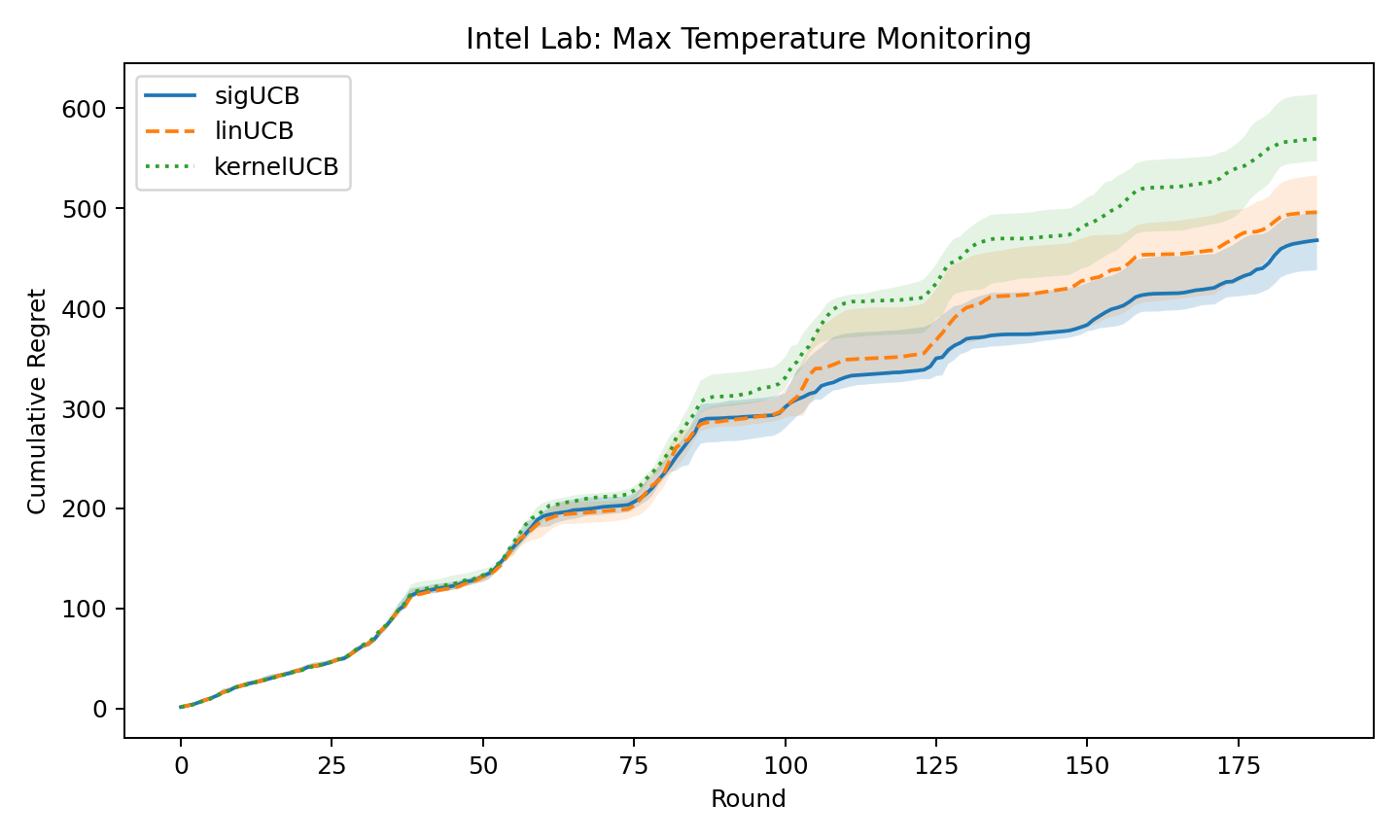}
        \caption{Cumulative regret}
        \label{fig:temp_sensor_regret}
    \end{subfigure}
    \begin{subfigure}[t]{0.45\linewidth}
        \centering
        \includegraphics[width=\linewidth]{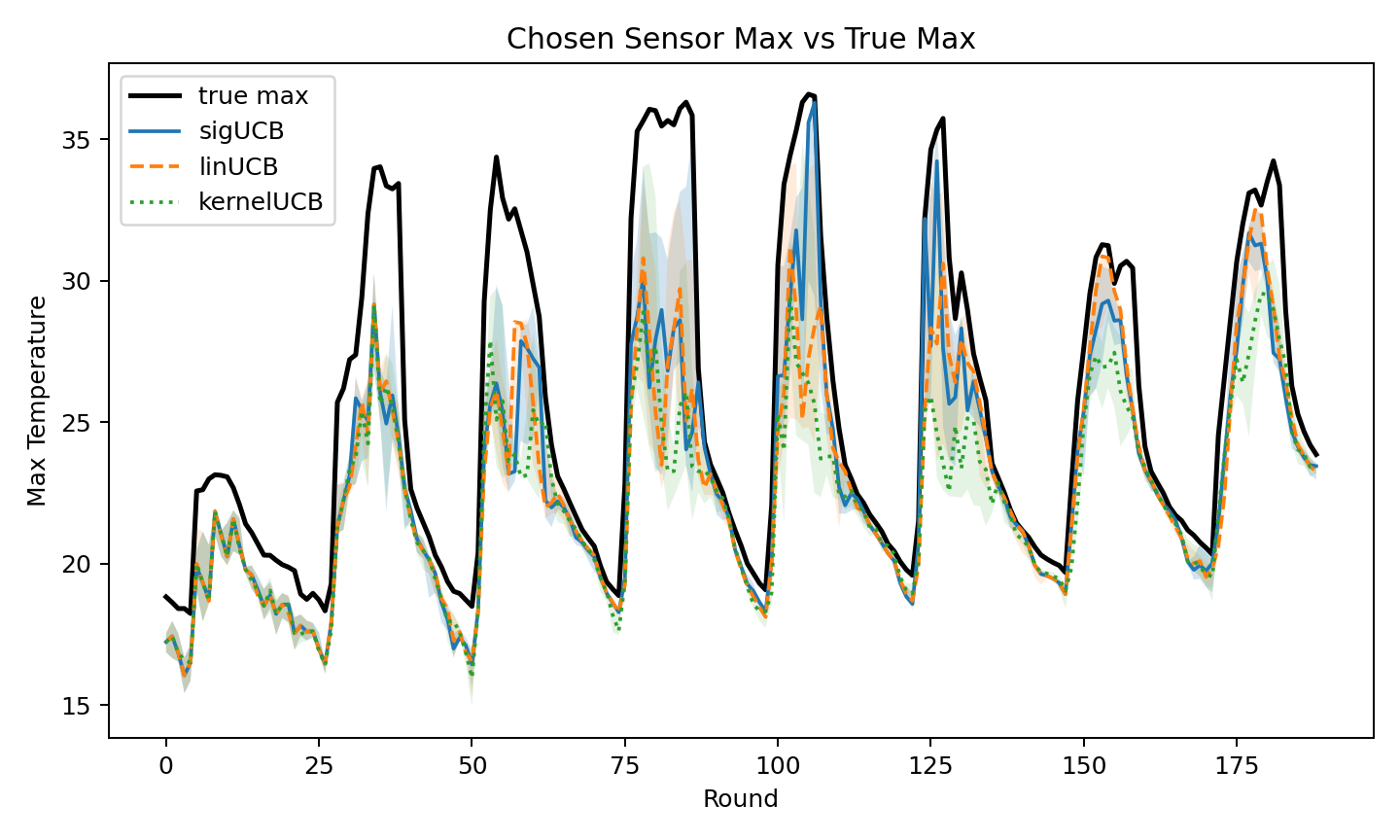}
        \caption{Max temperature of chosen sensors.}
        \label{fig:temp_sensor_max}
    \end{subfigure}
    \caption{Maximum temperature task. Lines show medians; shaded bands show 25th--75th percentiles}
    \label{fig:temp_sensor_max_combined}
\end{figure}

\subsection{Sleep Stage Classification via Cassette-Tape Recorder}
We consider an online sleep-stage recognition task based on a stream of polysomnographic recordings, including electroencephalogram (EEG) and electrooculogram (EOG), which measure brain electrical activity and eye movements. The goal is to periodically identify the subject’s current sleep state from the most recent measurements. We use the \href{https://physionet.org/content/sleep-edfx/1.0.0/sleep-cassette/#files-panel}{Sleep-EDF Database Expanded} (publicly available under ODC-BY 1.0) \cite{KZTKO:00}, which contains 150 overnight recordings with EEG and EOG channels sampled at 100 Hz. In our setup, the learner makes a classification decision every 30 seconds (one round). 

{\bf Context.}
We downsample the data by averaging from 100 Hz to 25 Hz, reducing window size while retaining fine temporal resolution. We use EEG Fpz-Cz, EEG Pz-Oz, and EOG horizontal as the contextual process
$
X_t := [\text{EEG}_{\text{Fpz-Cz},t}, \text{EEG}_{\text{Pz-Oz},t}, \text{EOG}_{\text{horizontal},t}] \in \mathbb{R}^3.
$
Since each 30-second window is sampled at 25 Hz, the learner observes
\(X_{[t-1,t)}\in\mathbb R^{750\times 3}\) to make their classification.

{\bf Action and reward.} 
The action set consists of six sleep states 
$
\cA=\{\text{Wake}, \text{N1}, \text{N2}, \text{N3}, \text{REM}, \text{Movement}\},
$
where N1, N2, and N3 are non-REM stages 1, 2, and 3, respectively. Since each window may contain multiple sleep stages, we use a soft-label reward. For each round $t$, define \(
P_t(a) := \frac{1}{750}\sum_{j=1}^{750}\mathbf{1}\{\text{stage}(30(t-1)+\frac{j-1}{25})=a\}\) for any \(a \in \cA\) as the fraction of samples in the window labeled as stage \(a\). So, after choosing \(A_t \in \cA\), the learner receives reward
\(
r_t=P_t(A_t).
\)
Thus, the optimal action is the sleep stage occupying the largest fraction of the current window.

{\bf Results.} 
For each sleep cassette, we use the first 100 minutes (200 rounds) after at least one minute of sleep to tune hyperparameters and evaluate on the following 500 minutes (1000 rounds). Increasing the signature-depth grid from \(N\leq 2\) to \(N\leq 3\) did not significantly improve performance. Figure \ref{fig:sleep_edf_combined} shows the median cumulative regret and average weighted F1-score over a rolling window of 200 rounds across all sleep cassettes.
We observe that \texttt{LinUCB} performs the worst overall, with a high median and variance in cumulative regret and low, decreasing weighted F1-score. \texttt{KernelUCB} and \texttt{DisSigUCB} have similar median cumulative regret and weighted F1-score but \texttt{KernelUCB} has larger variability. Within the first 100 rounds, \texttt{DisSigUCB} obtains the highest average macro F1-score. Table \ref{tab:sleep_edf} reports classification metrics for the final 200 rounds, where \texttt{DisSigUCB} outperforms both benchmarks in mean accuracy and mean weighted F1-score.

\begin{figure}
    \centering
    \begin{subfigure}[t]{0.45\linewidth}
        \centering
        \includegraphics[width=\linewidth]{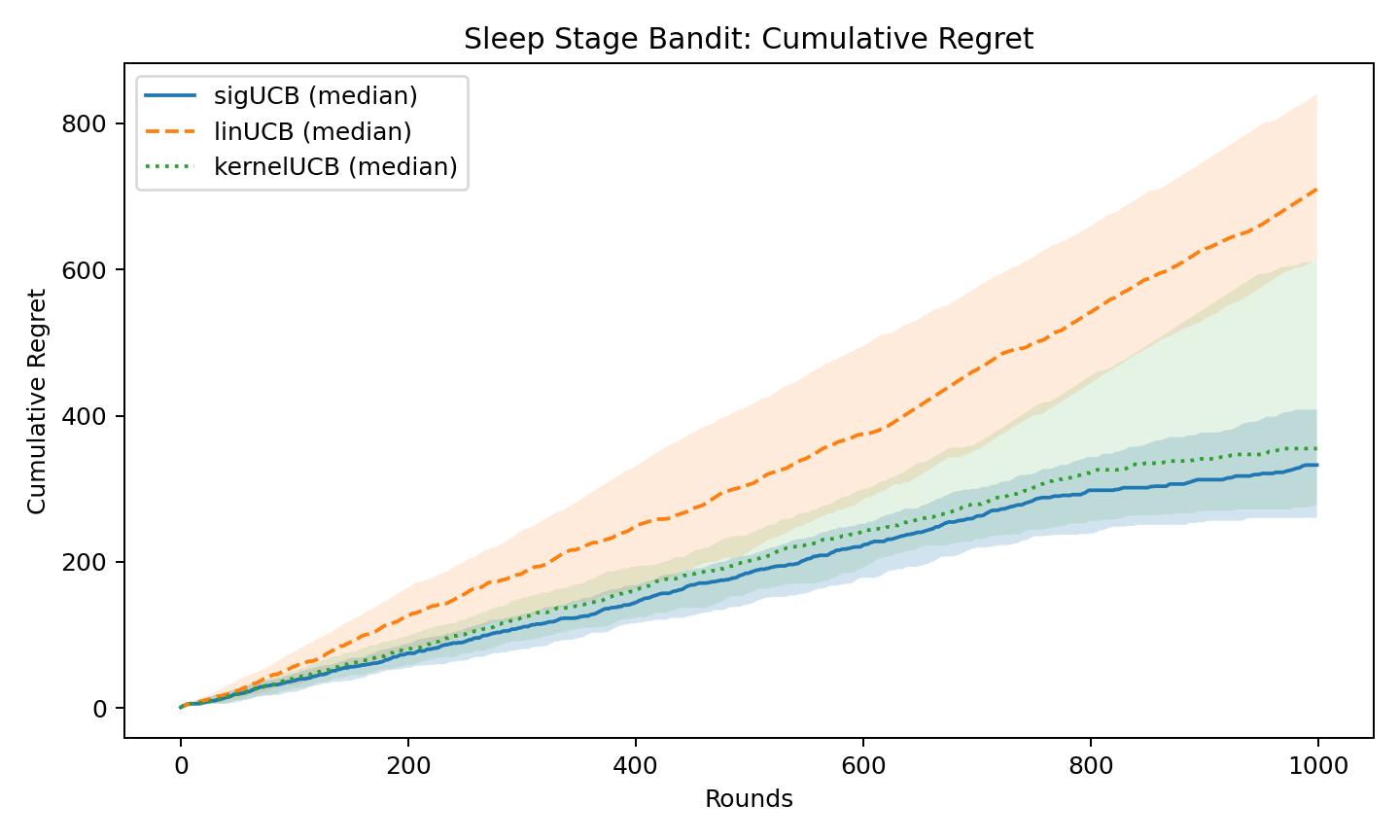}
        \caption{Cumulative regret}
        \label{fig:sleep_edf_regret}
    \end{subfigure}
    \begin{subfigure}[t]{0.45\linewidth}
        \centering
        \includegraphics[width=\linewidth]{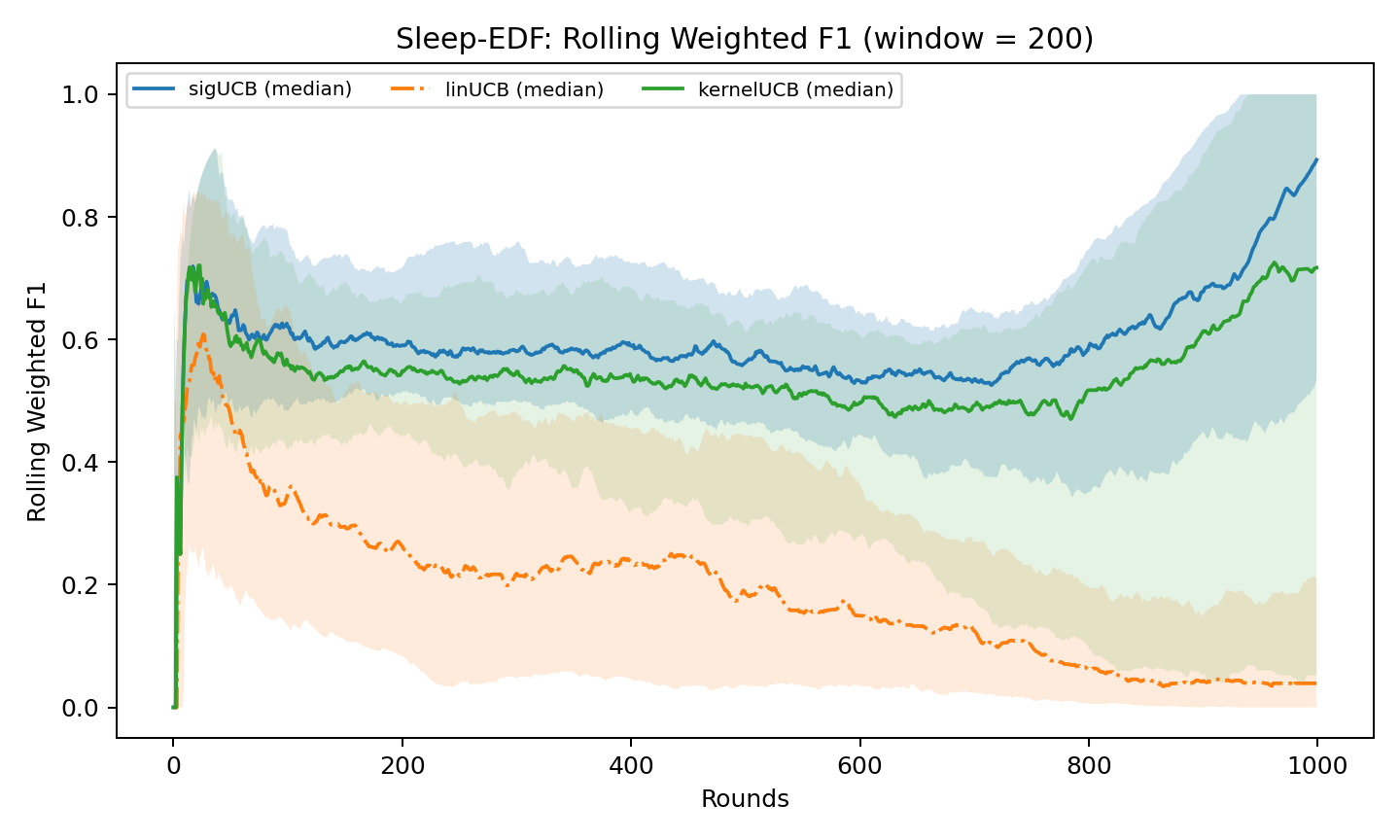}
        \caption{Rolling weighted F1-score}
        \label{fig:sleep_edf_f1}
    \end{subfigure}
    \caption{Sleep stage classification. Lines show medians; shaded bands show 25th--75th percentiles.}
    \label{fig:sleep_edf_combined}
\end{figure}

\begin{table}[h]
    \small
    \caption{Mean classification metrics for sleep stage classification with standard errors.}
    \centering
    \begin{tabular}{ccc}
        \toprule
         Algorithm & Accuracy & Weighted F1-Score \\
         \midrule
        \texttt{DisSigUCB} & \textbf{0.726} (0.028) & \textbf{0.718} (0.029) \\
        \texttt{KernelUCB} & 0.616 (0.033) & 0.595 (0.035)\\
        \texttt{LinUCB} & 0.149 (0.018) & 0.144 (0.035)\\
        \bottomrule
    \end{tabular}
    \label{tab:sleep_edf}
\end{table}

\subsection{Hospital Nurse Staffing}
\label{sec:nurse_staffing}
We test the nurse staffing problem considered in \cite{HW:25, KMS:25} where one needs to select the weekly nurse staffing level based on recent emergency department (ED) visit history. We use the \href{https://a816-health.nyc.gov/hdi/epiquery/visualizations?Indicator=Vomiting&PageType=tsi&PopulationSource=Syndromic&Subtopic=All&Topic=All&Year=2025}{Syndromic Surveillance (Diseases and Conditions) dataset} \cite{NYCDOHM:26} from NYC Health
and extract the vomiting-related ED visit counts from January 1, 2019 to December 31, 2025. 

{\bf Context.}
Each round $t$ corresponds to one week. Let $V_{t,s}$ denote the total number of vomiting-related ED visits on day $s \in [7]$ of week $t$. At the end of week $t$, the learner observes the daily visit counts from that week, 
\(
X_t := \left(V_{t,s}\right)_{s \in [7]} \in \mathbb{R}^7.
\)

{\bf Actions and reward.}
After observing the context $X_t$ at the end of week $t$, the learner must decide the staffing level (in nurse units) for the following week $t+1$. Following \cite{HW:25, KMS:25}, we define the nurse demand in week $t$ as $D_t := \frac{V_t}{3}$ where $V_t:=\sum_{s \in [7]} V_{t,s}$ is the total number of vomiting-related ED visits in week $t$. The action is the staffing level selected for week $t+1$:
$
A_t \in \mathcal{A} := \{200, 210, \dots, 1240\}.
$
After week $t+1$ concludes, the demand $D_{t+1}$ is realized and the learner receives the negative Newsvendor loss
$
r_t = -( b(D_{t+1} - A_t)_+ + h(A_t - D_{t+1})_+ ),$
with underage and overage costs $b=0.7$ and $h=0.3$ as in \cite{HW:25, KMS:25}. Although the reward is not observed immediately, it is revealed after week \(t+1\) and before the learner selects \(A_{t+1}\).


{\bf Results.} For each iteration, we use the first six months {(26 rounds)} to tune the hyperparameters and then evaluate on the remaining 5.5 years. Restricting the signature-depth grid to \(N\leq 2\) did not noticeably affect the performance of \texttt{DisSigUCB}. We plot the median cumulative regret in addition to weekly ED visit counts versus median assigned nurse staffing over 10 trials in Figure \ref{fig:hospital_staff_combined}, where we omit the first 100 rounds since all algorithms choose the same actions. In terms of cumulative regret, \texttt{KernelUCB} performs significantly worse than both \texttt{DisLinUCB} and \texttt{DisSigUCB}. After round 200, \texttt{DisSigUCB} begins to outperform  \texttt{DisLinUCB} in obtaining a lower median cumulative regret. We observe that \texttt{KernelUCB} struggles when the weekly demand is high while \texttt{DisLinUCB} struggles when the weekly demand is low. 


\begin{figure}
    \centering
    \begin{subfigure}[t]{0.45\linewidth}
        \centering
        \includegraphics[width=\linewidth]{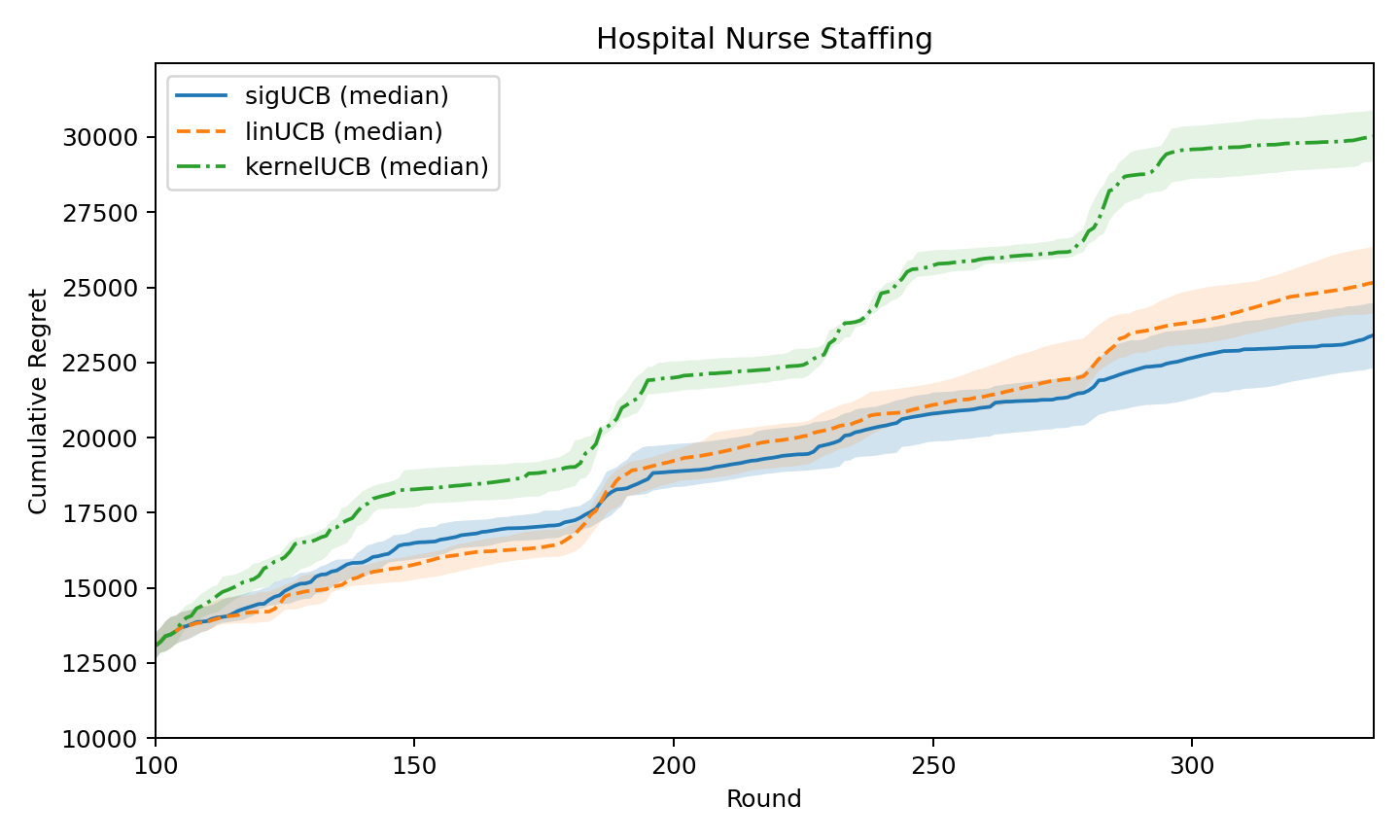}
        \caption{Cumulative regret}
        \label{fig:hospital_staff_regret}
    \end{subfigure}
    \begin{subfigure}[t]{0.50\linewidth}
        \centering
        \includegraphics[width=\linewidth]{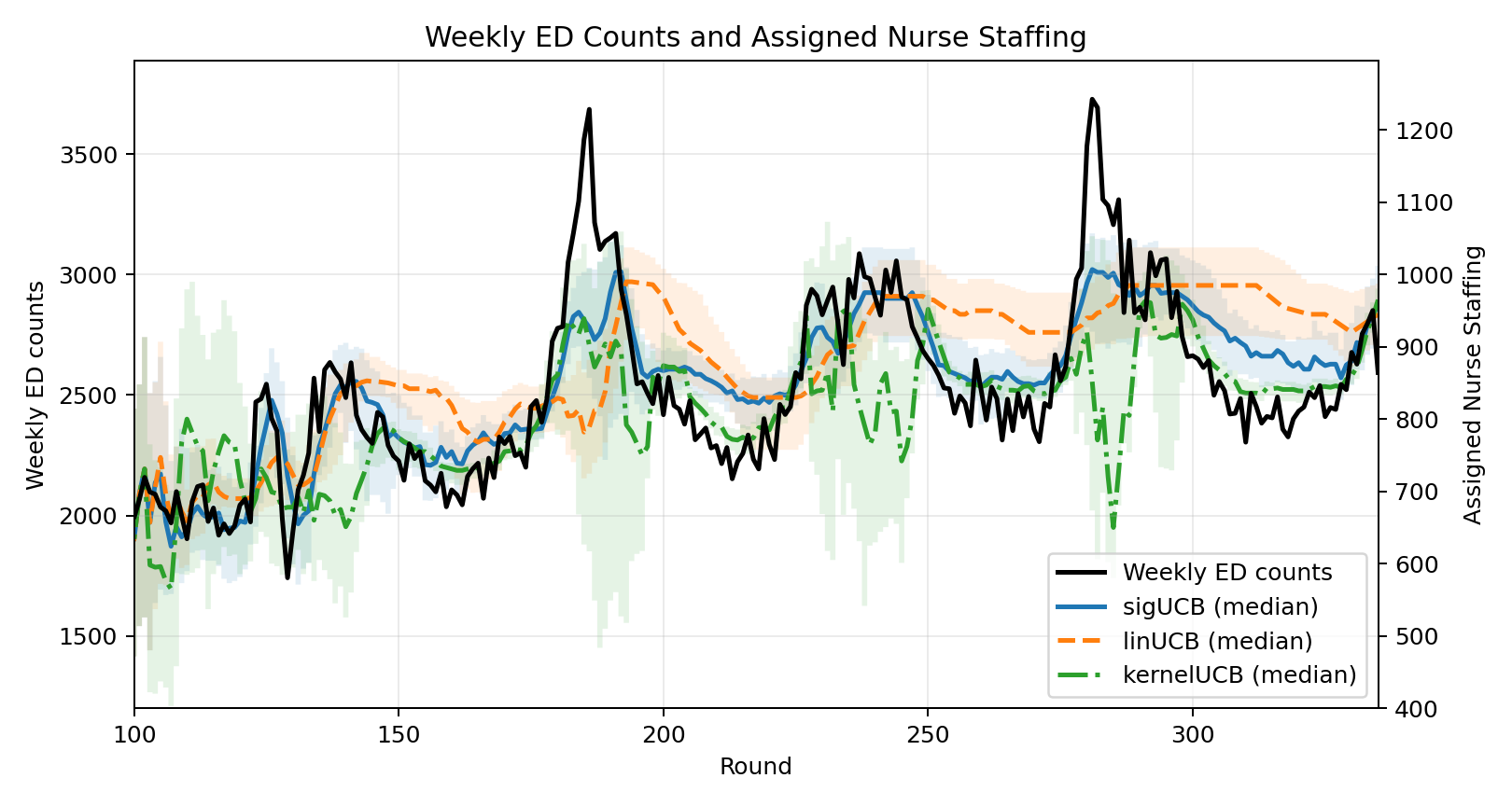}
        \caption{Weekly ED visits and chosen nurse staffing}
        \label{fig:hospital_staff_traj}
    \end{subfigure}
    \caption{Hospital nurse staffing. Lines show medians; shaded bands show 25th--75th percentiles.}
    \label{fig:hospital_staff_combined}
\end{figure}

\paragraph{Conclusion.}
This paper introduces a signature-based contextual bandit framework for nonlinear, path-dependent rewards. It retains the computational tractability of linear UCB while capturing general nonlinear, history-dependent reward structure. Finite-time regret is guaranteed under suitable assumptions and corroborated with applications. Future work may extend the analysis to broader context and observation models and refine adaptive choices of truncation depth.

\newpage
{\small 
\bibliographystyle{abbrv}
\bibliography{references}

@inproceedings{lyons2014feature,
  title={A feature set for streams and an application to high-frequency financial tick data},
  author={Lyons, Terry and Ni, Hao and Oberhauser, Harald},
  booktitle={Proceedings of the 2014 international conference on big data science and computing},
  pages={1--8},
  year={2014}
}

@inproceedings{li2017lpsnet,
  title={LPSNet: a novel log path signature feature based hand gesture recognition framework},
  author={Li, Chenyang and Zhang, Xin and Jin, Lianwen},
  booktitle={Proceedings of the IEEE international conference on computer vision workshops},
  pages={631--639},
  year={2017}
}

@article{jiang2025sig,
  title={Sig-DEG for Distillation: Making Diffusion Models Faster and Lighter},
  author={Jiang, Lei and Ge, Wen and Cariou-Kotlarek, Niels and Yi, Mingxuan and Chen, Po-Yu and Yang, Lingyi and Buet-Golfouse, Francois and Mittal, Gaurav and Ni, Hao},
  journal={arXiv preprint arXiv:2508.16939},
  year={2025}
}

@article{cao2023risk,
  title={Risk of transfer learning and its applications in finance},
  author={Cao, Haoyang and Gu, Haotian and Guo, Xin and Rosenbaum, Mathieu},
  journal={arXiv preprint arXiv:2311.03283},
  year={2023}
}

@article{lyons1998differential,
  title={Differential equations driven by rough signals},
  author={Lyons, Terry J},
  journal={Revista Matem{\'a}tica Iberoamericana},
  volume={14},
  number={2},
  pages={215--310},
  year={1998}
}

@article{morrill2020generalised,
  title={A generalised signature method for multivariate time series feature extraction},
  author={Morrill, James and Fermanian, Adeline and Kidger, Patrick and Lyons, Terry},
  journal={arXiv preprint arXiv:2006.00873},
  year={2020}
}

@book{FV:10, 
  title={Multidimensional Stochastic Processes as Rough Paths: Theory and Applications}, 
  author={Friz, Peter and Victoir, Nicolas B.}, 
  edition={},
  publisher={Cambridge University Press},
  year={2010}, 
}

@article{CGS:22,
author = {Cuchiero, Christa and Gazzani, Guido and Svaluto-Ferro, Sara},
title = {Signature-Based Models: Theory and Calibration},
journal = {SIAM Journal on Financial Mathematics},
volume = {14},
number = {3},
pages = {910-957},
year = {2023}
}

@InProceedings{LLZGLCB:17,
  title = 	 {Provably Optimal Algorithms for Generalized Linear Contextual Bandits},
  author =       {Lihong Li and Yu Lu and Dengyong Zhou},
  booktitle = 	 {Proceedings of the 34th International Conference on Machine Learning},
  pages = 	 {2071--2080},
  year = 	 {2017},
  volume = 	 {70},
  series = 	 {Proceedings of Machine Learning Research},
  month = 	 {06--11 Aug},
  publisher =    {PMLR},
}

@inproceedings{ZXBGLCB:19,
 author = {Zhou, Zhengyuan and Xu, Renyuan and Blanchet, Jose},
 booktitle = {Advances in Neural Information Processing Systems},
 publisher = {Curran Associates, Inc.},
 title = {Learning in Generalized  Linear Contextual Bandits with Stochastic Delays},
 volume = {32},
 year = {2019}
}

@inproceedings{SZBPWKCB:11,
 author = {Krause, Andreas and Ong, Cheng},
 booktitle = {Advances in Neural Information Processing Systems},
 publisher = {Curran Associates, Inc.},
 title = {Contextual Gaussian Process Bandit Optimization},
 volume = {24},
 year = {2011}
}

@misc{VKMFCKCB:13,
      title={Finite-Time Analysis of Kernelised Contextual Bandits}, 
      author={Michal Valko and Nathaniel Korda and Remi Munos and Ilias Flaounas and Nelo Cristianini},
      year={2013},
      eprint={1309.6869},
      archivePrefix={arXiv},
}

@InProceedings{CGKCB:17,
  title = 	 {On Kernelized Multi-armed Bandits},
  author =       {Sayak Ray Chowdhury and Aditya Gopalan},
  booktitle = 	 {Proceedings of the 34th International Conference on Machine Learning},
  pages = 	 {844--853},
  year = 	 {2017},
  volume = 	 {70},
  series = 	 {Proceedings of Machine Learning Research},
  month = 	 {06--11 Aug},
  publisher =    {PMLR},
}

@inbook{DSLCB:24,
   title={Linear Contextual Bandits with Hybrid Payoff: Revisited},
   booktitle={Machine Learning and Knowledge Discovery in Databases. Research Track},
   publisher={Springer Nature Switzerland},
   author={Das, Nirjhar and Sinha, Gaurav},
   year={2024},
   pages={441–455} }

@inproceedings{LCLLCB:11,
   title={A contextual-bandit approach to personalized news article recommendation},
   booktitle={Proceedings of the 19th international conference on World wide web},
   publisher={ACM},
   author={Li, Lihong and Chu, Wei and Langford, John and Schapire, Robert E.},
   year={2010},
   month=apr, pages={661–670},
   collection={WWW ’10} 
}

@inproceedings{APSLSB:11,
 author = {Abbasi-yadkori, Yasin and P\'{a}l, D\'{a}vid and Szepesv\'{a}ri, Csaba},
 booktitle = {Advances in Neural Information Processing Systems},
 publisher = {Curran Associates, Inc.},
 title = {Improved Algorithms for Linear Stochastic Bandits},
 volume = {24},
 year = {2011}
}

@InProceedings{CMBOSOM:20,
  title = 	 {OSOM: A simultaneously optimal algorithm for multi-armed and linear contextual bandits},
  author =       {Chatterji, Niladri and Muthukumar, Vidya and Bartlett, Peter},
  booktitle = 	 {Proceedings of the Twenty Third International Conference on Artificial Intelligence and Statistics},
  pages = 	 {1844--1854},
  year = 	 {2020},
  volume = 	 {108},
  series = 	 {Proceedings of Machine Learning Research},
  month = 	 {26--28 Aug},
  publisher =    {PMLR},
}

@misc{TFI:11,
      title={Freedman's inequality for matrix martingales}, 
      author={Joel A. Tropp},
      year={2011},
      eprint={1101.3039},
      archivePrefix={arXiv},
}

@article{LV:07,
title = {An extension theorem to rough paths},
journal = {Annales de l'Institut Henri Poincaré C, Analyse non linéaire},
volume = {24},
number = {5},
pages = {835-847},
year = {2007},
author = {Terry Lyons and Nicolas Victoir},
}

@InProceedings{CLRS:11,
  title = 	 {Contextual Bandits with Linear Payoff Functions},
  author = 	 {Chu, Wei and Li, Lihong and Reyzin, Lev and Schapire, Robert},
  booktitle = 	 {Proceedings of the Fourteenth International Conference on Artificial Intelligence and Statistics},
  pages = 	 {208--214},
  year = 	 {2011},
  volume = 	 {15},
  series = 	 {Proceedings of Machine Learning Research},
  address = 	 {Fort Lauderdale, FL, USA},
  month = 	 {11--13 Apr},
  publisher =    {PMLR},
}

@article{HW:25,
  author  = {Huang, Chengpiao and Wang, Kaizheng},
  title   = {A Stability Principle for Learning Under Nonstationarity},
  journal = {Operations Research},
  year    = {2025},
  volume  = {73},
  number  = {6},
  pages   = {3044--3064},
}

@inproceedings{FCGS:10,
     author = {Filippi, Sarah and Cappe, Olivier and Garivier, Aur\'{e}lien and Szepesv\'{a}ri, Csaba},
     booktitle = {Advances in Neural Information Processing Systems},
     publisher = {Curran Associates, Inc.},
     title = {Parametric Bandits: The Generalized Linear Case},
     volume = {23},
     year = {2010}
}

@inbook{CK:25,
   title={A Primer on the Signature Method in Machine Learning},
   booktitle={Signature Methods in Finance},
   publisher={Springer Nature Switzerland},
   author={Chevyrev, Ilya and Kormilitzin, Andrey},
   year={2025},
   month={Jun}, 
    pages={3–64}
}

@article{KO:19,
  title   = {Kernels for Sequentially Ordered Data},
  author  = {Király, Franz J. and Oberhauser, Harald},
  journal = {Journal of Machine Learning Research},
  volume  = {20},
  number  = {31},
  pages   = {1--45},
  year    = {2019}
}

@article{BKPSL:19,
  title   = {Deep Signature Transforms},
  author  = {Bonnier, Patric and Kidger, Patrick and Perez Arribas, Imanol and Salvi, Cristopher and Lyons, Terry},
  journal = {arXiv preprint arXiv:1905.08494},
  year    = {2019}
}

@misc{GGJKL:24,
      title={Transportation Marketplace Rate Forecast Using Signature Transform}, 
      author={Haotian Gu and Xin Guo and Timothy L. Jacobs and Philip Kaminsky and Xinyu Li},
      year={2024},
      eprint={2401.04857},
      archivePrefix={arXiv},
}

@misc{JGCYPLBMN:25,
      title={Sig-DEG for Distillation: Making Diffusion Models Faster and Lighter}, 
      author={Lei Jiang and Wen Ge and Niels Cariou-Kotlarek and Mingxuan Yi and Po-Yu Chen and Lingyi Yang and Francois Buet-Golfouse and Gaurav Mittal and Hao Ni},
      year={2025},
      eprint={2508.16939},
      archivePrefix={arXiv},
}

@article{C:54,
    author = {Chen, Kuo-Tsai},
    title = {Iterated Integrals and Exponential Homomorphisms†},
    journal = {Proceedings of the London Mathematical Society},
    volume = {s3-4},
    number = {1},
    pages = {502-512},
    year = {1954},
    month = {01},
}

@article{C:57,
    author = {Kuo-Tsai Chen},
    journal = {Annals of Mathematics},
    number = {1},
    pages = {163--178},
    publisher = {[Annals of Mathematics, Trustees of Princeton University on Behalf of the Annals of Mathematics, Mathematics Department, Princeton University]},
    title = {Integration of Paths, Geometric Invariants and a Generalized Baker- Hausdorff Formula},
    urldate = {2026-01-20},
    volume = {65},
    year = {1957}
}

@inproceedings{CL:11,
 author = {Chapelle, Olivier and Li, Lihong},
 booktitle = {Advances in Neural Information Processing Systems},
 editor = {J. Shawe-Taylor and R. Zemel and P. Bartlett and F. Pereira and K.Q. Weinberger},
 publisher = {Curran Associates, Inc.},
 title = {An Empirical Evaluation of Thompson Sampling},
 volume = {24},
 year = {2011}
}

@InProceedings{AG:13,
  title = 	 {Thompson Sampling for Contextual Bandits with Linear Payoffs},
  author = 	 {Agrawal, Shipra and Goyal, Navin},
  booktitle = 	 {Proceedings of the 30th International Conference on Machine Learning},
  pages = 	 {127--135},
  year = 	 {2013},
  volume = 	 {28},
  number =       {3},
  series = 	 {Proceedings of Machine Learning Research},
  address = 	 {Atlanta, Georgia, USA},
  month = 	 {17--19 Jun},
  publisher =    {PMLR},
}

@techreport{KMS:25,
  title        = {The Nonstationary Newsvendor: Data-Driven Nonparametric Learning},
  author       = {Keskin, N. Bora and Min, Xu and Song, Jing-Sheng Jeannette},
  institution  = {SSRN},
  type         = {SSRN Working Paper},
  year         = {2025},
  month        = nov,
  doi          = {10.2139/ssrn.3866171}
}

@misc{ILD:04,
  author       = {Madden, Samuel and Bodik, Peter and Hong, Wei and Guestrin, Carlos and Paskin, Mark and Thibaux, Romain},
  title        = {Intel Lab Data},
  howpublished = {\url{https://db.csail.mit.edu/labdata/labdata.html}},
  year         = {2004},
}

@misc{A:18,
      title={Derivatives pricing using signature payoffs}, 
      author={Imanol Perez Arribas},
      year={2018},
      eprint={1809.09466},
      archivePrefix={arXiv},
      primaryClass={q-fin.CP},
      url={https://arxiv.org/abs/1809.09466}, 
}

@InProceedings{LPP:10,
  title = 	 {Contextual Multi-Armed Bandits},
  author = 	 {Lu, Tyler and Pal, David and Pal, Martin},
  booktitle = 	 {Proceedings of the Thirteenth International Conference on Artificial Intelligence and Statistics},
  pages = 	 {485--492},
  year = 	 {2010},
  editor = 	 {Teh, Yee Whye and Titterington, Mike},
  volume = 	 {9},
  series = 	 {Proceedings of Machine Learning Research},
  address = 	 {Chia Laguna Resort, Sardinia, Italy},
  month = 	 {13--15 May},
  publisher =    {PMLR},
  url = 	 {https://proceedings.mlr.press/v9/lu10a.html},
}

@article{CLP:20,
author = {Cohen, Maxime C. and Lobel, Ilan and Paes Leme, Renato},
title = {Feature-Based Dynamic Pricing},
journal = {Management Science},
volume = {66},
number = {11},
pages = {4921-4943},
year = {2020},
doi = {10.1287/mnsc.2019.3485},
URL = {https://doi.org/10.1287/mnsc.2019.3485},
eprint = {https://doi.org/10.1287/mnsc.2019.3485}
}

@article{ACZ:25,
author = {Alban, Andres and Chick, Stephen E. and Zoumpoulis, Spyros I.},
title = {Learning Personalized Treatment Strategies with Predictive and Prognostic Covariates in Adaptive Clinical Trials},
journal = {Management Science},
volume = {0},
number = {0},
pages = {null},
year = {2025},
doi = {10.1287/mnsc.2022.02048},
URL = {https://doi.org/10.1287/mnsc.2022.02048},
eprint = {https://doi.org/10.1287/mnsc.2022.02048}
}

@ARTICLE{KZTKO:00,
  author={Kemp, B. and Zwinderman, A.H. and Tuk, B. and Kamphuisen, H.A.C. and Oberye, J.J.L.},
  journal={IEEE Transactions on Biomedical Engineering}, 
  title={Analysis of a sleep-dependent neuronal feedback loop: the slow-wave microcontinuity of the EEG}, 
  year={2000},
  volume={47},
  number={9},
  pages={1185-1194},
  doi={10.1109/10.867928}}

@misc{NYCDOHM:26,
  author       = {{New York City Department of Health and Mental Hygiene}},
  title        = {{EpiQuery: Syndromic Surveillance Data -- Vomiting}},
  year         = {2026},
  howpublished = {\url{https://a816-health.nyc.gov/hdi/epiquery/visualizations?Indicator=Vomiting&PageType=tsi&PopulationSource=Syndromic&Subtopic=All&Topic=All}},
}

@INPROCEEDINGS{YJNL:16,
  author={Yang, Weixin and Jin, Lianwen and Hao Ni and Lyons, Terry},
  booktitle={2016 23rd International Conference on Pattern Recognition (ICPR)}, 
  title={Rotation-free online handwritten character recognition using dyadic path signature features, hanging normalization, and deep neural network}, 
  year={2016},
  pages={4083-4088},
  doi={10.1109/ICPR.2016.7900273}}

@misc{WNLH:2020,
      title={Signature features with the visibility transformation}, 
      author={Yue Wu and Hao Ni and Terence J. Lyons and Robin L. Hudson},
      year={2020},
      eprint={2004.04006},
      archivePrefix={arXiv},
      primaryClass={cs.LG},
      url={https://arxiv.org/abs/2004.04006}, 
}
}

\appendix









\end{document}